\documentclass[conference,compsoc]{IEEEtran}

\ifCLASSINFOpdf
\else
\fi

\usepackage{amsfonts}       
\usepackage{nicefrac}       
\usepackage{microtype}      
\usepackage{xcolor}         
\usepackage{bm}
\usepackage{mdframed}
\usepackage{algorithm, algorithmic}
\usepackage{amsfonts,amsmath}
\usepackage{subfig}
\usepackage{graphicx}
\usepackage{amsthm}
\usepackage{enumitem}
\usepackage{thm-restate}
\usepackage{hyperref}
\usepackage{comment} 

\usepackage{lipsum}

\usepackage{xcolor}
\usepackage{soul} 

\usepackage{environ}         
\usepackage{etoolbox}        
\usepackage{graphicx}        

\renewcommand{\paragraph}[1]{\medskip\noindent\textbf{#1.}}

\setlist[description]{leftmargin=\parindent,topsep=0ex,itemsep=0ex,partopsep=1ex,parsep=1ex}

\setitemize{noitemsep,topsep=2pt,parsep=2pt,partopsep=2pt,leftmargin=3ex}
\setenumerate{noitemsep,topsep=2pt,parsep=2pt,partopsep=2pt,leftmargin=4ex}

\usepackage[lambda,adversary,asymptotics,sets,landau,probability,operators]{cryptocode}
\usepackage{cleveref}

\let\OLDthebibliography\thebibliography
\renewcommand\thebibliography[1]{
  \OLDthebibliography{#1}
  \setlength{\parskip}{0pt}
  \setlength{\itemsep}{0pt plus 0.3ex}
}

\newtheorem{thm}{Theorem}[section]
\newtheorem{remark}{Remark}[section]
\newtheorem{assumption}{Assumption}[section]
\newtheorem{lemma}{Lemma}[section]
\newtheorem{definition}{Definition}[section]
\newtheorem{example}{Example}[section]
\newtheorem{cor}{Corollary}[section]

\hyphenation{op-tical net-works semi-conduc-tor}

\author{
    \IEEEauthorblockN{Hanshen Xiao, Jun Wan, and Srinivas Devadas}
    \IEEEauthorblockA{MIT
    \\\{hsxiao, junwan, devadas\}@mit.edu}
    
}

\begin{document}
\pagenumbering{arabic}

\title{Differentially Private Deep Learning with ModelMix}
\maketitle

\begin{abstract}
Training large neural networks with meaningful/usable differential privacy security guarantees is a demanding challenge. 
In this paper, we tackle this problem by revisiting the two key operations in Differentially Private Stochastic Gradient Descent (DP-SGD): 1) iterative perturbation and 2) gradient clipping.  
We propose a generic optimization framework, called {\em ModelMix}, which performs random aggregation of intermediate model states.
It strengthens the composite privacy analysis utilizing the entropy of the training trajectory
    and improves the $(\epsilon, \delta)$ DP security parameters by an order of magnitude. 

We provide rigorous analyses for both the utility guarantees and privacy amplification of ModelMix. 
In particular, we present a formal study on the effect of gradient clipping in DP-SGD, which provides theoretical instruction on how hyper-parameters should be selected. 
We also introduce a refined gradient clipping method, which can further sharpen the privacy loss in private learning when combined with ModelMix.  

{
Thorough experiments with significant privacy/utility improvement are presented to support our theory.
We train a Resnet-20 network on CIFAR10 with $70.4\%$ accuracy via ModelMix given $(\epsilon=8, \delta=10^{-5})$ DP-budget, compared to the same performance but with $(\epsilon=145.8,\delta=10^{-5})$ using regular DP-SGD; assisted with additional public low-dimensional gradient embedding, one can further improve the accuracy to $79.1\%$ with $(\epsilon=6.1, \delta=10^{-5})$ DP-budget, compared to the same performance but with $(\epsilon=111.2, \delta=10^{-5})$ without ModelMix. 
}
\end{abstract}

\begin{IEEEkeywords}
Differential Privacy; Rényi Differential Privacy;  Clipped Stochastic Gradient Descent; Deep Learning; 
\end{IEEEkeywords}

%
\IEEEpeerreviewmaketitle

\section{Introduction}
\vspace{-0.05 in}
\noindent Privacy concerns when learning with sensitive data are receiving increasing attention. 
Many practical attacks have shown that without proper protection, the  model's parameters \cite{modelinversion2015, modelinversion2017}, leakage on gradients during training \cite{deepleakage}, or just observations on the prediction results \cite{inference_leakage} may enable an adversary to successfully distinguish and even reconstruct the private samples used for learning. 
As an emergent canonical definition, Differential Privacy (DP) \cite{dwork2006a, dwork2006b} provides a semantic privacy metric to quantify how hard it is for an adversary to infer the participation of an individual in an aggregate statistic. 
As one of the most popular approaches, Differentially-Private Stochastic Gradient Descent (DP-SGD) \cite{dpsgd, FOCS2014} and its variants \cite{CCS2016, Wang2017,heavy_tail2020,heavy_tail2021,amplification_iteration,balle2020privacy} have been widely studied over the last decade. 
DP-SGD can be applied to almost all optimization problems in machine learning to produce rigorous DP guarantees without additional assumptions regarding the objective function or dataset. 
However, despite its broad applicability, DP-SGD also suffers notoriously large utility loss especially when training cutting-edge deep models. 
Its practical implementation is also known to be sensitive to hyper-parameter selections \cite{imagesetDP,tempered2021,clipSong2020}. Indeed, even in theory, the effects of the two artificial privatization operations applied in DP-SGD, {\em iterative gradient perturbation} and {\em gradient clipping}, are still not fully-understood.

To understand why these two artificial modifications are the key to differentially privatize iterative methods, we need to first introduce the concept of {\em sensitivity}, which plays a key role in DP. 
The sensitivity captures the maximum impact an individual sample from an input dataset may have on an algorithm's output. 
It is the foundation of almost all DP mechanisms, including the Laplace/Gaussian and Exponential Mechanisms \cite{Dwork-algorithimc}, where the sensitivity determines how much randomization is needed to to hide any individual amongst the population with desired privacy.
Unfortunately, in many practical optimization problems, the end-to-end sensitivity is intractable or can only be loosely bounded. 

To this end, DP-SGD proposes an alternative solution by assuming a more powerful adversary.
In most private (centralized) learning applications, the standard black-box adversary can only observe the final model revealed.
DP-SGD, on the other hand, assumes an adversary who can observe the intermediate updates during training. For convenience, we will call such an adversary a white-box adversary.
Provided such an empowered adversary, DP-SGD clips the gradient evaluated by each individual sample and adds random noises to the updates in each iteration.
Clipping guarantees that the sensitivity is bounded within each iteration.
The total privacy loss is then upper bounded by a composition of the leakage from all iterations. 

For convex optimization with Lipschitz continuity, where the norms of gradients are uniformly bounded by some given constant, DP-SGD is known to produce an asymptotically tight privacy-utility tradeoff \cite{FOCS2014}.
However, it is, in general, impractical to assume Lipschitz continuity in tasks such as deep learning.
Either asymptotically or non-asymptotically, the study of practical implementations of DP-SGD with more realistic and specific assumptions remains very active \cite{amplification_iteration,imagesetDP, clipSong2020,DanICLR2020} and demanding. 
Much research effort has been dedicated to tackling the following two fundamental questions. 
First, {\em provided that we do not need to publish the intermediate computation results, how conservative is the privacy claim offered by DP-SGD?}  
Second, during practical implementation, {\em how to properly select the training model and hyper-parameters?} 

Regarding the first question, many prior works \cite{inference_leakage, lowerboundSP2021, membership2018} tried to empirically simulate what the adversary can infer from models trained by DP-SGD.
In particular, \cite{lowerboundSP2021} examined the respective power of “black-box” and “white-box” adversaries, and suggested that a substantial gap between the DP-SGD privacy bound and the actual privacy guarantee  may exist. 
Unfortunately, beyond DP-SGD, there are few known ways to produce, let along improve, rigorous DP analysis for a training process.
Most existing analyses need to assume either access to additional public data \cite{amplification_iteration, pate1, pate2}, or strongly-convex loss functions to enable objective perturbation \cite{objective_perturb}.
Thus, for general applications, we still have to adopt the conservative DP-SGD analysis for the worst-case DP guarantees.

The second question is of particular interest to practitioners. 
The implementation of DP-SGD is tricky as the performance of DP-SGD is highly sensitive to the selection of the training model and hyper-parameters. 
The lack of theoretical analysis on gradient clipping makes it hard to find good parameters and optimize model architectures instructively, though many heuristic observations and optimizations on these choices are reported. 
\cite{tempered2021, DanICLR2020,papernotfit2019} showed how to find proper model architectures to balance learning capability and utility loss when the dataset is given.
\footnote{Most prior works report the best model and parameter selection by grid searching, where the private data is reused multiple times and the selection of parameters itself is actually sensitive. 
This additional privacy leakage, partially determined by the prior knowledge on the training data is in general very hard to quantify, though in practice it might be small \cite{DanICLR2020,privateselection}.}
Recent work \cite{adptive_clipping} also demonstrated empirical improvements through selecting the clipping threshold adaptively. 
However, even with these efforts, there is still a long way to go if we want to practically train large neural networks with rigorous and usable privacy guarantees.
The biggest bottlenecks include
\renewcommand\labelitemi{{\boldmath$\cdot$}}
\begin{itemize}
      \item the huge model dimension, which may be even larger than the size of the training dataset, while the magnitude of noise required for gradient perturbation is proportional to the square root of model dimension, and 
      \item the long convergence time, which implies a massive composition of privacy loss and also forces DP-SGD to add formidable noise resulting in intolerable accuracy loss.
\end{itemize}

To this end, Tramer and Boneh in \cite{DanICLR2020} argued that within the current framework, for most medium datasets ($<$ 500K datapoints) such as CIFAR10/100 and MNIST, the utility loss caused by DP-SGD will offset the powerful learning capacity offered by deep models.
Therefore, simple linear models usually outperform the modern Convolutional Neural Network (CNN) for these datasets. 
How to privately train a model while still being able to enjoy the state-of-the-art success in modern machine learning is one of the key problems in the application of DP. 

  \vspace{-0.05 in}
\subsection{Our Strategy and Results}
  \vspace{-0.05 in}
\noindent In this paper, we set out to provide a systematic study of DP-SGD from both theoretical and empirical perspectives to understand the two important but artificial operations: (1) iterative gradient perturbation and (2) gradient clipping.
We propose a generic technique, {\em ModelMix}, to significantly sharpen the utility-privacy tradeoff.
Our theoretical analysis also provides instruction on how to select the clipping parameter and quantify privacy amplification from other practical randomness.

We will stick to the worst-case DP guarantee without any relaxation, but view the private iterative optimization process from a different angle. 
In most practical deep learning tasks, with proper use of randomness, we will have a good chance of finding some reasonable (local) minimum via SGD regardless of the initialization (starting point) and the subsampling \cite{ge2015escaping}. 
In particular, for convex optimization, we are guaranteed to approach the global optimum with a proper step size. 
In other words, {\em there are an infinite number of potential training trajectories \footnote{We will use {\em training trajectory} in the following to represent the sequences of intermediate updates produced by SGD.} pointing to some (local) minimum of good generalization, and we are free to use any one of them to find a good model.} 
Thus, even without DP perturbation, the training trajectory has potential entropy if we are allowed to do random selection. 

From this standpoint, a slow convergence rate when training a large model is not always bad news for privacy. 
This might seem counter-intuitive. 
But, in general, slow convergence means that the intermediate updates wander around a relatively large domain for a longer time before entering a satisfactory neighborhood of (global/local) optimum. Training a larger model may produce a more fuzzy and complicated convergence process, which could compensate the larger privacy loss composition caused in DP-SGD. 
We have to stress that our ultimate goal is to privately publish a good model, while DP-SGD with exposed updates is merely a tool to find a trajectory with analyzable privacy leakage. 
The above observation inspires a way to find a better DP guarantee even under the conservative “whitebox” adversary model: \textit{can we utilize the potential entropy of the training trajectory while still bounding the sensitivity to produce rigorous DP guarantees?}

To be specific, different from standard DP-SGD which randomizes a particular trajectory with noise, we aim to privately construct an {\em envelope} of {\em training trajectories}, spanned by the many trajectories converging to some (global/local) minimum, and randomly generate one trajectory to amplify privacy. 
To achieve this, we must carefully consider the tradeoff between (1) controlling the worst-case sensitivity in the trajectory generalization and (2) the learning bias resultant from this approach. 
We summarize our contributions as follows. 
\begin{enumerate}[label=(\alph*), leftmargin=*]
    \item We present a generic optimization framework, called {\em ModelMix}, which iteratively builds an envelope of training trajectories through post-processing historical updates, and randomly aggregates those model states before applying gradient descent. 
    We provide rigorous convergence and privacy analysis for {\em ModelMix}, which enables us to quantify  $(\epsilon, \delta)$-DP budget of our protocol. 
    The refined privacy analysis framework proposed can also be used to capture the privacy amplification of a large class of {\em training-purpose-oriented} operations commonly used in deep learning.
    This class of operations include data augmentation \cite{dataaugment2019} and stochastic gradient Langevin dynamics (SGLD) \cite{SGLD2018}, which cannot produce reasonable worst-case DP guarantees by themselves. 
    
    \item We study the influence of gradient clipping in private optimization and present the first generic convergence rate analysis of clipped DP-SGD in deep learning. 
    To our best knowledge, this is the first analysis of non-convex optimization via clipped DP-SGD with only mild assumptions on the concentration of stochastic gradient. 
    We show that the key factor in clipped DP-SGD is the {\em sampling noise} \footnote{The noise corresponds to using a minibatch of samples to estimate the true full-batch gradient.} of the stochastic gradient. 
    We then demonstrate why implementation of DP-SGD by clipping individual sample gradients can be unstable and sensitive to the selection of hyper-parameters. Those analyses can be used to instruct how to select hyper-parameters and improve network architecture in deep learning with DP-SGD.   
    
    \item ModelMix is a fundamental improvement to DP-SGD, which can be applied to almost all applications together with other advances in DP-SGD, such as low-rank or low-dimensional gradient embedding \cite{embedyu2021,yu_public2020} and fine-tuning based transfer learning \cite{DanICLR2020,  CVPR2021} (if additional public data is provided).  
    In our experiments, we focus on computer vision tasks, a canonical domain for private deep learning. 
    We evaluate our methods on CIFAR-10, FMNIST and SVHN datasets using various neural network models and compare with the state-of-the-art results.
    Our approach improves the privacy/utility tradeoff significantly. For example, provided a privacy budget $(\epsilon=8,\delta=10^{-5})$, we are able to train Resnet20 on CIFAR10 with accuracy $70.4\%$ compared to $56.1\%$ when applying regular DP-SGD. 
    As for private transfer learning on CIFAR10, we can improve the $(\epsilon=2,\delta=10^{-5})$-DP guarantee in \cite{DanICLR2020} to $(\epsilon=0.64,\delta=10^{-5})$ producing the same $92.7\%$ accuracy. 

\end{enumerate}

The remainder of this paper is organized as follows. 
In Section \ref{sec:pre}, we introduce background on statistical learning, differential privacy and DP-SGD. 
In Section \ref{sec:modelmix}, we formally present the ModelMix framework, whose utility in both convex and non-convex optimizations is studied in Theorem \ref{thm:convergence} and Theorem \ref{thm:convergence_MM},  respectively. 
In Section \ref{sec: privacy}, we show how to efficiently compute the amplified $(\epsilon, \delta)$ DP security parameters in Theorem \ref{thm:RDP} and a non-asymptotic amplification analysis is given in Theorem \ref{thm:amplification}. 
Further experiments with detailed comparisons to the state-of-the-art works are included in Section \ref{sec:exp}.
Finally, we conclude and discuss future work in Section \ref{sec:conclusion}. 

  \vspace{-0.05 in}
\subsection{Related Works}
  \vspace{-0.05 in}
\noindent \textbf{Theoretical (Clipped) DP-SGD Analysis}: When DP-SGD was first proposed \cite{dpsgd}, and in most theoretical studies afterwards \cite{FOCS2014,Wang2017}, the objective loss function is assumed to be $L$-Lipschitz continuous, where the $L_2$ norm of the gradient is uniformly bounded by $L$.
This enables a straightforward privatization on SGD by simply perturbing the gradients. 
In particular, for convex optimization on a dataset of $n$ samples, a training loss of $\Theta({\sqrt{d\log(1/\delta)}}/{(\epsilon n)})$ is known to be tight under an $(\epsilon, \delta)$ DP guarantee \cite{FOCS2014}. 

However, it is hard to get a (tight) Lipschitz bound for general learning tasks. 
A practical version of DP-SGD was then presented in \cite{CCS2016}, where the Lipschitz assumption is replaced by gradient clipping to ensure bounded sensitivity.
This causes a disparity between the practice and the theory as classic results \cite{FOCS2014,Wang2017} assuming bounded gradients cannot be directly generalized to clipped DP-SGD. 
Some existing works tried to narrow this gap by providing new analysis.
\cite{clipZhang2019} presented a convergence analysis of smooth optimization with clipped SGD when the sampling noise in stochastic gradient is bounded. 
But \cite{clipZhang2019} requires the clipping threshold $c$ to be $\Omega(T)$ where $T$ is the total number of iterations.
This could be a strong requirement, as in practice, the iteration number $T$ can be much larger than the constant clipping threshold $c$ selected.
\cite{clipChen2020} relaxed the requirement with an assumption that the sampling noise is symmetric. \cite{clipSong2020} studied the special case where clipped DP-SGD is applied to generalized linear functions. 
In this paper, we give the first generic analysis of clipped DP-SGD with only mild assumptions on the concentration property of stochastic gradients. 

\noindent \textbf{Assistance with Additional Public Data}: 
When additional unrestricted (unlabeled) public data is available, an alternative model-agnostic approach is {\em Private Aggregation of Teacher Ensembles} (PATE) \cite{pate1, pate2,knn}. 
PATE builds a teacher-student framework, where private data is first split into multiple (usually hundreds) disjoint sets, and a teacher model is trained over each set separately. 
Then, one can apply those teacher models to privately label public data via a private majority voting.
Those privately labeled samples are then used to train a student model, as a postprocessing of labeled samples. 
Another line of works considers improving the noise added in DP-SGD with public data. 
For example, in private transfer learning, we can first pretrain a large model with public data and then apply DP-SGD with private data to fine-tune a small fraction of the model parameters \cite{DanICLR2020,CVPR2021}.
However, both PATE and private transfer learning have to assume a large amount of public data. Another idea considers the projection of the private gradient into a low-rank/dimensional subspace, approximated by public samples, to reduce the magnitude of noise \cite{embedyu2021,yu_public2020,zhou2020bypassing,asi2021private}. 
When the public samples are limited, DP-SGD with low-rank gradient representation usually outperforms the former methods. 

Except for PATE, our methods can be, in general, used to further enhance those state-of-the-art DP-SGD improvements with public data. 
For example, using 2K ImageNet public samples, the low-rank embedding method in \cite{yu_public2020} can train Resnet20 on CIFAR10 with $79.1\%$ accuracy at a cost of $(\epsilon=111.2,\delta=10^{-5})$-DP; while ModelMix can improve the DP guarantee to $(\epsilon=6.1,\delta=10^{-5})$-DP with the same accuracy as shown in Section \ref{sec:pub_data}.


\section{Preliminaries}
  \vspace{-0.05 in}
\label{sec:pre}
\noindent \textbf{Empirical Risk Minimization:} In statistical learning, the model to be trained is commonly represented by a parameterized function $f(w,x): (\mathcal{W},\mathcal{X}) \to \mathbb{R}$, mapping feature $x$ from input domain $\mathcal{X}$ into an output (prediction/ classification) domain. In the following, we will always use $d$ to represent the dimensionality of the parameter $w$, i.e., $w \in \mathbb{R}^d$. 
For example, one may consider $f(w,x)$ as a neural network with a sequence of linear layers connected by non-linear activation layers, and $w$ represents the weights to be trained. Given a set $\mathcal{D}$ of $n$ samples $\{(x_i, y_i), i=1,2,...,n\}$, we define the problem of Empirical Risk Minimization (ERM) for some loss function $l(\cdot,\cdot)$ as follows, 
\begin{equation}\small
\label{obg_loss_1}
   \min_{w} F(w) = \min_{w} \frac{1}{n} \cdot \sum_{i=1}^n l(f(w,x_i),y_i). 
\end{equation}
For convenience, we simply use $f(w, x_i, y_i)$ to denote the objective loss function $l(f(w,x_i),y_i)$ in the rest of the paper. 
Below, we formally introduce the definitions of \textit{Lipschitz continuity, smoothness} and \textit{convexity}, which are commonly used in optimization research. 

\begin{definition}[Lipschitz Continuity]
A function $g$ is $L$-Lipschitz if for all $w, w'\in\mathcal{W}$, $|g(w)-g(w')|\leq L\|w-w'\|_2$.
\end{definition}
\begin{definition}[Smoothness] A function $g$ is $\beta$-smooth on $\mathcal{W}$ if for all $w, w'\in \mathcal{W}$, $g(w')\leq g(w)+\langle \nabla g(w), w'-w \rangle+\frac{\beta}{2}\|w'-w\|_2^2$.
\end{definition}
\begin{definition}[Convexity] A function $g$ is convex on $\mathcal{W}$ if for all $w, w'\in \mathcal{W}$ and $t \in (0,1)$, $f(t w + (1-t) w')\leq t g(w)+ (1-t) g(w')$.
\end{definition}
\noindent
In the following, we will simply use $\|\cdot\|$ to denote the $l_2$ norm unless specified otherwise.

\noindent \textbf{Differential Privacy (DP):} We first formally define $(\epsilon, \delta)$-DP and $(\alpha, \epsilon)$-Rényi DP as follows.  \begin{definition}[Differential Privacy]
\label{def:DP}
Given a data universe $\mathcal{X}^*$, we say that two datasets $\mathcal{D},\mathcal{D}'\subseteq \mathcal{X}^*$ are neighbors, denoted as $\mathcal{D} \sim \mathcal{D}'$, if $\mathcal{D} = \mathcal{D}' \cup s$ or $\mathcal{D}' = \mathcal{D} \cup s$ for some additional datapoint $s$. A randomized algorithm $\mathcal{A}$ is said to be $(\epsilon,\delta)$-differentially private (DP) if for any pair of neighboring datasets $\mathcal{D},\mathcal{D}'$ and any event $S$ in the output space of $\mathcal{A}$, it holds that
	\begin{equation*}
	\mathbb{P}(\mathcal{A}(\mathcal{D})\in S)\leq e^{\epsilon}\cdot \mathbb{P}(\mathcal{A}(\mathcal{D}')\in S)+\delta.
	\end{equation*}
\end{definition}

\begin{definition}[Rényi Differential Privacy \cite{RenyiDP}]
\label{def:RDP}
A randomized algorithm $\mathcal{A}$ satisfies $(\alpha, \epsilon)$-Rényi Differential Privacy (RDP), $\alpha >1$, if for any pair of neighboring datasets $\mathcal{D} \sim \mathcal{D}'$, 
 \[ \small
 \epsilon \geq \mathsf{D}_{\alpha}(\mathcal{M}(\mathcal{D}) \| \mathcal{M}(\mathcal{D}') ).   
 \]
 Here, 
 \begin{equation}\small
 \label{Rényi-diver}
 \mathsf{D}_{\alpha}(\mathsf{P} \| \mathsf{Q}) = \frac{1}{\alpha-1} \log \int \mathsf{q}(o) (\frac{\mathsf{p}(o)}{\mathsf{q}(o)})^{\alpha} ~d o,
 \end{equation}
represents $\alpha$-Rényi Divergence between two distributions $\mathsf{P}$ and $\mathsf{Q}$ whose density functions are $\mathsf{p}$ and $\mathsf{q}$, respectively. 
\end{definition}

In Definition \ref{def:DP} and \ref{def:RDP}, if two neighboring datasets $\mathcal{D}$ and $\mathcal{D}'$ are defined in a form that $\mathcal{D}$ can be obtained by arbitrarily replacing an datapoint in $\mathcal{D}'$, then 
they become the definitions of bounded DP \cite{dwork2006a, dwork2006b} and RDP, respectivaly. In this paper, we adopt the unbounded DP version to match existing DP deep learning works \cite{CCS2016, DanICLR2020, yu_public2020} with a fair comparison. 

In practice, to achieve meaningful privacy guarantees, $\epsilon$ is usually selected as some small one-digit constant and $\delta$ is asymptotically $O(1/|\mathcal{D}|) = O(1/n)$. 
To randomize an algorithm, the most common approaches in DP are Gaussian or Laplace Mechanisms \cite{Dwork-algorithimc}, where a Gaussian or Laplace noise proportional to the sensitivity is added to perturb the algorithm's output. 
In many applications, including the DP-SGD analysis, we need to quantify the cumulative privacy loss across sequential queries of some differentially private mechanism on one dataset. The following theorem provides an upper bound on the overall privacy leakage.

\begin{thm}[Advanced Composition \cite{composition}]
\label{thm:composition}
For any $\epsilon>0$ and $\delta \in (0,1)$, the class of $(\epsilon, \delta)$-differentially private mechanisms satisfies $(\tilde{\epsilon}, T\delta + \tilde{\delta})$-differential privacy under $T$-fold adaptive composition for any $\tilde{\epsilon}$ and $\tilde{\delta}$ such that
\[
\tilde{\epsilon} = \sqrt{2T\log(1/\tilde{\delta})}\cdot\epsilon + T\epsilon(e^{\epsilon}-1).
\]
\end{thm}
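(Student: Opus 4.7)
The plan is to follow the Dwork--Rothblum--Vadhan approach: reduce $T$-fold adaptive composition to a concentration statement about the privacy loss random variable (PLRV), and then convert that concentration back into an $(\tilde\epsilon,\tilde\delta)$-DP guarantee. The two key ingredients are (i) a reduction from the $(\epsilon,\delta)$-DP case to the pure $(\epsilon,0)$-DP case via a coupling argument, which accounts for the additive $T\delta$ term, and (ii) a martingale concentration inequality applied to the sequence of per-round PLRVs, which accounts for the $\sqrt{2T\log(1/\tilde\delta)}\cdot\epsilon$ term.

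For step (i), use the standard structural fact that any $(\epsilon,\delta)$-DP mechanism $\mathcal{M}$ can be coupled with an $(\epsilon,0)$-DP mechanism $\mathcal{M}'$ so that $\mathcal{M}$ and $\mathcal{M}'$ disagree only on an event of probability at most $\delta$ on each side. Because the composition is adaptive, apply this coupling round by round conditional on the realized history and union-bound over $T$ rounds, so the ``bad'' event has total probability at most $T\delta$. On its complement, every mechanism in the composition may be treated as pure $(\epsilon,0)$-DP, which reduces the problem to the pure DP setting.

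For step (ii), define the per-round PLRV
\[
Z_t \;=\; \log\frac{p_t(o_t\mid o_{<t})}{q_t(o_t\mid o_{<t})},
\]
where $p_t,q_t$ are the conditional densities of the $t$-th output under the two neighboring datasets $\mathcal{D},\mathcal{D}'$. The $(\epsilon,0)$-DP assumption gives $|Z_t|\leq \epsilon$ almost surely, and a short extremal calculation on worst-case two-point distributions yields $\mathbb{E}[Z_t\mid o_{<t}]\leq \epsilon(e^{\epsilon}-1)$. Consequently $M_t := \sum_{s\leq t}\bigl(Z_s - \mathbb{E}[Z_s\mid o_{<s}]\bigr)$ is a bounded martingale with increments in $[-2\epsilon,2\epsilon]$, and Azuma--Hoeffding gives $\Pr\bigl[M_T > \sqrt{2T\log(1/\tilde\delta)}\cdot\epsilon\bigr]\leq \tilde\delta$. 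Combining with the expectation bound shows $\sum_{t=1}^T Z_t \leq T\epsilon(e^{\epsilon}-1) + \sqrt{2T\log(1/\tilde\delta)}\cdot\epsilon = \tilde\epsilon$ except on an event of probability at most $\tilde\delta$. The final conversion from a high-probability PLRV bound to an $(\tilde\epsilon,\tilde\delta)$-DP guarantee is a standard lemma (Dwork--Roth, Lemma 3.17).

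The main obstacle I anticipate is the clean interplay between steps (i) and (ii): one must verify that the round-by-round coupling preserves the martingale structure, so that conditioning on ``no bad event so far'' does not inflate the per-round PLRV bounds used in Azuma. The cleanest resolution is to absorb the bad-event indicators into the filtration and treat the two failure probabilities additively, which yields the final slack $T\delta + \tilde\delta$ exactly as stated. Tracking the constants $\epsilon(e^{\epsilon}-1)$ and $\sqrt{2T\log(1/\tilde\delta)}$ is then routine bookkeeping through the extremal two-point and Azuma calculations.
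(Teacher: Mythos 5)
The paper does not prove this statement: Theorem~\ref{thm:composition} is imported verbatim, with citation, from the advanced composition theorem of Dwork--Rothblum--Vadhan, and is used only as a black box in the privacy analysis. So there is no in-paper proof to compare against; your proposal is judged on its own terms, and in outline it correctly reconstructs the standard DRV/Dwork--Roth argument: reduce the $(\epsilon,\delta)$ case to pure DP at a cost of $T\delta$ in total variation, bound the conditional mean of the privacy loss random variable by $\epsilon(e^{\epsilon}-1)$ via the KL-divergence lemma for $\epsilon$-DP pairs, apply Azuma--Hoeffding to the centered sum, and convert the resulting tail bound on $\sum_t Z_t$ back to $(\tilde{\epsilon},\tilde{\delta})$-DP.

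Two points need tightening to actually land the stated constants. First, your Azuma step as written does not give $\sqrt{2T\log(1/\tilde{\delta})}\cdot\epsilon$: if you only use that the centered increments are bounded by $2\epsilon$ in absolute value (interval of length $4\epsilon$), the Hoeffding--Azuma tail is $\exp\bigl(-t^{2}/(8T\epsilon^{2})\bigr)$, which yields $\sqrt{8T\log(1/\tilde{\delta})}\cdot\epsilon$, a factor $2$ worse than the theorem. The fix is to use the interval form of Hoeffding's lemma: conditionally on the history, $Z_t\in[-\epsilon,\epsilon]$, so $Z_t-\mathbb{E}[Z_t\mid o_{<t}]$ lies in an interval of \emph{length} $2\epsilon$, giving the tail $\exp\bigl(-t^{2}/(2T\epsilon^{2})\bigr)$ and hence exactly the claimed $\tilde{\epsilon}$. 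Second, the reduction in step (i) is the genuinely delicate part of the adaptive case: the coupling of each round's $(\epsilon,\delta)$-DP mechanism with a pure $(\epsilon,0)$-DP surrogate must be constructed conditionally on the realized transcript, and one must argue that the composed surrogate mechanism simulates the true composition up to total variation $T\delta$ while the martingale bounds are applied to the surrogate; waving at a union bound over ``bad events'' is where the original-style arguments require care (this is precisely the step later treatments, e.g.\ Kairouz--Oh--Viswanath and Murtagh--Vadhan, handle with an explicit per-round decomposition of an $(\epsilon,\delta)$-DP pair into a pure-DP pair plus a $\delta$-mass component). Neither issue changes the architecture of your proof, but both must be done explicitly for the bound to come out as stated.
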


\begin{thm}[Advanced Composition via RDP \cite{RenyiDP}]
\label{thm:composition-RDP}
For any $\alpha>1$ and $\epsilon>0$, the class of $(\alpha, \epsilon)$-RDP mechanisms satisfies $(\tilde{\epsilon}, \tilde{\delta})$-differential privacy under $T$-fold adaptive composition for any $\tilde{\epsilon}$ and $\tilde{\delta}$ such that
\[
\tilde{\epsilon} = T\epsilon - \log(\tilde{\delta})/(\alpha-1).
\]
\end{thm}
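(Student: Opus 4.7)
The plan is to decompose the statement into two pieces: (i) $T$-fold adaptive composition of $(\alpha,\epsilon)$-RDP mechanisms yields an $(\alpha, T\epsilon)$-RDP mechanism; and (ii) any $(\alpha,\epsilon^\star)$-RDP mechanism is $(\epsilon^\star+\log(1/\tilde\delta)/(\alpha-1),\tilde\delta)$-DP for every $\tilde\delta\in(0,1)$. Chaining (i) with the hypothesis and (ii) with $\epsilon^\star=T\epsilon$ recovers the claimed identity $\tilde\epsilon=T\epsilon-\log(\tilde\delta)/(\alpha-1)$, since $\tilde\delta<1$ makes $-\log(\tilde\delta)/(\alpha-1)$ exactly $\log(1/\tilde\delta)/(\alpha-1)$.

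For (i), I would work directly from the definition of Rényi divergence. Let $\mathcal{A}_t$ denote the $t$-th (possibly adaptive) mechanism, and let $p_t(o_t\mid o_{<t})$ and $q_t(o_t\mid o_{<t})$ denote the conditional output densities when the input is $\mathcal D$ versus a neighboring $\mathcal D'$. The RDP hypothesis says that for every fixed history $o_{<t}$,
\[
\int q_t(o_t\mid o_{<t})\left(\frac{p_t(o_t\mid o_{<t})}{q_t(o_t\mid o_{<t})}\right)^{\alpha}\,do_t \;\le\; e^{(\alpha-1)\epsilon}.
\]
Factoring the joint densities on the two neighboring inputs as products of conditionals and integrating Fubini-style from $t=T$ inward, each integration pulls out a factor $e^{(\alpha-1)\epsilon}$ \emph{uniformly} in the remaining history. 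After $T$ iterations, the joint integral is bounded by $e^{(\alpha-1)T\epsilon}$, which yields $D_\alpha(p\|q)\le T\epsilon$.

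For (ii), I would apply Markov's inequality to the privacy-loss random variable $Z(o):=\log(p(o)/q(o))$. The moment identity
\[
\mathbb{E}_{o\sim p}\!\left[e^{(\alpha-1)Z(o)}\right]=\int p(o)^\alpha q(o)^{1-\alpha}\,do = e^{(\alpha-1)D_\alpha(p\|q)}\le e^{(\alpha-1)\epsilon^\star},
\]
combined with Markov's inequality, gives $\Pr_p[Z>\tilde\epsilon]\le e^{(\alpha-1)(\epsilon^\star-\tilde\epsilon)}$. Choosing $\tilde\epsilon=\epsilon^\star+\log(1/\tilde\delta)/(\alpha-1)$ sets the right-hand side equal to $\tilde\delta$. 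For any measurable event $S$ in the output space, splitting
\[
\Pr_p[S] = \Pr_p[S\cap\{Z\le\tilde\epsilon\}] + \Pr_p[S\cap\{Z>\tilde\epsilon\}] \;\le\; e^{\tilde\epsilon}\Pr_q[S]+\tilde\delta
\]
delivers exactly $(\tilde\epsilon,\tilde\delta)$-DP.

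The main obstacle is the adaptive composition step (i): the RDP bound $\epsilon$ is a bound on the Rényi divergence between the \emph{conditional} output distributions given the history, so one must verify that the iterated integrals really do telescope cleanly. The key is that the per-round bound holds uniformly over histories, so every innermost integration can be pulled out regardless of what the outer variables end up being; a naive "sum of conditional divergences" step that averages instead of taking a uniform factor would fail to give the right bound. Once (i) is in place, the RDP-to-DP conversion in (ii) is a routine exponential-tail argument.
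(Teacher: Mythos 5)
Your proposal is correct. The paper does not prove this theorem at all---it is quoted as a background result imported from \cite{RenyiDP}---and your two-step argument (telescoping the conditional, uniform-in-history Rényi bound across rounds to obtain $(\alpha, T\epsilon)$-RDP for the adaptive composition, then applying the Markov-tail bound to the privacy-loss random variable $Z=\log(p/q)$ and splitting any event on $\{Z\le\tilde\epsilon\}$ to convert to $\bigl(T\epsilon+\log(1/\tilde\delta)/(\alpha-1),\tilde\delta\bigr)$-DP) is precisely the standard proof of that cited result, matching Mironov's composition lemma together with the usual RDP-to-DP conversion.
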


Theorem \ref{thm:composition} provides a good characterization on how the privacy loss increases with composition.
For small $(\epsilon,\delta)$, we still have an $\tilde{O}(\sqrt{T}\epsilon, T\delta)$ DP guarantee after a $T$ composition.
In practice using RDP, Theorem \ref{thm:composition-RDP} usually produces tighter constants in the privacy bound.

\noindent \textbf{DP-SGD:} (Stochastic) Gradient Descent ((S)GD) is a very popular approach to optimize a function. 
Suppose we try to solve the ERM problem and minimize some function $F(w) = \frac{1}{n} \sum_{i=1}^n f(w,x_i, y_i)$. 
SGD can be described as the following iterative protocol. 
In the $(k+1)$-th iteration, we apply Poisson sampling, i.e., each datapoint is i.i.d. sampled by a constant rate $q$, and a minibatch of $B_{k}$ samples is produced from the dataset $\mathcal{D}$, denoted as $S_k$.
We calculate the stochastic gradient as
\begin{equation}
    \label{stochastic gradient}
    G_{k} \gets \sum_{(x_i, y_i) \in S_k} \nabla f(w_{k},x_i,y_i). 
\end{equation}
Then, a gradient descent update is applied using
\begin{equation}
    \label{SGD}
    w_{k+1} = w_{k} - \eta \cdot G_k,
\end{equation}
for some stepsize $\eta$. 
In particular, if the minibatch is selected to be the full batch, i.e., $S_k = \mathcal{D}$, then Equation (\ref{SGD}) becomes the standard gradient descent procedure.

We make the following assumption regarding the sampling noise $\| \nabla f(w,x,y) - \nabla F(w)\|$ when the minibatch size equals $1$.
This assumption, which will be shown to be necessary in Example \ref{ex:non-convergence},  will be used in Theorem \ref{thm:convergence_MM} when we derive the concrete convergence rate for clipped SGD.
\begin{assumption}[Stochastic Gradient of Sub-exponential Tail] 
\label{assp:subexp}
There exists some constant $\kappa>0$ such that for any $w$, if we randomly select a datapoint $(x,y)$ from $\mathcal{D}$, then 
\[\Pr(\| \nabla f(w,x,y) - \nabla F(w)\| \geq t) \leq e^{-t/\kappa}.\] 
\end{assumption} 

In Assumption \ref{assp:subexp}, a larger $\kappa$ implies stronger concentration, i.e., a faster decaying tail of the stochastic gradient. 
The modification from GD/SGD to its corresponding DP version is straightforward. 
When the loss function $f$ is assumed to be $L$-Lipschitz \cite{dpsgd, FOCS2014}, i.e., $\|\nabla f(w, x_i,y_i)\| \leq L$ for any $w$, the worst-case sensitivity in Equation (\ref{SGD}) is bounded by $\eta L$ in each iteration.
One can derive a tighter bound \cite{FOCS2014} using existing results on the privacy amplification from sampling \cite{subsampling, mironov2019r}.
Thus, SGD can be made private via iterative perturbation by replacing Equation (\ref{SGD}) with the following:
\begin{equation}
   \label{dpsgd_without_clipping}
    w_{k+1} = w_{k} - \eta\cdot (G_k + \Delta_{k+1}),
\end{equation}
where $\Delta_{k}$ is the noise for the $k$-th iteration.
For example, if we want to use the Gaussian Mechanism to ensure $(\epsilon, \delta)$-DP when running $T$ iterations, then $\Delta_{k+1}$ can be selected to be i.i.d. generated from 
\[
\Delta_{k+1} \leftarrow \mathcal{N}(\bm{0}, O({L^2T\log(1/\delta)\over \epsilon^2}) \cdot \bm{I}_d).
\]
Here, $\bm{I}_d$ represents the $d \times d$ identity matrix. 

However, when we do not have the Lipschitz assumption, an alternative is to force a limited sensitivity through gradient clipping. 
Following the same notations as before, we describe DP-SGD with per-sample gradient clipping \cite{CCS2016} as follows,
\begin{equation}
\begin{aligned}
    \label{dpsgd_with_cliping}
    & G_k \gets \sum_{(x_i, y_i) \in S_k} \mathsf{CP}\big(\nabla f(w_k,x_i,y_i), c\big);\\
    & w_{k+1} = w_{k} - \eta\cdot (G_k + \Delta_{k+1}).
\end{aligned}  
\end{equation}  
Here, $\mathsf{CP}(\cdot, c)$ represents a clipping function of threshold $c$,
$$\mathsf{CP}(\nabla f(w,x,y), c) = \nabla f(w,x,y) \cdot \min\{1, {c \over \|\nabla f(w,x,y) \|} \}.$$
With clipping, the $l_2$ norm of each per-sample gradient is bounded by $c$. 
Thus, the clipping threshold $c$ virtually plays the role of the Lipschitz constant $L$ in clipped SGD for privacy analysis.

\section{ModelMix}
\label{sec:modelmix}
\noindent In this section, we formally introduce ModelMix, and explain how it sharpens the utility-privacy trade-off in DP-SGD. 

\subsection{Intuition}
\noindent We begin with the following observation. 
Suppose we run SGD twice on a least square regression $F(w) = 1/n \cdot \sum_{i=1}^n \|\langle w, x_i\rangle -y_i\|^2$ for $T$ iterations and obtain two training trajectories $\bm{w}=(w_1,w_2,...,w_T)$ and $\bm{w}'=(w'_1,w'_2,...,w'_T)$. 
Suppose both $w_T$ and $w'_T$ are $\sigma$-close to the optimum $w^* = \arg \min_{w} F(w)$, i.e.,
\[
\|w_T-w^*\|\leq \sigma \text{~and~} \|w'_T-w^*\|\leq \sigma.
\]
Due to the linearity of gradients in least square regression, if we mix $\bm{w}$ and $\bm{w}'$ to get $w''_k = \alpha w_{k} + (1-\alpha) w'_{k}$ ($k=1,2,...,T$) for some weight $\alpha \in  (0,1)$, then we produce a new SGD trajectory $\bm{w}''$ where $w''_T$ is also $\sigma$-close to $w^*$. 

This simple example gives us two inspirations. 
First, as mentioned earlier, with different randomness in initialization and subsampling, the training trajectory to find an optimum is not unique.
Even in DP-SGD where we need to virtually publish the trajectory for analytical purpose, we are not restricted to expose a particular one. 
Second, and more important, this means that we have more freedom to randomize the SGD process.
In the $k$-th iteration, instead of following the regular SGD rule in Equation (\ref{SGD}) where we simply start from the previously updated state $w_{k-1}$, we can randomly mix $w_{k-1}$ with some other $w'_{k-1}$ from another reasonable training trajectory, and then {\em move to a new trajectory to proceed}. 

However, the above idea to randomly mix the trajectories cannot be directly implemented as we do not have any prior knowledge on what good trajectories look like.
Any training trajectory generated by the private dataset is sensitive and potentially creates privacy leakage. 
Thus, we need to be careful about how we generate the needed envelope.
Recall that we use \textit{envelope} to describe the space spanned by the mixtures of training trajectories. Provided the property that DP is immune to post-processing, we consider approximating the trajectory envelope using intermediate states already published. This allows us to privately construct the envelope and advance the optimization, simultaneously.


\subsection{Algorithm and Observations}
\noindent We start with a straw-man solution where we virtually run DP-SGD to alternately train two models in turns. 
We initialize two states $\tilde{w}^1_0$ and $\tilde{w}^2_0$ with respect to (w.r.t) the parameterized function we aim to optimize.

At any odd iteration, i.e., iteration $2k + 1$ ($k \ge 0$), we randomly generate $\bm{\alpha}_{2k+1} \in (0,1)^d$ whose coordinate is i.i.d. uniform in $(0,1)$.
We then update the state of the first model $\tilde{w}^1$ as
{\small{
\begin{equation}
\label{distributed-manner-1}
    \bar{w}^1_{k} \gets
    \bm{\alpha}_{2k+1} \circ \bar{w}^1_{k-1}  + (\bm{1}-\bm{\alpha}_{2k + 1}) \circ \bar{w}^2_{k-1} - \eta \big( \nabla F(\tilde{w}^1_{k-1})+\Delta_{2k+1} \big),
\end{equation}
}}
where $\circ$ represents the Hadamard product and $\Delta$ represents the noise added.
Essentially, we mix the two states $\bar{w}^1_{k-1}$ and $\bar{w}^2_{k-1}$ coordinate-wise. 
Similarly, at any even iteration, i.e., iteration $2(k+1)$, we randomly generate some $\bm{\alpha}_{2k+2}$ and update the state of the second model $\tilde{w}^2$ as
{\small{
\begin{equation}
\label{distributed-manner-2}
    \bar{w}^2_{k} \gets \bm{\alpha}_{2k+2} \circ \bar{w}^1_{k}  + (\bm{1}-\bm{\alpha}_{2k + 2}) \circ \bar{w}^2_{k-1} - \eta \big( \nabla F(\tilde{w}^2_{k-1})+\Delta_{2k+2} \big).
\end{equation}
}}
We take turns mixing and updating two training trajectories privately using (\ref{distributed-manner-1}) and (\ref{distributed-manner-2}).
At the high level, this is similar to a distributed GD, where two agents collaboratively train the model. 

However, in a centralized scenario, it is unnecessary to artificially create two models and  train them in a distributed manner, meaning that the efficiency of the above mixture method is not optimal. 
In the following, we propose an improved method where in the $k$-{th} iteration, we simply post-process on $w_{k-1}$ and $w_{k-2}$, the updates already privately generated from the last two iterations, instead of two virtual models to approximate an envelope of training trajectory. 
The formal description of ModelMix is shown in Algorithm \ref{alg: ModelMix}. We provide an illustration of how Algorithm \ref{alg: ModelMix} works in Fig. \ref{pic:mix}.

\begin{algorithm}
\caption{Differentially Private Stochastic Gradient Descent with ModelMix}
\begin{algorithmic}[1]
\STATE \textbf{Input:} Objective function $F(w)= {1\over n} \cdot \sum_{i=1}^n f(w, x_i, y_i)$, dataset $\mathcal{D}=\{(x_i,y_i), i=1,2,...,n\}$, sampling rate $q$, step size $\eta$, clipping threshold $c$, total number of iterations $T$, mixing thresholds $\tau_{[1:T]}$, initialized model states $w_{-1}, w_{0} \in \mathbb{R}^d$ and noise sequence $\Delta_{[1:T]}$.
\FOR{$k=1,2,...,T$}
   \STATE Through i.i.d. sampling with a constant rate $q$, produce a minibatch $S_{k}$ of $B_k$ samples from $\mathcal{D}$ and calculate the stochastic gradient \[G_{k-1} =  \sum_{(x_i,y_i) \in S_k} \mathsf{CP}\big(\nabla f(w_{k-1},x_i,y_i), c \big).\] 
\FOR{$j=1,2,...,d$}
    \STATE $\bm{\alpha}_k(j)\leftarrow \mathcal{U}[0,1]$, a uniform distribution in [0,1].
    \IF{$|{w}_{k-1}(j) - {w}_{k-2}(j)| < \tau_k $}
        \STATE ${w}_{k-1}(j) \gets {w}_{k-1}(j)+ \text{sign}({w}_{k-1}(j)-{w}_{k-2}(j)) \cdot \tau_k/2$;
        \STATE ${w}_{k-2}(j) \gets {w}_{k-2}(j)- \text{sign}({w}_{k-1}(j)-{w}_{k-2}(j)) \cdot \tau_k/2$.
    \ENDIF 
    \STATE Update the weight as follows:
    \begin{equation}
    \label{main_update}
    \begin{aligned}
        w_{k}(j) 
         =&   
        \bm{\alpha}_k(j)\cdot{w}_{k-1}(j)  + (1-\bm{\alpha}_k(j))\cdot {w}_{k-2}(j) \\
        & -\eta\cdot (G_{k-1}(j)+\Delta_k(j)).
    \end{aligned}
    \end{equation} 
\ENDFOR   
\ENDFOR
\STATE \textbf{Output}: $w_{T}$. 

\end{algorithmic}
\label{alg: ModelMix}
\end{algorithm}

\begin{figure}
\includegraphics[scale = 0.45]{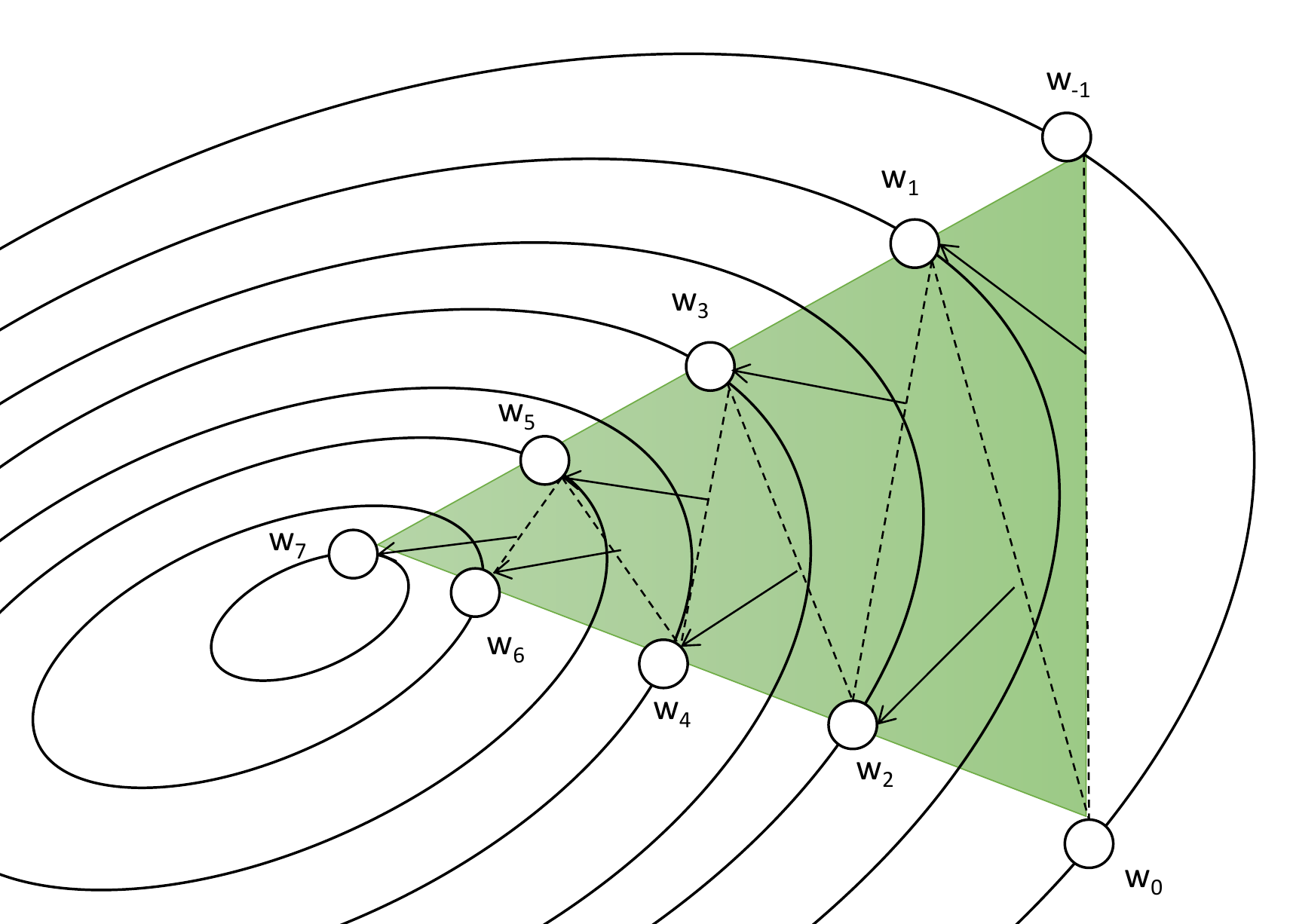}
\caption{An illustration of how ModelMix works.
Each node represents an intermediate state.
The dashed lines represent the mixing and the arrows represent gradient descent. The green area represents the approximated trajectory envelope.}
\label{pic:mix}
\end{figure}

From a more concise optimization standpoint, the mixed state in expectation is the average of the last two iterations' states $(w_{k-1}+w_{k-2})/2$.
Therefore, in the expectation of Equation (\ref{main_update}), we are essentially optimizing the original objective function $F(w)$ plus a proximal term in a form 
\begin{equation}
    \label{proximal}
    F(w) + \|w - w_{k-2} \|^2/{4},
\end{equation}
whose gradient at $w_{k-1}$ is $\nabla F(w_{k-1}) + (w_{k-1}-w_{k-2})/2$. 
{Thus, ModelMix can also be viewed as introducing a randomized proximal term into the original objective function $F(w)$. ModelMix operation $\bm{\alpha}_{k}w_{k-1}+ (\bm{1}-\bm{\alpha}_{k-1})w_{k-2}$ averages out the noise added in the previous iterations, which makes the convergence more stable. This enables us to apply a larger step size during training, as shown later in Fig.  {\ref{fig_modelmix_per}}}.

As will be shown later in Theorem \ref{thm:amplification}, the privacy amplification is determined by the distance between $w_{k-1}$ and $w_{k-2}$.
Therefore, for a privacy analysis purpose, in steps 5-8 of Algorithm \ref{alg: ModelMix}, we artificially ensure that the coordinate-wise distance between the states $w_{k-1}$ and $w_{k-2}$ is at least some parameter $\tau_k$. 
In practice, the gap between the two states $w_{k-1}$ and $w_{k-2}$ already exists even without these operations, meaning that ModelMix naturally enjoys a privacy amplification.
We only add those steps to enforce a worst-case lower bound on the coordinate-wise distance so that we can quantify the privacy amplification in a clean way.

{A natural question that follows is \textit{how do the parameters $\tau_{[{1:T]}}$ across $T$ iterations affect privacy and utility}?}
$\tau_k$ captures the distance between the two training trajectories, or in other words, the volume of the approximated envelope.
Ideally, we would like $\tau_{[{1:T]}}$ to match the size of the true envelope so that the impact of those artificial steps is minimized.
Empirically, a harder learning task often has a slower convergence rate, and thus also has an envelope of larger volume.
This allows us to select a larger $\tau_k$ and provide stronger privacy amplification. 
On the other hand, if the learning task is simple and we overestimate the size of the envelope with a large $\tau_k$, then the learning rate might be compromised.
This analysis is supported by Fig. \ref{fig_modelmix_per}, where we test the efficiency of ModelMix in two different tasks with various setups. 

In Fig. \ref{fig_modelmix_per} (a) and (b), we implement non-private SGD with or without ModelMix to train Resnet20 \cite{resnet} on CIFAR10 \footnote{https://www.cs.toronto.edu/~kriz/cifar.html} and SVHN \footnote{http://ufldl.stanford.edu/housenumbers/}, the two benchmark datasets. 
We set the sampling rate $q$ to be $0.05$ and run for 2,000 iterations.
Compared to CIFAR10, classification on SVHN is an easier task.
We fix the mixing threshold $\tau_k = \tau$ to be proportional to the step size $\eta$ in gradient descent (Equation (\ref{main_update})), specifically, $0.1\eta$ and $0.05\eta$. 

In both Fig. \ref{fig_modelmix_per} (a) and (b), different setups perform similarly in the initial stage (the first 400 iterations). 
However, in later epochs, as the training trajectories approach the (local) optimum, setups with larger $\tau$ suffer heavier losses. 
This phenomenon is clearer in Fig. \ref{fig_modelmix_per} (b) which trains on the easier SVHN task. 
This matches our previous analysis: the size of the trajectory envelope gets smaller when (1) the learning task is easier or (2) we approach the optimum. 
On the other hand, with proper selection of $\tau=0.05\eta$, in the non-private case ModelMix comes with almost no additional utility loss.

In Fig. \ref{fig_modelmix_per} (c) and (d), we implement the private case where we will see how ModelMix strengthens the robustness of DP-SGD. 
We clip the per-sample gradient norm down to $8$ and add the same amount of Gaussian noise to both DP-SGD cases, with or without ModelMix.
This ensures an $(\epsilon=8, \delta=10^{-5})$ guarantee for regular DP-SGD. 
{\em As shown later in Theorem \ref{thm:amplification}, ModelMix can achieve a much better DP guarantee under the same setup. 
But here, we focus on the performance of ModelMix given the same noise as regular DP-SGD to provide a clear comparison.}
With the virtual proximal term in Equation (\ref{proximal}), ModelMix allows us to use a larger stepsize resulting in a faster learning rate.
In  Fig. \ref{fig_modelmix_per} (c) and (d), we set the stepsize $\eta=0.2$ for ModelMix, and compare to the regular DP-SGD with $\eta=0.1$ (black line) and $\eta=0.2$ (red line). The larger stepsize $\eta=0.2$ worsens the performance of regular DP-SGD, whereas with proper selection of $\tau=0.05\eta$ (green line), ModelMix allows the application of larger stepsize and slightly out-performs regular DP-SGD. 
\begin{figure}[h]
  \centering
  \includegraphics[width=0.49\linewidth]{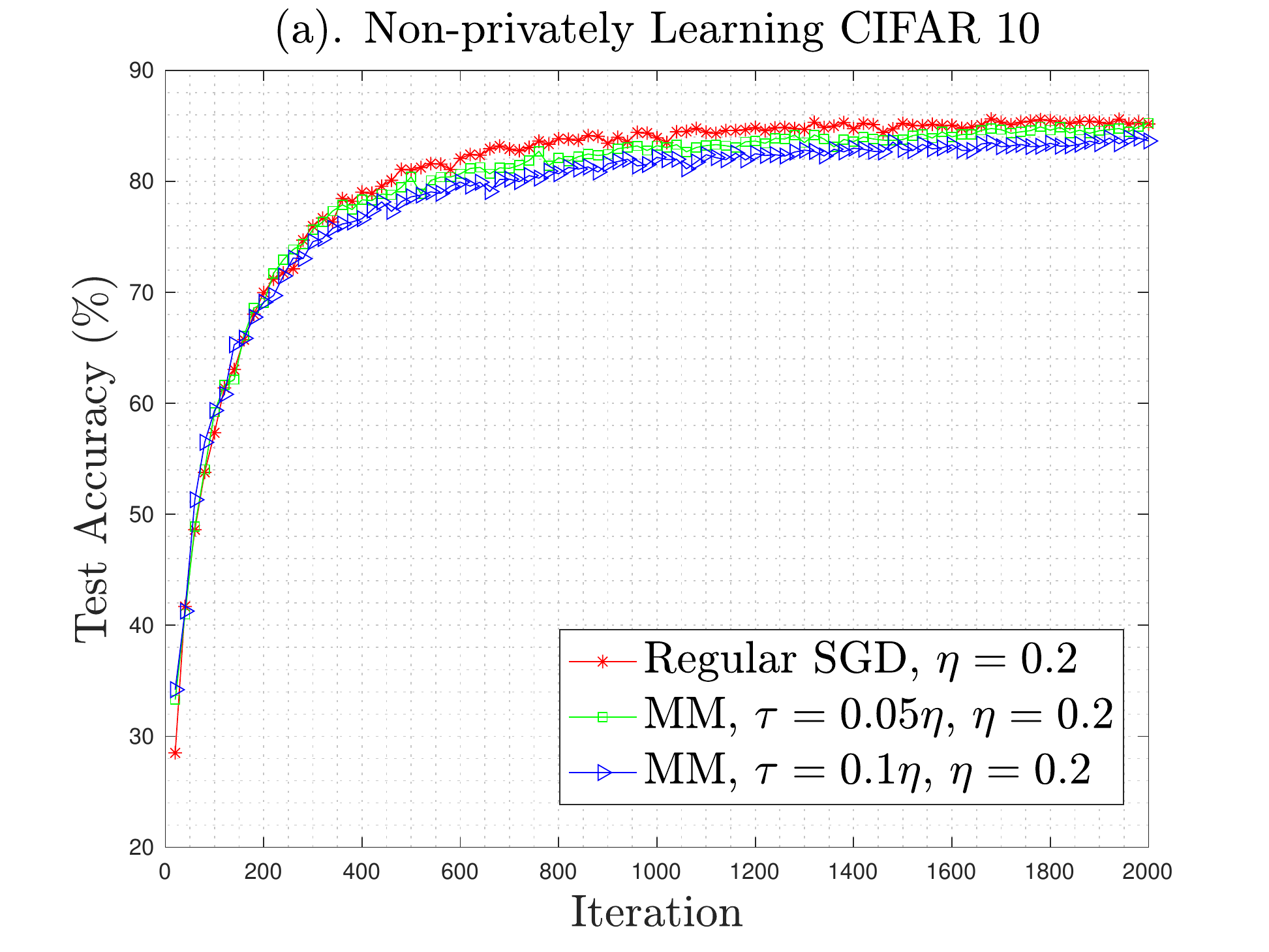}
  \includegraphics[width=0.49\linewidth]{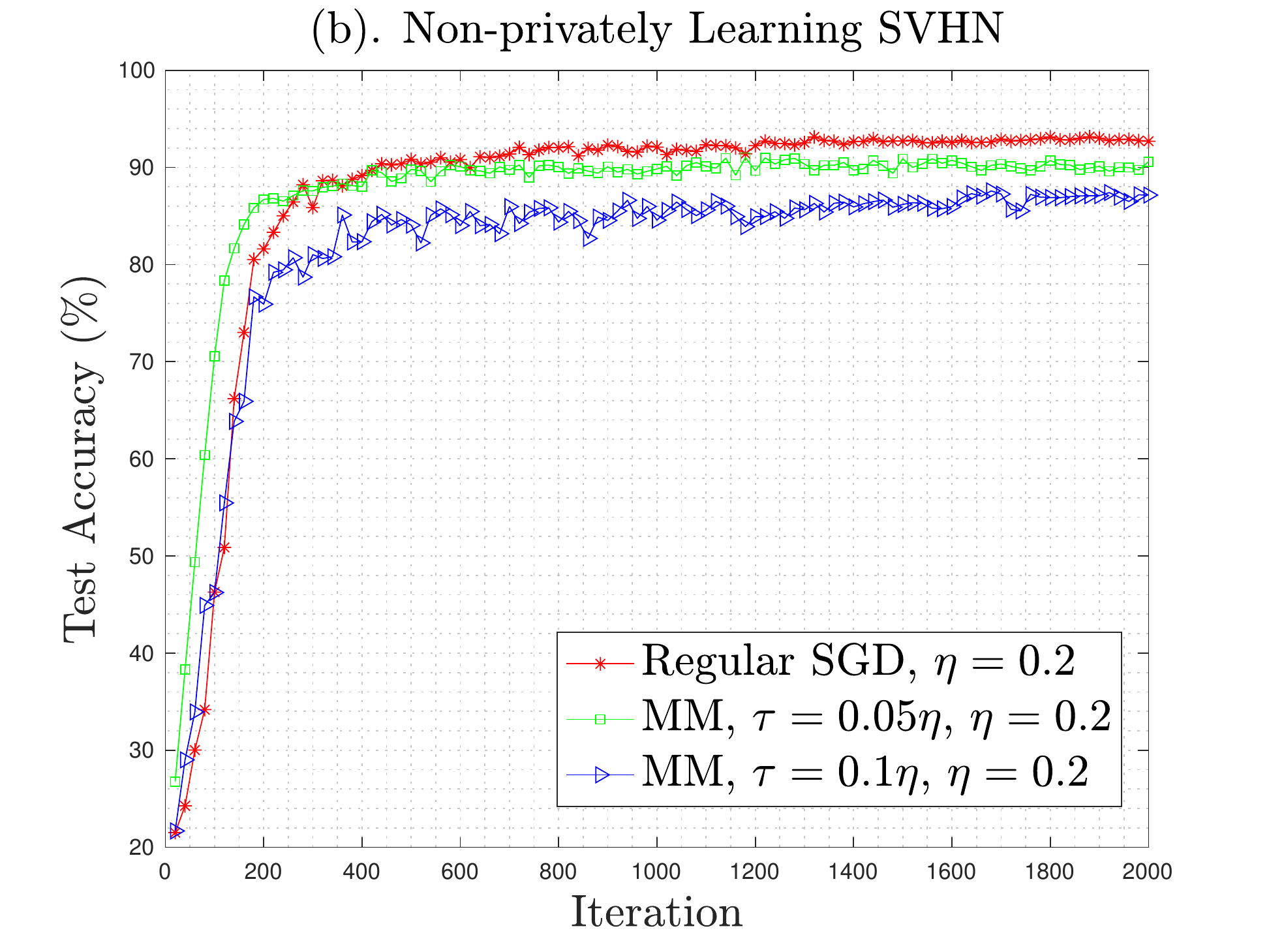}
   \includegraphics[width=0.49\linewidth]{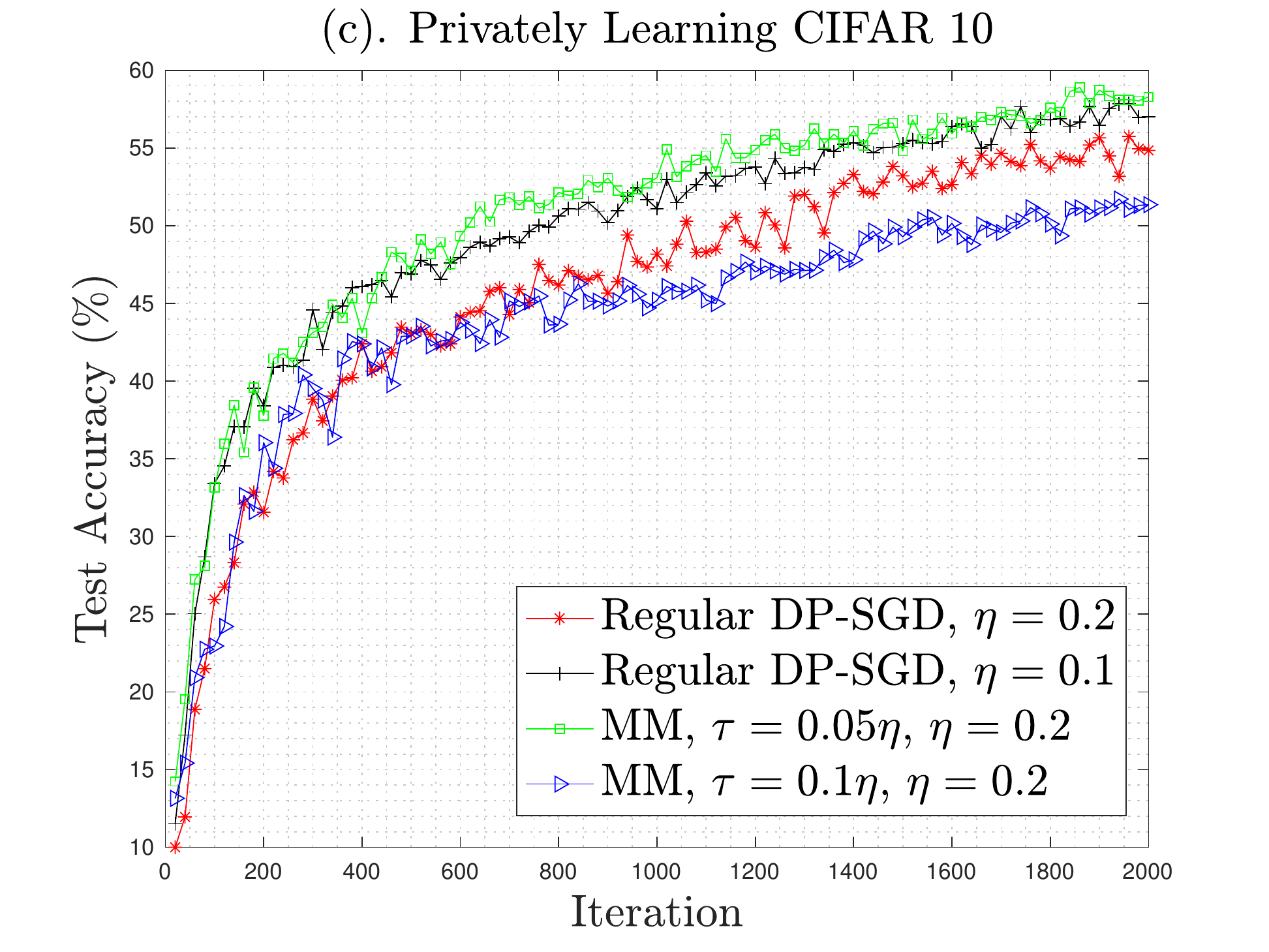}
  \includegraphics[width=0.49\linewidth]{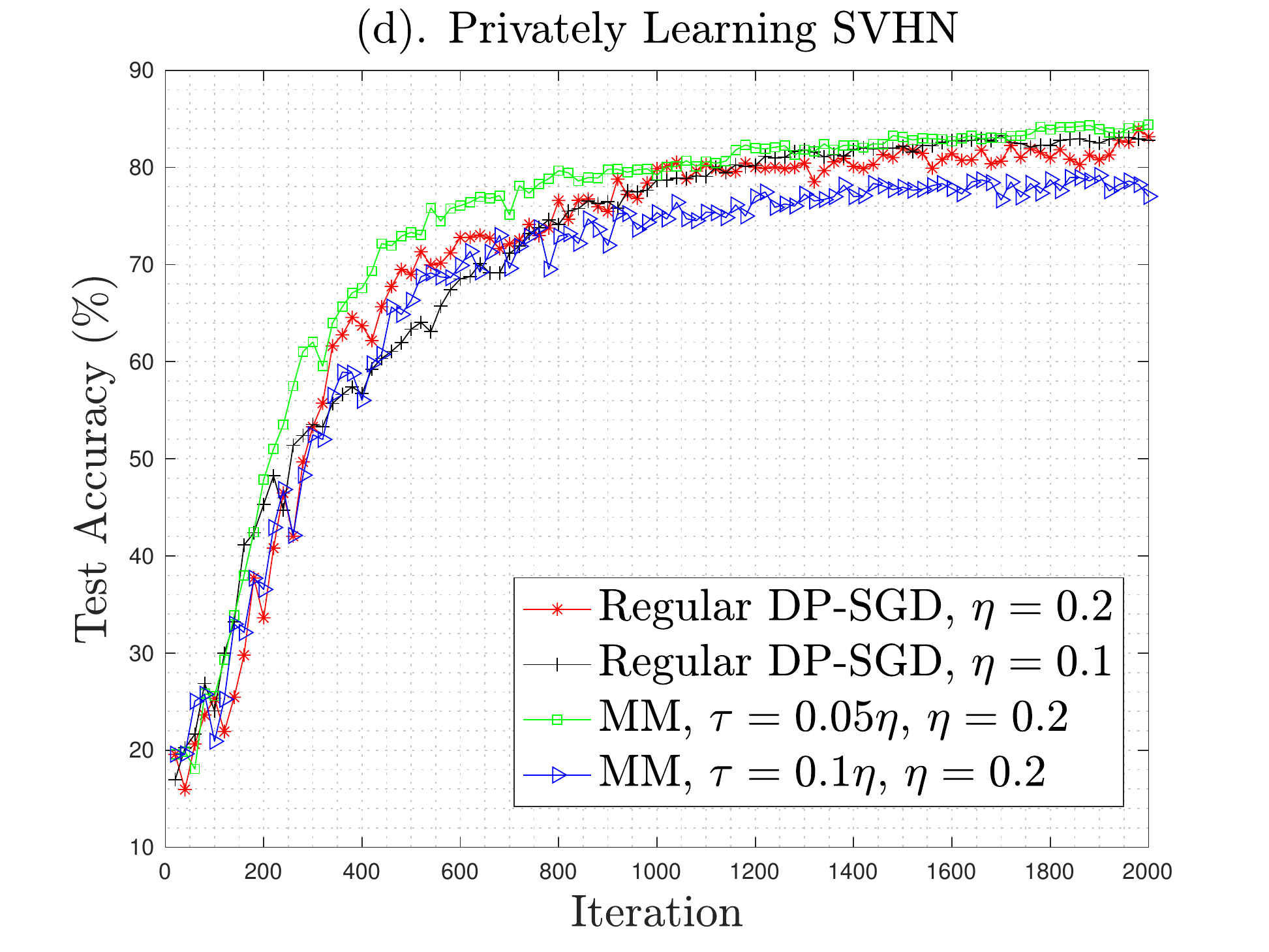} 
\caption{Performance comparison between DP-SGD with/without ModelMix (MM) provided with the same noise.}
  \vspace{-0.15 in}
\label{fig_modelmix_per}
\end{figure}

\subsection{Utility Guarantee in (non)-convex Optimization and Effect of Gradient Clipping}
\vspace{-0.1 in}
\noindent In this subsection, we will provide formal utility analysis of Algorithm \ref{alg: ModelMix}. 
As a warm-up, in Theorem \ref{thm:convergence}, we start with convex optimization with a Lipschitz and smooth assumption, and show Algorithm \ref{alg: ModelMix} enjoys an $1/\sqrt{T}$ convergence rate. 
Specifically, after $T$ iterations, if we set the average
\[
\bar{w} = \frac{\sum_{k=1}^T (w_{k-1}+w_{k-2})}{2T},
\]
where $w_{k}$ represents the weight in the $k$-{th} iteration, then the utility loss $F(\bar{w}) - F(w^*)$ is bounded by $O(1 / \sqrt{T})$. Based on Theorem \ref{thm:convergence}, we then move to the non-convex case without a Lipschitz assumption in Theorem \ref{thm:convergence_MM}, and show a generic convergence rate. 

\begin{thm}[Utility of ModelMix in Convex Optimization]
\label{thm:convergence}
Suppose $f(w,x,y)$ is $L$-Lipschitz and $\beta$-smooth convex.
We set the clipping threshold $c = L$. 
For any $\gamma>0$, if we set $\eta = \gamma /(nq\sqrt{T})$, then Algorithm \ref{alg: ModelMix} satisfies 
{\small{
\begin{equation}
\begin{aligned}
&
\mathbb{E} \big[ F(\bar{w}) - F(w^*) \big] \\
& \leq \frac{3\mathcal{W}^2_0+ \sum_{k=1}^Td \tau^2_{i}/12}{2\gamma \sqrt{T}} + \frac{\gamma(L^2/q^2+\mathbb{E}[\|\Delta\|^2]/(n^2q^2))}{\sqrt{T}} + \beta \cdot \\
& \big( \frac{12\mathcal{W}^2_0}{8T} + \frac{2\gamma \mathcal{W}_0(L+\mathbb{E}[\|\Delta\|]/n)}{q T^{3/2}} + \frac{11\gamma^2(L^2+\mathbb{E}[\|\Delta\|^2]/n^2)}{8Tq^2} \big) \\
& = O(\frac{\mathcal{W}^2_0+ \sum_{k=1}^Td\tau^2_{k}}{\gamma \sqrt{T}} + \frac{\gamma(L^2+\mathbb{E}[\|\Delta\|^2]/n^2)}{q^2\sqrt{T}})
\end{aligned}
\label{convex_convergence}
\end{equation}
}}\noindent
where  $\mathcal{W}_0 = \sup_{w}\|w-w^*\|$ denotes the initial divergence and $\mathbb{E}[\| \Delta\|^2] = \mathbb{E}[\| \Delta_{k}\|^2]$ denotes the variance of noise added in each iteration.  
\end{thm}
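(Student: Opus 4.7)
The plan is to carry out a classical convex-SGD style ``distance to optimum'' analysis, modified to accommodate the coordinate-wise random mixing and the forced distance enforcement in steps 5--9. First I would introduce auxiliary notation: denote by $\tilde{w}_{k-1}^{\mathrm{new}}$ and $\tilde{w}_{k-2}^{\mathrm{new}}$ the states after the (at most) $\tau_k/2$ per-coordinate shift, and write $\tilde{v}_{k-1} := \bm{\alpha}_k\circ\tilde{w}_{k-1}^{\mathrm{new}}+(\bm{1}-\bm{\alpha}_k)\circ\tilde{w}_{k-2}^{\mathrm{new}}$ for the mixed state, so that (\ref{main_update}) reads $w_k = \tilde{v}_{k-1}-\eta(G_{k-1}+\Delta_k)$. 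Because $c=L$ and $f$ is $L$-Lipschitz, clipping is inactive, hence under Poisson sampling $\mathbb{E}[G_{k-1}\mid w_{k-1}]=nq\,\nabla F(w_{k-1})$ with $\|G_{k-1}\|\le B_kL\le nL$; the uniform $\bm{\alpha}_k$ further give $\mathbb{E}_{\bm{\alpha}_k}\tilde{v}_{k-1}=(\tilde{w}_{k-1}^{\mathrm{new}}+\tilde{w}_{k-2}^{\mathrm{new}})/2$ and $\mathrm{Var}_{\bm{\alpha}_k}(\tilde{v}_{k-1})=\tfrac{1}{12}\|\tilde{w}_{k-1}^{\mathrm{new}}-\tilde{w}_{k-2}^{\mathrm{new}}\|^2$.

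Next I would expand the identity
\[
\|w_k-w^*\|^2 = \|\tilde{v}_{k-1}-w^*\|^2 -2\eta\langle\tilde{v}_{k-1}-w^*, G_{k-1}+\Delta_k\rangle + \eta^2\|G_{k-1}+\Delta_k\|^2
\]
and take conditional expectations over $\bm{\alpha}_k$, the subsampling, and the noise $\Delta_k$. Using the exact identity $\mathbb{E}_{\bm{\alpha}_k}\|\tilde{v}_{k-1}-w^*\|^2 = \tfrac{1}{3}\big(\|\tilde{w}_{k-1}^{\mathrm{new}}-w^*\|^2 + \langle\tilde{w}_{k-1}^{\mathrm{new}}-w^*,\tilde{w}_{k-2}^{\mathrm{new}}-w^*\rangle+\|\tilde{w}_{k-2}^{\mathrm{new}}-w^*\|^2\big)$, which is upper bounded by $\tfrac12(\|\tilde{w}_{k-1}^{\mathrm{new}}-w^*\|^2+\|\tilde{w}_{k-2}^{\mathrm{new}}-w^*\|^2)$, yields a double-index recursion that telescopes once summed. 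The enforcement shift can be absorbed into the $\sum_{k}d\tau_k^2/12$ piece via triangle inequality. The quadratic term is bounded crudely by $\eta^2(n^2L^2+\mathbb{E}\|\Delta\|^2)$ using $B_k\le n$; after substituting $\eta=\gamma/(nq\sqrt{T})$ this delivers the middle $\gamma(L^2/q^2+\mathbb{E}\|\Delta\|^2/(n^2q^2))/\sqrt{T}$ term of (\ref{convex_convergence}).

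The crux is the cross term. Using $\mathbb{E}[G_{k-1}]=nq\,\nabla F(w_{k-1})$ it becomes $nq\langle(\tilde{w}_{k-1}^{\mathrm{new}}+\tilde{w}_{k-2}^{\mathrm{new}})/2-w^*,\nabla F(w_{k-1})\rangle$, which I would split as
\[
\tfrac12\langle w_{k-1}-w^*,\nabla F(w_{k-1})\rangle + \tfrac12\langle w_{k-2}-w^*,\nabla F(w_{k-2})\rangle + \tfrac12\langle w_{k-2}-w^*,\nabla F(w_{k-1})-\nabla F(w_{k-2})\rangle
\]
plus an $O(\sqrt{d}\tau_k L)$ residual from the enforcement shift. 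Convexity lower-bounds the first two inner products by $F(w_{k-1})-F(w^*)$ and $F(w_{k-2})-F(w^*)$; $\beta$-smoothness controls the third by $\beta\,\|w_{k-2}-w^*\|\cdot\|w_{k-1}-w_{k-2}\|$. Bounding $\|w_{k-2}-w^*\|\le \mathcal{W}_0$ and expanding $\|w_{k-1}-w_{k-2}\|$ from the previous update equation by $O(\eta(nqL+\|\Delta_{k-1}\|))$ provides precisely the $\beta$-weighted lower-order terms listed in (\ref{convex_convergence}).

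Finally I would telescope for $k=1,\ldots,T$. The weighted pair $(\|w_{k-1}-w^*\|^2,\|w_{k-2}-w^*\|^2)$ telescopes up to boundary contributions absorbed by the factor $3$ in front of $\mathcal{W}_0^2$ (three copies of $w_0,w_{-1}$ leak from the convex combination and triangle inequalities), and the left-hand side accumulates $\sum_k[(F(w_{k-1})+F(w_{k-2}))/2-F(w^*)]$. Jensen's inequality and convexity of $F$ then upper bound $F(\bar{w})-F(w^*)$ by this average with $\bar{w}=\sum_k(w_{k-1}+w_{k-2})/(2T)$, and dividing through by $2\eta nqT=2\gamma\sqrt{T}$ delivers (\ref{convex_convergence}). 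The main obstacle I expect is the bookkeeping of the $\beta$-smoothness residuals: one must simultaneously track the enforcement drift, the recursive dependence of $\|w_{k-1}-w_{k-2}\|$ on $\|w_{k-2}-w_{k-3}\|$, and the mixing variance $\|w_{k-1}-w_{k-2}\|^2/12$, and verify that all three can be absorbed into the stated $\beta\cdot O(T^{-1})+\beta\cdot O(T^{-3/2})$ corrections without feeding back into the recursion in a way that would inflate the leading $\gamma/\sqrt{T}$ rate.
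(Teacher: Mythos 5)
Your overall route coincides with the paper's: expand $\|w_k-w^*\|^2$ around the mixed state, use that the uniform mixing has conditional mean $(w_{k-1}+w_{k-2})/2$ and per-coordinate variance $(w_{k-1}(j)-w_{k-2}(j))^2/12$, bound $\|G_{k-1}\|\le nL$ because clipping is inactive at $c=L$, telescope the paired distances, apply Jensen to $\bar w$, and control $\|w_{k-1}-w_{k-2}\|$ through the update recursion. The step that does not deliver the stated bound is your handling of the cross term. You split $\langle (w_{k-1}+w_{k-2})/2-w^*,\nabla F(w_{k-1})\rangle$ so that the leftover is $\tfrac12\langle w_{k-2}-w^*,\nabla F(w_{k-1})-\nabla F(w_{k-2})\rangle$, which you bound by $\tfrac{\beta}{2}\mathcal{W}_0\|w_{k-1}-w_{k-2}\|$. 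This residual is \emph{first order} in the step difference: in expectation $\|w_{k-1}-w_{k-2}\|$ is of order $\eta(nL+\mathbb{E}\|\Delta\|)$, so with $\eta=\gamma/(nq\sqrt T)$ your per-iteration residual averages to a term of order $\beta\mathcal{W}_0\gamma(L+\mathbb{E}\|\Delta\|/n)/(q\sqrt T)$ in the final bound, i.e., a $\beta$-dependent contribution at the leading $1/\sqrt T$ scale. The theorem confines all $\beta$-dependence to the $O(1/T)$ and $O(1/T^{3/2})$ corrections, so this cannot be absorbed as you hope in your closing paragraph; your decomposition proves only a strictly weaker inequality than (\ref{convex_convergence}). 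The paper instead bounds the offset term directly by smoothness, $\langle \tfrac{w_{k-2}-w_{k-1}}{2},\nabla F(w_{k-1})\rangle \ge \tfrac12\big(F(w_{k-2})-F(w_{k-1})\big)-\tfrac{\beta}{4}\|w_{k-1}-w_{k-2}\|^2$, making the residual \emph{quadratic} in the difference; combining this with the recursion bound for $\mathbb{E}\|w_{k-1}-w_{k-2}\|^2$ (a geometrically decaying $\|w_0-w_{-1}\|^2$ term plus $O\big(\eta^2(n^2L^2+\mathbb{E}\|\Delta\|^2)\big)$) produces exactly the $\beta$-weighted $O(1/T)$ and $O(1/T^{3/2})$ terms of the theorem. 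Replacing your three-way split by this one-line smoothness bound repairs the argument; the rest of your plan then goes through as in the paper.

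A smaller correction: the $\pm\tau_k/2$ enforcement in steps 5--9 of Algorithm \ref{alg: ModelMix} moves $w_{k-1}(j)$ and $w_{k-2}(j)$ apart \emph{symmetrically}, so the midpoint is unchanged and there is no $O(\sqrt d\,\tau_k L)$ residual in the cross term as you posit. The enforcement enters only through the mixture's second moment, as a zero-mean perturbation with variance at most $d\tau_k^2/12$, which is precisely how the $\sum_k d\tau_k^2/12$ contribution in (\ref{convex_convergence}) arises.
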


\begin{proof}
See Appendix \ref{app:pr_convergence}.
\end{proof}

From Equation (\ref{convex_convergence}), one can see that with ModelMix, Algorithm \ref{alg: ModelMix} still enjoys an $O({1}/{\sqrt{T}})$ convergence rate when $\tau_k = O(\eta) = O(1/\sqrt{T})$. 
Using Theorem \ref{thm:convergence}, we can upper bound the utility loss of Algorithm \ref{alg: ModelMix}. Since ModelMix can be seen as post-processing operations on DP-SGD, its privacy guarantee is at least as good as the privacy guarantee of DP-SGD.
If we set $\Delta_{k}$ to be Gaussian noise generated from $\mathcal{N}(0,O(q^2L^2T\log(1/\delta)/\epsilon^2) \cdot \bm{I}_{d} )$ \cite{CCS2016}, then by Theorem \ref{thm:convergence}, Algorithm \ref{alg: ModelMix} satisfies an $(\epsilon, \delta)$-DP guarantee and its utility loss is upper bounded by $O(\sqrt{d}\log(1/\delta)L/(n\epsilon))$.
This asymptotically matches the classic results in \cite{FOCS2014}. However, we will present a more fine-tuned analysis in Section \ref{sec: privacy} to show the randomness in ModelMix can significantly sharpen the composite privacy loss. 

In the following, we no longer assume the objective loss function to be either convex or Lipschitz continuous, and we try to understand the effect of gradient clipping. 
The following theorem gives a generic convergence analysis of ModelMix in non-convex optimization with clipped gradient. 

\begin{thm}[Utility of ModelMix in non-Convex Optimization with Gradient Clipping]
\label{thm:convergence_MM}
Suppose the objective loss function $F(w)$ is $\beta$-smooth and satisfies Assumption \ref{assp:subexp}, then there exists some constant $\psi > 0$ such that when the clipping threshold $c$ satisfies
\begin{equation}
\label{selection-of-c}
c \geq \max \{4\kappa\log(10), -\psi\kappa\log(\kappa)\log(\frac{\sqrt{d \log(1/\delta)}}{n\epsilon})\},
\end{equation}
then the convergence rate of Algorithm \ref{alg: ModelMix}, which applies per-sample gradient clipping up to $c$ and enjoys an $(\epsilon, \delta)$-DP guarantee, satisfies
{\small{
\begin{equation}
\begin{aligned}
\label{batch_main_1}
     &  ~~~~~\mathbb{E} \Big[ \frac{\sum_{k=0}^{T-1}  \min\big\{ 9/20 \cdot  \|\nabla F(w_k)\|^2, c/20 \cdot \| \nabla F(w_k)\| \big \}}{T}\Big]  \\
       & \leq (\frac{v}{2} +\frac{5}{2})\frac{c\sqrt{\mathcal{R}_{F}\frac{101}{12}\beta d \log(1/\delta)}}{n\epsilon} + \frac{28c\beta{d\log(1/\delta)}\tilde{\mathcal{W}}_0 }{12q (n\epsilon)^2} \\
    &  ~~~~+  \frac{c d\log(1/\delta)\sqrt{\frac{101}{12}}\beta^{3/2}}{qn\epsilon\sqrt{\mathcal{R}_{F}}} \big(\frac{\sum_{k=1}^T d\tau^2_{k}}{12} + \frac{21\tilde{\mathcal{W}}_0^2}{24}\big),
\end{aligned}
\end{equation}
}}\noindent
where $\mathcal{R}_{F} = \sup_{w}F(w)- \inf_{w} F(w)$, $v$ is some constant determined by the noise mechanism and $\tilde{\mathcal{W}}_0 = \|w_0-w_{-1}\|$ is the initial divergence. When $\tau_{k} = O(\eta)$, the right hand of  {(\ref{batch_main_1})} is $ O(\frac{c\sqrt{d \log(1/\delta)}}{n\epsilon})$.
\end{thm}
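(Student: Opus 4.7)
The plan is to follow the standard descent-lemma template for smooth non-convex optimization, but carefully adapted to the ModelMix update and the clipped stochastic gradient. The anchor is $\beta$-smoothness, which gives
\[
F(w_{k+1}) \le F(w_k) + \langle \nabla F(w_k), w_{k+1}-w_k\rangle + \frac{\beta}{2}\|w_{k+1}-w_k\|^2.
\]
From (\ref{main_update}), I would write $w_{k+1}-w_k$ as the sum of (i) a mixing drift $(\bm{\alpha}_{k+1}-\bm{1})\circ w_k + (\bm{1}-\bm{\alpha}_{k+1})\circ w_{k-1}$, and (ii) the clipped gradient step $-\eta(G_k+\Delta_{k+1})$. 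Taking expectations with respect to the uniform $\bm{\alpha}_{k+1}$, the drift has mean $(w_{k-1}-w_k)/2$ with variance controlled coordinatewise by $\tau_{k+1}^2/12$ after the artificial separation in Steps 6--9. This yields a cross-term in $\langle \nabla F(w_k), w_{k-1}-w_k\rangle$ that I intend to absorb by telescoping with $\|w_k-w_{k-1}\|^2$, mirroring how one handles proximal-type inertia in (\ref{proximal}).

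The heart of the argument is controlling the bias of the clipped stochastic gradient. Using Assumption \ref{assp:subexp} together with the batch size $B_k\approx qn$, I would show that when $c\ge 4\kappa\log(10)$, the per-sample clipping event $\{\|\nabla f(w,x,y)\|>c\}$ has probability at most $e^{-c/\kappa}\le 10^{-4}$ on the high-gradient mass, so that
\[
\bigl\|\mathbb{E}[\mathsf{CP}(\nabla f(w,x,y),c)] - \nabla F(w)\bigr\|
\]
is small whenever $\|\nabla F(w)\|\le c/2$, while in the regime $\|\nabla F(w)\|>c/2$ we can only hope for a guarantee in $\|\nabla F(w)\|$ (not $\|\nabla F(w)\|^2$), because clipping caps the step-length. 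This dichotomy is precisely what produces the $\min\{(9/20)\|\nabla F(w_k)\|^2,\,(c/20)\|\nabla F(w_k)\|\}$ on the left-hand side of (\ref{batch_main_1}). Combining with the descent inequality, I get, in expectation,
\[
\mathbb{E}[F(w_{k+1})-F(w_k)] \le -\eta\,\mathrm{progress}_k + \frac{\beta\eta^2}{2}\bigl(c^2 + \mathbb{E}\|\Delta\|^2/(nq)^2\bigr) + \beta\,\mathrm{drift}_k,
\]
where $\mathrm{progress}_k$ is the claimed $\min$-expression (up to constants) and $\mathrm{drift}_k$ collects the $\tau_k^2 d/12$ and $\|w_{k-1}-w_k\|^2$ contributions.

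Next I would telescope the inequality from $k=0$ to $T-1$, upper-bound the left telescoping by $\mathcal{R}_F=\sup F-\inf F$, absorb $\|w_k-w_{k-1}\|^2$ terms using a standard one-sided recurrence against the $-\eta\,\mathrm{progress}_k$ side, and divide by $\eta T$. The initial divergence $\tilde{\mathcal{W}}_0=\|w_0-w_{-1}\|$ and the sum $\sum_k d\tau_k^2/12$ appear exactly because the mixing drift starts from this gap. To obtain the announced privacy-dependent rate I would plug in the Gaussian noise scale required for $(\epsilon,\delta)$-DP via subsampled RDP and Theorem \ref{thm:composition-RDP}: $\mathbb{E}\|\Delta\|^2 = \Theta(c^2 d T \log(1/\delta)/\epsilon^2)$, and then choose $\eta = \Theta\bigl(\sqrt{\mathcal{R}_F/(\beta d T \log(1/\delta))}\cdot n\epsilon/c\bigr)$ to balance the descent and noise terms; this is what produces the dominant $c\sqrt{\mathcal{R}_F\beta d\log(1/\delta)}/(n\epsilon)$ factor and the lower-order $\tilde{\mathcal{W}}_0$ and $\tau_k$ contributions.

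The hard part is the clipping-bias analysis under only the sub-exponential tail assumption: unlike \cite{clipChen2020}, we do not assume symmetry, so one cannot cancel the bias by a parity argument, and unlike \cite{clipZhang2019}, we insist on $c=O(\log T)$ rather than $c=\Omega(T)$. To get through this I expect to need the second lower bound on $c$ in (\ref{selection-of-c}), namely $c\ge -\psi\kappa\log(\kappa)\log(\sqrt{d\log(1/\delta)}/(n\epsilon))$, which is exactly the logarithmic slack that makes the tail mass $e^{-c/\kappa}$ smaller than the noise level $\sqrt{d\log(1/\delta)}/(n\epsilon)$; this is what allows the bias to be absorbed into the $O(c\sqrt{d\log(1/\delta)}/(n\epsilon))$ rate rather than appearing as a non-vanishing floor. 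The remaining bookkeeping — handling the per-coordinate nature of ModelMix, the variance of $\bm{\alpha}_k$, and the minibatch averaging $1/(nq)^2$ scaling — is routine once the bias and drift terms are correctly isolated.
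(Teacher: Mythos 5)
Your plan follows the paper's own route essentially step for step: the descent lemma with the ModelMix drift treated as mean $(w_{k-1}-w_k)/2$ plus coordinatewise variance $\tau_k^2/12$, the cross term folded back via smoothness together with a recursion on $\|w_k-w_{k-1}\|^2$ (which is where $\tilde{\mathcal{W}}_0$ enters), the two-regime clipping-bias lemma under the sub-exponential tail that produces the $\min\bigl\{\tfrac{9}{20}\|\nabla F(w_k)\|^2,\tfrac{c}{20}\|\nabla F(w_k)\|\bigr\}$ progress with the two conditions on $c$ playing exactly the roles you describe, and finally Gaussian calibration with $\eta$ and $T$ balanced. One caution: the step size you quote, $\eta=\Theta\bigl(\sqrt{\mathcal{R}_F/(\beta dT\log(1/\delta))}\cdot n\epsilon/c\bigr)$, is off by roughly a $\sqrt{T}$ factor from the choice that actually balances your own descent and noise terms (the paper takes $\eta=\Theta\bigl(q\sqrt{\mathcal{R}_F d\log(1/\delta)}/(cn\epsilon\sqrt{\beta})\bigr)$ with $T=(n\epsilon)^2/(d\log(1/\delta))$), but the balancing principle you state is the correct one and fixes this immediately.
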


\begin{proof}
See Appendix \ref{app:pr_convergence_MM}. 
\end{proof}

In Theorem \ref{thm:convergence_MM}, we do not assume the objective function $F(w)$ to be convex or Lipschitz, but only smooth with concentrated stochastic gradients  (Assumption {\ref{assp:subexp}}).\footnote{{In the proof of Theorem {\ref{thm:convergence_MM}}, essentially we only need a high-probability bound of the sampling noise in the stochastic gradient, which is also necessary (see Example {\ref{ex:non-convergence}}). Therefore, one can relax Assumption {\ref{assp:subexp}} of a subexponential tail to any proper concentration assumption that allows the derivation of a high probability bound of the sampling noise.
}}
The utility loss is measured by the norm of the gradient $\|\nabla F(w_k)\|$ in Equation (\ref{batch_main_1}), commonly considered in non-convex optimization \cite{Wang2017, clipZhang2019}. 
There are several interesting observations from Theorem \ref{thm:convergence_MM}. 
First, in clipped DP-SGD, we should not simply consider the clipping threshold $c$ as a virtual Lipschitz constant. 
When the gradient norm $\|\nabla F(w_k)\|$ is large, Equation (\ref{batch_main_1}) suggests that $${\sum_{k=0}^{T-1}\|\nabla F(w_k)\|} = O(T \cdot \frac{\sqrt{\log(1/\delta)d}}{n\epsilon}),$$
which is independent of the clipping threshold $c$ as long as $c$ satisfies Equation (\ref{selection-of-c}). 
Therefore, when the objective function is hard to optimize, or when the gradient is large during the initial epochs, we should select a large clipping threshold to minimize the effect of clipping and let the state $w_k$ approach some neighborhood domain of a (local) minimum fast. 
When $\|\nabla F(w_k)\|$ becomes small, $c$ will reappear in the utility bound.
In this case, the equation becomes similar to the classic DP-SGD privacy-utility tradeoff \cite{FOCS2014} but replacing the Lipschitz constant $L$ by $c$, where 
$$\frac{\sum_{k=0}^{T-1}\|\nabla F(w_k)\|}{T} = O(\frac{c\sqrt{\log(1/\delta)d}}{n\epsilon}).$$ 
Therefore, we want to select some properly small $c$.

The second and also more important issue worth mentioning is the restriction on $c$ in Equation (\ref{selection-of-c}). 
We require the norm of the sampling noise in the stochastic gradient to be smaller than the clipping threshold $c$ with sufficient probability. 
{\em Indeed, this is actually a necessary condition for clipped DP-SGD to work. } 
In the following example, we show that even without any noise perturbation, clipped SGD could fail when $c$ is much smaller than the magnitude of sampling noise. 

\begin{example}
\label{ex:non-convergence}
Suppose the loss function is $f(w, x) = (w - x)^2 / 2$ and we have three samples $x_1=-20,x_2=-10$ and $x_3=90$.
The overall loss function is thus $F(w)= 1/3 \cdot \sum_{i=1}^3 (w-x_i)^2/2$.
Thus, $\nabla F(w)=w-20$ and the minimum is achieved when $w=20$. 
Moreover, the {standard deviation}  of the per-sample stochastic gradient, where we randomly sample $x'\leftarrow \{x_1, x_2, x_3\}$ and compute the gradient of $f(w, x')$, is $49.7$ for any $w$.  

We will show that if the clipping threshold is relatively small, for example, $c=1$, then the clipped gradient is dramatically different from the true gradient $\nabla F(w)$.
We first consider the case when the magnitude of true gradient is small, say at the optimum $w=20$ where $\nabla F(20)=0$. 
At $w=20$, the expectation of the clipped stochastic gradient is $1/3$ rather than $0$. 
On the other hand, when the magnitude of the true gradient is large, say $w=0$ and $\nabla F(0)=-20$, the expectation of the clipped stochastic gradient is still $1/3$.
But now it is in the opposite direction of the true gradient. 
\end{example}
From Example \ref{ex:non-convergence}, we know that if $c$ is not properly selected, clipped SGD could fail to converge regardless of the magnitude of the full-batch gradient. As a summary, Theorem \ref{thm:convergence_MM} and Example \ref{ex:non-convergence} suggest the following two key observations on clipped DP-SGD:
\begin{enumerate}
\item On hyper-parameter selection, to ensure stable convergence in general, we need the clipping threshold $c$ to be sufficiently larger than the sampling noise of stochastic gradient. 
\item On the other hand, to improve the performance of clipped DP-SGD, one promising direction is to reduce the sampling noise via network architecture optimization and data normalization. Indeed, many approaches supporting this goal are proposed and studied in deep learning research, for example, batchnorm layer \cite{batchnorm}, where the gradient is evaluated and normalized by a group of samples. We defer a systematical study on improvement via gradient variance reduction to our future work.
\end{enumerate}

\section{Privacy Guarantees and Amplifications}
\label{sec: privacy}
\subsection{Numerical Calculation and Asymptotic Amplification}
\noindent In this section, we first show how to numerically compute the $(\epsilon, \delta)$ bound via RDP and asymptotically analyze the amplification of ModelMix. 
We use Gaussian Mechanism to produce the noise $\Delta_k \sim \mathcal{N}(0 , \sigma^2 \cdot \bm{I}_{d})$ across iterations.
The following Theorem \ref{thm:RDP}  shows an efficiently-calculable upper bound of $(\epsilon, \delta)$ by measuring an $\alpha$-Rényi divergence between two one-dimensional distributions. 
For simplicity, we fix $\tau_k = \tau$ in the following. 

\begin{thm}[{Privacy Calculation of ModelMix via RDP}]
\label{thm:RDP} 
Suppose $\Delta_k$ is Gaussian noise generated from $\mathcal{N}(0 , \sigma^2 \cdot \bm{I}_{d})$. 
We define two distributions $\mathsf{P}_0$ and $\mathsf{P}_1$,
\[
\mathsf{P}_0 = \mathcal{N}(0,\sigma)*\mathcal{U}[-\tau/(2\eta),\tau/(2\eta)],
\]
\[
\mathsf{P}_1 = \mathcal{N}(c,\sigma)*\mathcal{U}[-\tau/(2\eta),\tau/(2\eta)],
\]
where $*$ represents the convolution of two probability distributions.
In other words, sampling from $\mathsf{P}_0$ is equivalent to sampling from Gaussian distribution $\mathcal{N}(0,\sigma)$ and uniform distribution $\mathcal{U}[-\tau/(2\eta),\tau/(2\eta)]$ independently, then summing the results together.
Similarly, $\mathsf{P}_1$ is the convolution of $\mathcal{N}(c,\sigma)$ and $\mathcal{U}[-\tau/(2\eta),\tau/(2\eta)]$.

Algorithm \ref{alg: ModelMix} for $T$ iterations with sampling rate $q$ satisfies $(\epsilon, \delta)$ DP for any $\epsilon$ and $\delta$ such that
$$\epsilon \leq \min_{\alpha \in \mathbb{Z}, \alpha>1} T\mathsf{D}_{\alpha}\big( (1-q)\mathsf{P}_0 + q \mathsf{P}_1 \| \mathsf{P}_0 \big) + {\log(1/\delta)\over (\alpha-1)}.$$
\end{thm}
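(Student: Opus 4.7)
The plan is to establish a per-iteration RDP bound equal to $\mathsf{D}_\alpha((1-q)\mathsf{P}_0 + q\mathsf{P}_1 \| \mathsf{P}_0)$, then invoke the RDP composition theorem (Theorem \ref{thm:composition-RDP}) over $T$ iterations, and finally apply the standard RDP-to-$(\epsilon,\delta)$-DP conversion, optimizing over $\alpha>1$. The composition and conversion are routine once the per-step bound is in place, so the whole effort concentrates on the single-iteration analysis.

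Fix an iteration $k$ and condition on all previously published model states. Coordinate-wise, the mixing step $\bm{\alpha}_k(j)w_{k-1}(j)+(1-\bm{\alpha}_k(j))w_{k-2}(j)$ is uniform on an interval of length $|w_{k-1}(j)-w_{k-2}(j)|$, and steps 6--9 of Algorithm \ref{alg: ModelMix} enforce this length to be at least $\tau$; the worst case for privacy is length exactly $\tau$. After subtracting the deterministic midpoint (which is common to both neighboring datasets and hence irrelevant to Rényi divergence) and rescaling by $\eta$, each coordinate of $w_k$ is distributed as $\mathsf{P}_{G_{k-1}(j)}$, the convolution of $\mathcal{N}(G_{k-1}(j),\sigma^2)$ with $\mathcal{U}[-\tau/(2\eta),\tau/(2\eta)]$. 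Under Poisson subsampling at rate $q$, with neighboring datasets $\mathcal{D}=\mathcal{D}'\cup\{s\}$ the gradient $G_{k-1}$ is unchanged with probability $1-q$ and shifted by $g_s$ with $\|g_s\|\le c$ with probability $q$. Using translation invariance to set $G_{k-1}(\mathcal{D}')=0$, the two distributions to compare are $\prod_j \mathsf{P}_0$ and $(1-q)\prod_j \mathsf{P}_0+q\prod_j \mathsf{P}_{g_s(j)}$.

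The main obstacle is reducing this $d$-dimensional Rényi divergence to the one-dimensional expression in the theorem. Expanding binomially, for integer $\alpha$,
\begin{equation*}
\mathbb{E}_{\prod_j \mathsf{P}_0}\!\Bigl[\bigl((1-q)+q\prod_j \tfrac{\mathsf{P}_{g_s(j)}}{\mathsf{P}_0}\bigr)^{\!\alpha}\Bigr] = \sum_{k=0}^{\alpha}\binom{\alpha}{k}(1-q)^{\alpha-k}q^{k}\prod_{j}M(g_s(j);k),
\end{equation*}
where $M(g;k)=\mathbb{E}_{\mathsf{P}_0}[(\mathsf{P}_g/\mathsf{P}_0)^k]$ is the 1D likelihood-ratio moment. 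Comparing against the case $g_s=c\,e_1$ (for which $\prod_j \mathsf{P}_{g_s(j)}$ factors and cancels against $\prod_{j\ge 2}\mathsf{P}_0$ in the denominator, leaving exactly $\mathbb{E}_{\mathsf{P}_0}[((1-q)+q\mathsf{P}_1/\mathsf{P}_0)^{\alpha}]$), it suffices to show $\prod_j M(g_s(j);k)\le M(c;k)$ under $\sum_j g_s(j)^2\le c^2$ for every integer $k\ge 2$. I would prove this by verifying that $\log M(g;k)$ is a convex, even function of $g$ whose growth is controlled by $g^2$: the Gaussian factor contributes exactly $k(k-1)g^2/(2\sigma^2)$ (linear in $g^2$), and the additive uniform component preserves log-convexity via the log-concavity of $\mathsf{P}_0$ and a standard Jensen-type inequality, so the maximum of the sum $\sum_j \log M(g_s(j);k)$ on the $\ell_2$-ball of radius $c$ is attained at a vertex $c\,e_j$.

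With the per-iteration bound $\mathsf{D}_\alpha((1-q)\mathsf{P}_0+q\mathsf{P}_1\|\mathsf{P}_0)$ in hand, Theorem \ref{thm:composition-RDP} immediately yields an $(\alpha, T\cdot\mathsf{D}_\alpha(\cdot\|\cdot))$-RDP guarantee after $T$ iterations; converting to $(\epsilon,\delta)$-DP adds $\log(1/\delta)/(\alpha-1)$, and taking the infimum over integer $\alpha>1$ produces the claimed bound. The key technical step, and the one I would spend the most care on, is the moment-product inequality justifying the reduction to a single coordinate, because it is the place where the anisotropy introduced by the per-coordinate uniform mixing interacts nontrivially with the spherical Gaussian noise and the subsampling coupling.
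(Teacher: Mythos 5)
Your proposal follows essentially the same route as the paper's proof: condition on past outputs to reduce to a single iteration, model the mixed-and-perturbed update as a per-coordinate convolution of a Gaussian with $\mathcal{U}[-\tau/(2\eta),\tau/(2\eta)]$, expand the subsampled mixture binomially for integer $\alpha$, factorize the moment over coordinates, argue the per-coordinate log-moment is convex as a function of the squared shift so the maximum over the $\ell_2$-ball is attained at a one-hot vector $c\,e_1$, and then apply RDP composition and the standard conversion to $(\epsilon,\delta)$. The one point to state carefully is that the vertex argument requires convexity of $x\mapsto \log M(\sqrt{x};k)$ in $x=g^2$ (evenness and convexity in $g$ alone do not suffice), which is precisely the claim the paper establishes by its second-derivative computation of $g(\tilde v)=h(\sqrt{\tilde v})$.
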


\begin{proof}
See Appendix \ref{app:proof_RDP}. 
\end{proof}

Let $\mathcal{D}$ and $\mathcal{D}'$ be two neighboring datasets, where $(\tilde{x}, \tilde{y}) = \mathcal{D}-\mathcal{D}'$ is the differing element.
Theorem \ref{thm:RDP} shows that the worst output divergence between $\mathcal{D}$ and $\mathcal{D}'$ occurs when the gradient $\nabla f(w_k,\tilde{x}, \tilde{y})$ is a vector whose Hamming weight is 1, such as $(c,0,...,0)$. 
In other words, given an $l_2$-norm budget of $c$, the worst-case $\mathsf{D}_{\alpha}$ divergence happens when the gradient of $(\tilde{x}, \tilde{y})$ is concentrated on a single dimension. This is different from standard (subsampled) Gaussian Mechanism, where the randomization is only due to the Gaussian noise and subsampling \cite{mironov2019r}. 
We will exploit this property to design more fine-tuned gradient clipping for ModelMix in Section \ref{sec: l_infty}. In the following, we give an asymptotic analysis on the privacy amplification of ModelMix. 

\begin{thm}[{Asymptotic Privacy Amplification from ModelMix}]
\label{thm:amplification} 
If the sampling rate $q$ is some constant, the noise $\Delta_{k}$ is selected to be Gaussian noise $\mathcal{N}(\bm{0},\sigma^2 \bm{I}_d)$ and $\tau$ is sufficiently large, then Algorithm 
 \ref{alg: ModelMix} with $T$ iterations satisfies $(\epsilon, \delta)$-DP guarantee for any $\epsilon$ and $\delta$ such that
{\small{
\begin{equation}
\begin{aligned}
 \epsilon =  \tilde{O}\big(\frac{\eta T(c+\sigma)}{{\tau}n} & + \sqrt{
\frac{\eta T\log(1/\tilde{\delta})}{\tau}\cdot (\frac{c^4}{n^4\sigma^3}+ \frac{c^2}{n^2\sigma} + \sigma)  }\big),\\
\delta & = \tilde{O}(T\cdot \frac{(L/n)+\sigma}{\tau} + \tilde{\delta}). \footnote{\hl{In the case of clipped gradient, one can simply replace $L$ with $c$},}
\end{aligned}
\end{equation}
}}\noindent
where $\tilde{\delta}$ can be any value within $(0,1)$. 

In addition, if in Algorithm \ref{alg: ModelMix}, the per-sample gradient is clipped up to $c$ in $l_1$ norm and the noise $\Delta_{k}$ is selected to be Laplace noise, then Algorithm \ref{alg: ModelMix} satisfies $(\epsilon, \delta)$-DP, where 
{\small{
\begin{equation}
  \epsilon =   \tilde{O}\big(\frac{\eta T\epsilon_0(e^{\epsilon_0}-1)}{\tau} + \epsilon_0 \sqrt{\frac{\eta T \log(1/\delta)}{\tau}}\big),
\end{equation}
}}
{where $\epsilon_0$ represents the $\epsilon$ privacy loss of a single iteration of {(\ref{main_update})} without ModelMix (set $\bm{\alpha}_k$ to be constant)}.  
\end{thm}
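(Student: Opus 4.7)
My plan is to derive the asymptotic bound by specializing Theorem \ref{thm:RDP} and then carefully expanding the resulting one-dimensional Rényi divergence. Theorem \ref{thm:RDP} already reduces the problem: for any integer $\alpha>1$, the $T$-iteration $(\epsilon,\delta)$ cost is controlled by $T\cdot \mathsf{D}_\alpha((1-q)\mathsf{P}_0 + q\mathsf{P}_1 \,\|\, \mathsf{P}_0) + \log(1/\tilde{\delta})/(\alpha-1)$, where $\mathsf{P}_0=\mathcal{N}(0,\sigma^2) * \mathcal{U}[-h,h]$, $\mathsf{P}_1=\mathcal{N}(c,\sigma^2) * \mathcal{U}[-h,h]$, and $h=\tau/(2\eta)$. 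The whole proof therefore reduces to an asymptotic analysis of this one-dimensional divergence as a function of $(c,\sigma,h,q,\alpha)$.

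The main step is a bulk/tail decomposition of the Rényi integral. Writing $p_0(o)=\frac{1}{2h}[\Phi_\sigma(o+h)-\Phi_\sigma(o-h)]$ and $p_1(o)=p_0(o-c)$, I would split $\mathbb{R}$ into a \emph{bulk} $B=[-h+r,h-r]$ with $r=\Theta(\sigma\sqrt{\log(1/\tilde{\delta})})$ and a \emph{tail} of total mass $O(\tilde{\delta})$. On $B$, the Gaussian kernel is fully contained inside $[-h,h]$, so $p_0(o)=1/(2h)$ up to $\tilde{\delta}$-small corrections and $p_1(o)/p_0(o)=1+O(c/h)=1+O(c\eta/\tau)$; the mixture likelihood ratio is therefore $1+\xi(o)$ with $|\xi|=O(qc\eta/\tau)$. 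Taylor expanding $(1+\xi)^\alpha$ to second and fourth order and integrating against $p_0$ yields the leading per-iteration Rényi divergence. In the tail, the ratio degenerates to that of a plain Gaussian mechanism of sensitivity $c$ and is handled by the standard Gaussian-RDP estimate, while the $O(\tilde{\delta})$ tail mass is paid off as an additive $\tilde{\delta}$ in the final $\delta$.

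Applying Theorem \ref{thm:composition-RDP} over $T$ iterations, optimizing over $\alpha$, and union-bounding the tail-failure event across iterations then produces the linear term $\eta T(c+\sigma)/(\tau n)$ and the square-root term $\sqrt{(\eta T\log(1/\tilde{\delta})/\tau)\cdot(c^4/(n^4\sigma^3)+c^2/(n^2\sigma)+\sigma)}$ for $\epsilon$, together with the $T((L/n)+\sigma)/\tau+\tilde{\delta}$ bound for $\delta$. The three pieces inside the square root correspond respectively to the fourth-order bulk expansion (producing $c^4/(n^4\sigma^3)$), the second-order bulk expansion (producing $c^2/(n^2\sigma)$), and the boundary/tail Gaussian contribution (producing $\sigma$), once the batching convention that makes the effective per-sample sensitivity scale as $c/n$ is plugged in. The Laplace/$l_1$ variant follows the same outline: a single step of (\ref{main_update}) without ModelMix is $(\epsilon_0,0)$-DP, so the bulk likelihood ratio satisfies $|\xi|=O(q\epsilon_0(e^{\epsilon_0}-1)\eta/\tau)$ (the standard pure-DP advanced-composition scaling), and composing followed by $\alpha$-optimization gives the claimed $\tilde{O}(\eta T\epsilon_0(e^{\epsilon_0}-1)/\tau + \epsilon_0\sqrt{\eta T\log(1/\delta)/\tau})$.

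The main obstacle is the careful expansion of $p_1(o)/p_0(o)$ in the transition region $|o|\approx h$: a coarse bound there yields only the $\sigma$ term in the square root, and to recover the finer $c^2/(n^2\sigma)$ and $c^4/(n^4\sigma^3)$ contributions one must expand $\log(p_0(o-c)/p_0(o))$ to second and fourth order in $c$, using that $\Phi_\sigma(o+h)-\Phi_\sigma(o-h)$ has derivatives of order $\phi_\sigma(o\pm h)/\sigma^k$, and separately track the $q$-dependence so that the effective per-sample sensitivity produces the $1/n$ scaling. Once this delicate expansion is laid out cleanly, the composition and RDP-to-DP conversion steps become routine.
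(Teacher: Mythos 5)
Your proposal takes a genuinely different route from the paper, and as written it does not go through. The paper does not prove Theorem 4.2 by specializing Theorem 4.1 and composing RDP. Instead it analyzes the per-iteration \emph{pointwise privacy loss} $\epsilon(w_k\mid w_{[1:k-1]})=\log\frac{\mathbb{P}(\mathcal{M}(\mathcal{D})=w_k\mid\cdot)}{\mathbb{P}(\mathcal{M}(\mathcal{D}')=w_k\mid\cdot)}$ for the uniform-plus-Gaussian mixture: a dedicated lemma (via a four-region split of the integral around the boundary of the mixing interval) shows its mean is $O\big((c\log^2\rho+\sigma\log\rho)/\bar{\tau}\big)$ and its variance is $\frac{1}{\bar{\tau}}O\big(c^4\log\rho/\sigma^3+c^2\log^3\rho/\sigma+\sigma\log\rho\big)$ with $\bar{\tau}=\tau/\eta$; the event that a single iteration's loss exceeds a small threshold has probability $O\big((c+\sigma\sqrt{\log\bar{\tau}})/\bar{\tau}\big)$ and is union-bounded into $\delta$; then Bernstein--Azuma applied to the martingale of conditioned losses over $T$ iterations gives the high-probability bound, which is rescaled by the sensitivity $c/n$. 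The additive floor $T\cdot\frac{(L/n)+\sigma}{\tau}$ in the stated $\delta$, and the mean-plus-$\sqrt{T\cdot\mathrm{Var}}$ shape of the stated $\epsilon$, are fingerprints of exactly this truncation-plus-concentration argument; an RDP composition would instead yield a bound valid for every $\delta$, of a different form.

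Two concrete gaps in your plan. First, your bulk estimate is wrong: for $o$ in $[-h+r,\,h-r]$ with $r=\Theta\big(c+\sigma\sqrt{\log(1/\tilde{\delta})}\big)$ and $h=\tau/(2\eta)$, both $p_0(o)$ and $p_1(o)=p_0(o-c)$ equal $\frac{1}{2h}\big(1\pm O(\tilde{\delta})\big)$, so $p_1(o)/p_0(o)=1+O(\tilde{\delta})$, not $1+O(c\eta/\tau)$. Hence the second- and fourth-order bulk Taylor expansions cannot generate the $c^2/(n^2\sigma)$ and $c^4/(n^4\sigma^3)$ terms; in the correct accounting all three terms inside the square root, as well as the linear term $\eta T(c+\sigma)/(\tau n)$, arise from the boundary layer of width $O(c+\sigma\log\rho)$ around $\pm h$, whose probability mass $O\big((c+\sigma)/\bar{\tau}\big)$ is the entire source of the amplification. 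Second, within an RDP composition you cannot ``pay the tail off as an additive $\tilde{\delta}$'': the R\'enyi divergence in Theorem 4.1 must be evaluated over the whole output space, and excising a bad region while adding its mass to $\delta$ is precisely the truncation device that forces one into the privacy-loss-random-variable/martingale framework the paper uses (or into an approximate-RDP variant you would have to build separately). If you insist on the pure RDP route, you must keep the boundary/tail inside the moment, where the likelihood ratio grows like $e^{\Theta(c(o-h)/\sigma^2)}$ and contributes roughly $e^{\Theta(\alpha^2c^2/\sigma^2)}$ times mass $O\big((c+\sigma)/\bar{\tau}\big)$, and then optimize over $\alpha$; that is a legitimate but different computation, and it is not evident it reproduces the theorem's stated terms or its $\delta$ structure. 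The same criticism applies to the Laplace part: the leading term $\epsilon_0(e^{\epsilon_0}-1)/\bar{\tau}$ there is (worst-case single-step loss) times (boundary mass), obtained from the mean/variance lemma plus Bernstein, not from a bulk likelihood-ratio expansion.
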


\begin{proof}
See Appendix \ref{app:proof_amplification}.
\end{proof}

We can compare between Theorem \ref{thm:amplification} and Theorem \ref{thm:composition} (the case of regular DP-SGD without ModelMix).  If a single iteration of a DP-SGD is $\epsilon_0$-differentially-private, then 
classic advanced composition (Theorem \ref{thm:composition}) suggests that the composition of $T$ iterations produces $O(\epsilon_0\sqrt{T\log(1/\delta)},\delta)$-DP when $\epsilon_0$ is small. 
Theorem \ref{thm:amplification} states that under the same setup, if the DP-SGD is further incorporated with ModelMix, then this composition becomes $\tilde{O}(\epsilon_0\sqrt{T\log(1/\delta)/\tau}, \delta)$, where {the $\epsilon$ term decreases by a factor of $\tilde{O}(1/\sqrt{\tau})$}.

\subsection{Clipping with $l_2$ and $l_{\infty}$ Sensitivity Guarantee}
\label{sec: l_infty}
\begin{algorithm}[t]
\caption{$l_2$ and $l_{\infty}$ Norm Gradient Clipping}
\begin{algorithmic}[1]
\STATE \textbf{Input:} Individual gradient $\nabla f(w, x, y)$, $l_2$ norm clipping threshold $c$, $l_{\infty}$ norm truncation parameter $p \in \mathbb{Z}^{+}$. 
\STATE Clip $\nabla f(w,x,y) \gets \mathsf{CP}(\nabla f(w,x,y), c)$
\FOR{$j=1,2, ... , d$}
 \IF{$|\nabla f(w, x, y) (j)| > \frac{c}{\sqrt{p}}$}
 \STATE {$\nabla f(w, x, y) (j) \gets 
  \text{sign} (\nabla f(w, x, y) (j)) \frac{c}{\sqrt{p}}$}
 \ENDIF
\ENDFOR
\STATE \textbf{Output:} $\nabla f(w, x, y).$
\end{algorithmic}
\label{alg: l_infty clipping}
\end{algorithm}

\noindent In this subsection, we show that in the framework of ModelMix with Gaussian Mechanism, one can further strengthen the privacy amplification if both the $l_2$ and $l_{\infty}$ norm sensitivity can be guaranteed. 
We first present a gradient clipping algorithm, described in Algorithm \ref{alg: l_infty clipping}. 
Algorithm \ref{alg: l_infty clipping} is a simple generalization of standard $l_2$-norm clipping operator $\mathsf{CP}(\cdot, c)$. For an individual gradient $\nabla f(w,x,y)$,  we first clip the gradient up to $c$ in $l_2$ norm. After that, we truncate each coordinate such that its absolute value does not exceed $c/\sqrt{p}$ for some constant integer $p \geq 1$. As a consequence, Algorithm \ref{alg: l_infty clipping} ensures that the $l_2$ and $l_{\infty}$ norm of clipped gradient is upper bounded by $c$ and $c/\sqrt{p}$, respectively. 
From our empirical observations, compared to $l_2$-norm clipping on $c$, DP-SGD is much less sensitive to $l_\infty$-norm truncation on $p$. The reason behind this is that in practice the gradients obtained are rarely concentrated on few coordinates and thus $l_\infty$ truncation with large $p$, even in the hundreds, will hardly change the geometry of gradients. Roughly speaking, $p$ captures the number of significant coordinates in the gradient, where the  gradient dimension could be millions in deep learning. However, such truncation enables us to derive a stronger amplification bound with ModelMix, as shown in  Corollary \ref{cor: l_infty}. 

\begin{cor}
Under the same setup of Theorem \ref{thm:RDP}, if we further ensure that the $l_{\infty}$ norm of each individual gradient in Algorithm \ref{alg: ModelMix} is bounded by $c/\sqrt{p}$ for some $p \in \mathbb{Z}^+$, then it satisfies $(\epsilon, \delta)$ DP for any $\epsilon$ and $\delta$ such that
$$ \epsilon \leq \min_{\alpha \in \mathbb{Z}, \alpha>1} \frac{\log \big( 
 \sum_{k=0}^{\alpha} \tbinom{\alpha}{k}(1-q)^{\alpha-k}q^k \mathcal{A}_{k}\big) + \log(1/\delta)}{1-\alpha}, $$
 where $\mathcal{A}_{k} = \big(\mathbb{E}_{z \sim \mathsf{P}_0} \big[ (\mathsf{P}'_1(z)/{\mathsf{P}_0(z)})^k \big]\big)^{p}$. 
 Here, $\mathsf{P}_0 = \mathcal{N}(0,\sigma)*\mathcal{U}[-\tau/(2\eta),\tau/(2\eta)]$ and $\mathsf{P}'_1 = \mathcal{N}(c/\sqrt{p},\sigma)*\mathcal{U}[-\tau/(2\eta),\tau/(2\eta)]$, i.e., $\mathsf{P}_0$ shifted by $c/\sqrt{p}$. 
 \label{cor: l_infty}
\end{cor}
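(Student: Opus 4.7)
The plan is to mirror the proof structure of Theorem \ref{thm:RDP} but replace the one-dimensional per-iteration Rényi divergence by a $p$-dimensional calculation that leverages the new $\ell_\infty$ truncation. First I would fix neighboring datasets $\mathcal{D}, \mathcal{D}'$ differing in a single point $(\tilde{x}, \tilde{y})$ and let $g = \nabla f(w_k, \tilde{x}, \tilde{y})$ denote the per-sample gradient after Algorithm \ref{alg: l_infty clipping}, so $\|g\|_2 \le c$ and $\|g\|_\infty \le c/\sqrt{p}$. Conditioned on $w_k, w_{k-1}, w_{k-2}$, the only difference between the one-iteration output distributions on $\mathcal{D}$ and $\mathcal{D}'$ is a possible additional additive shift by $\eta g$ before the Gaussian noise and the uniform mixing; since Poisson subsampling includes $(\tilde{x},\tilde{y})$ with probability $q$, the post-update distribution on $\mathcal{D}$ is the mixture $(1-q) \mu_0 + q \mu_g$ where $\mu_g = \prod_{j=1}^d \mathsf{P}_{g(j)}$ and $\mathsf{P}_s := \mathcal{N}(s,\sigma) * \mathcal{U}[-\tau/(2\eta),\tau/(2\eta)]$, while on $\mathcal{D}'$ it is $\mu_0 = \prod_j \mathsf{P}_0$. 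Coordinates with $g(j)=0$ contribute a likelihood ratio of $1$, so only the support of $g$ matters.

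Next I would identify the worst-case $g$ under the joint $\ell_2/\ell_\infty$ constraints. Because the noise and mixing are independent across coordinates, the per-iteration $\alpha$-Rényi moment factorizes over the support of $g$ and, for each coordinate, the quantity $\mathbb{E}_{z \sim \mathsf{P}_0}[(\mathsf{P}_{g(j)}(z)/\mathsf{P}_0(z))^k]$ is convex and increasing in $|g(j)|$. A standard rearrangement argument under the constraint $\sum_j g(j)^2 \le c^2$ with $|g(j)| \le c/\sqrt{p}$ then shows the moment is maximized when exactly $p$ coordinates have magnitude $c/\sqrt{p}$ and the rest vanish. Under this configuration the joint likelihood ratio is the $p$-fold product of identical one-dimensional ratios driven by $\mathsf{P}'_1 = \mathcal{N}(c/\sqrt{p},\sigma) * \mathcal{U}[-\tau/(2\eta),\tau/(2\eta)]$.

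Combining this factorization with the binomial expansion trick used in subsampled RDP (as in \cite{mironov2019r}), the single-iteration Rényi moment of order $\alpha \in \mathbb{Z}_{>1}$ becomes
\begin{align*}
& \mathbb{E}_{z \sim \mathsf{P}_0^{\otimes p}} \Bigl[\Bigl(\tfrac{(1-q)\mathsf{P}_0^{\otimes p}(z) + q \prod_{j=1}^p \mathsf{P}'_1(z_j)}{\mathsf{P}_0^{\otimes p}(z)} \Bigr)^{\alpha} \Bigr] \\
& \quad = \sum_{k=0}^{\alpha} \tbinom{\alpha}{k} (1-q)^{\alpha-k} q^k \Bigl(\mathbb{E}_{z \sim \mathsf{P}_0}\Bigl[\bigl(\tfrac{\mathsf{P}'_1(z)}{\mathsf{P}_0(z)}\bigr)^k\Bigr]\Bigr)^{p} = \sum_{k=0}^{\alpha} \tbinom{\alpha}{k} (1-q)^{\alpha-k} q^k \mathcal{A}_k,
\end{align*}
which is the exact quantity appearing inside the $\log$ in the corollary. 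Dividing $\log$ of this moment by $\alpha-1$ gives the per-iteration $\alpha$-RDP bound, and applying Theorem \ref{thm:composition-RDP} together with the standard RDP-to-$(\epsilon,\delta)$ conversion yields the claimed bound after optimizing over integer $\alpha>1$.

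The main obstacle is the worst-case identification step: one must verify that among all $g$ obeying both norm constraints, the flat $p$-sparse configuration maximizes the Rényi moment for every integer $\alpha$, not just asymptotically. My plan is to handle this by fixing $\alpha$ and expanding $\mathbb{E}_{z \sim \mathsf{P}_0}[(\mathsf{P}_{s}(z)/\mathsf{P}_0(z))^k]$ as a power series in $s$ using the Gaussian moment generating function (the uniform mixing kernel factors out cleanly because it is shift-equivariant), showing each coefficient is non-negative and the function is convex in $s^2$; a majorization argument then pushes $|g(j)|$ to the extremes $0$ or $c/\sqrt{p}$, and the $\ell_2$ budget forces exactly $p$ saturations. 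The remaining steps — binomial expansion, RDP composition, and conversion to $(\epsilon,\delta)$-DP — are mechanical once the per-iteration bound is in hand.
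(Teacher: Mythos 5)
Your proposal follows essentially the same route as the paper's own proof: reduce to the worst-case per-sample shift via translation invariance and quasi-convexity of Rényi divergence, expand the subsampled moment binomially over integer $\alpha$, factorize across coordinates, and argue that the per-coordinate log-moment is convex as a function of the squared coordinate $v_j^2$, so the maximum over the polytope cut out by the joint $l_2$/$l_\infty$ constraints sits at the flat Hamming-weight-$p$ vertex $(\pm c/\sqrt{p},\ldots,\pm c/\sqrt{p},0,\ldots,0)$, after which RDP composition and conversion give the stated bound. The only caveat is your side remark that the uniform mixing kernel ``factors out'' via shift-equivariance in the convexity verification — it does not, since the likelihood ratio involves the convolved densities — but this is an inessential detail of a computation the paper itself also only asserts (via the second derivative of $h(\sqrt{x})$) rather than spells out.
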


\begin{proof}
See Appendix \ref{app:proof_RDP}. 
\end{proof}

Corollary \ref{cor: l_infty} generalizes Theorem \ref{thm:RDP} and recomputes the worst case divergence when we have both $l_2$ and $l_{\infty}$ norm sensitivity guarantees. 
We prove that the worst case happens when the gradient $\nabla f(w_k,\tilde{x}, \tilde{y})$ of the differing element $(\tilde{x}, \tilde{y})$ is in a form whose Hamming weight is $p$ and each non-zero coordinate is equal to $\pm c/\sqrt{p}$. 
Corollary \ref{cor: l_infty} also shows an efficient way to numerically compute the privacy loss by measuring the divergence between two one-dimensional distributions. 

\subsection{Privacy Amplification Examples}
\noindent In Fig.  \ref{fig_amplification}, we provide concrete examples of DP-SGD with ModelMix under various setups. We set $n=50,000$, $\delta=10^{-5}$, the clipping threshold $c=20$, and run DP-SGD under the Gaussian Mechanism \cite{Dwork-algorithimc}. 
We measure the cumulative privacy loss in terms of $\epsilon$ under various sampling rates $q$, mixing thresholds $\tau$ and $l_{\infty}$ truncation parameter $p$. In all subfigures, we set the total number of iterations $T=5,000$, and the budget $\epsilon=200$ for regular DP-SGD. 

In Fig. \ref{fig_amplification} (a-c), we set the sampling rate $q = 0.02$ (a minibatch of size 1000 in expectation), and examine the privacy loss under $\tau = 0.075\eta, 0.15\eta, 0.3\eta$, where $\eta$ is the stepsize, respectively.
With ModelMix, especially with further help of Algorithm \ref{alg: l_infty clipping} to ensure both $l_2$ and $l_{\infty}$ norm sensitivity, we achieve orders of magnitude improvement. For example, in Fig. \ref{fig_amplification} (c) where $\tau = 0.3\eta$, compared to $(\epsilon=200, \delta=10^{-5})$ using regular DP-SGD, under the same setup, ModelMix, ModelMix with further $l_{\infty}$ truncation of $p=25$ and $p=100$ produce $\epsilon = 31.7, 5.4$ and $4.8$, respectively, with the same $\delta=10^{-5}$. The corresponding $\epsilon$ numbers produced in  Fig. \ref{fig_amplification} (a) and (b) are (57.2, 17.9, 15.3) and (40.4, 9.0, 7.9), repectively. 
The amplification factor of ModelMix matches our theoretical results, which is $\tilde{O}(1/\sqrt{\tau})$. On the other hand, the additional amplification from the $l_{\infty}$ norm truncation will reach some limit as $p$ increases. For large enough $p$, such combined amplification is empirically ${O}(1/{\tau})$. {In Fig. {\ref{fig_amplification}} (e-f), we change the sampling rate $q$ to be $0.04$ while keeping the remaining parameters the same as those in Fig. {\ref{fig_amplification}} (a-c). In general, given larger privacy budget and mixing threshold $\tau$, ModelMix can produce stronger privacy amplification. 

\begin{figure}[t]
  \centering
  \includegraphics[width=0.49\linewidth]{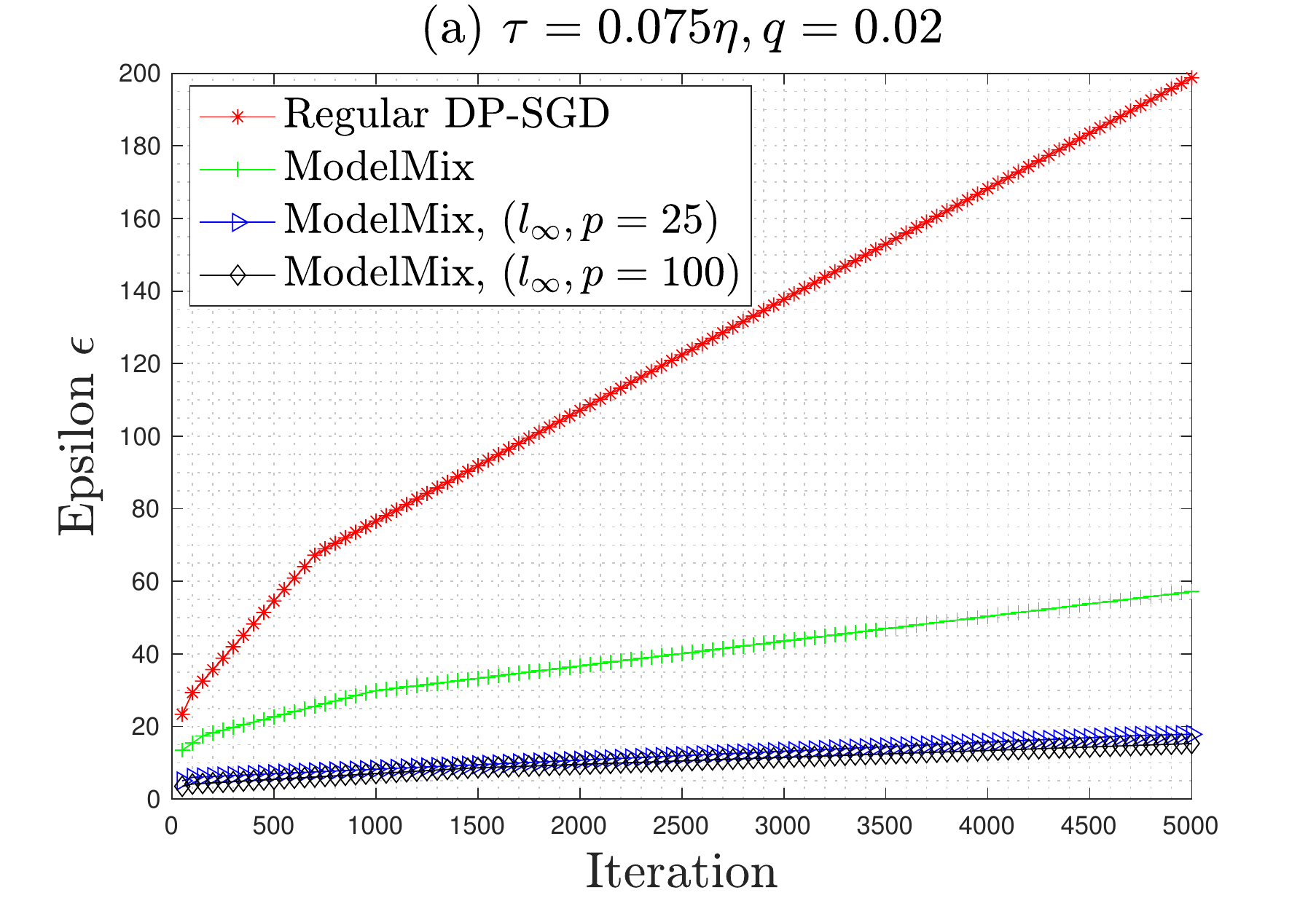}
  \includegraphics[width=0.49\linewidth]{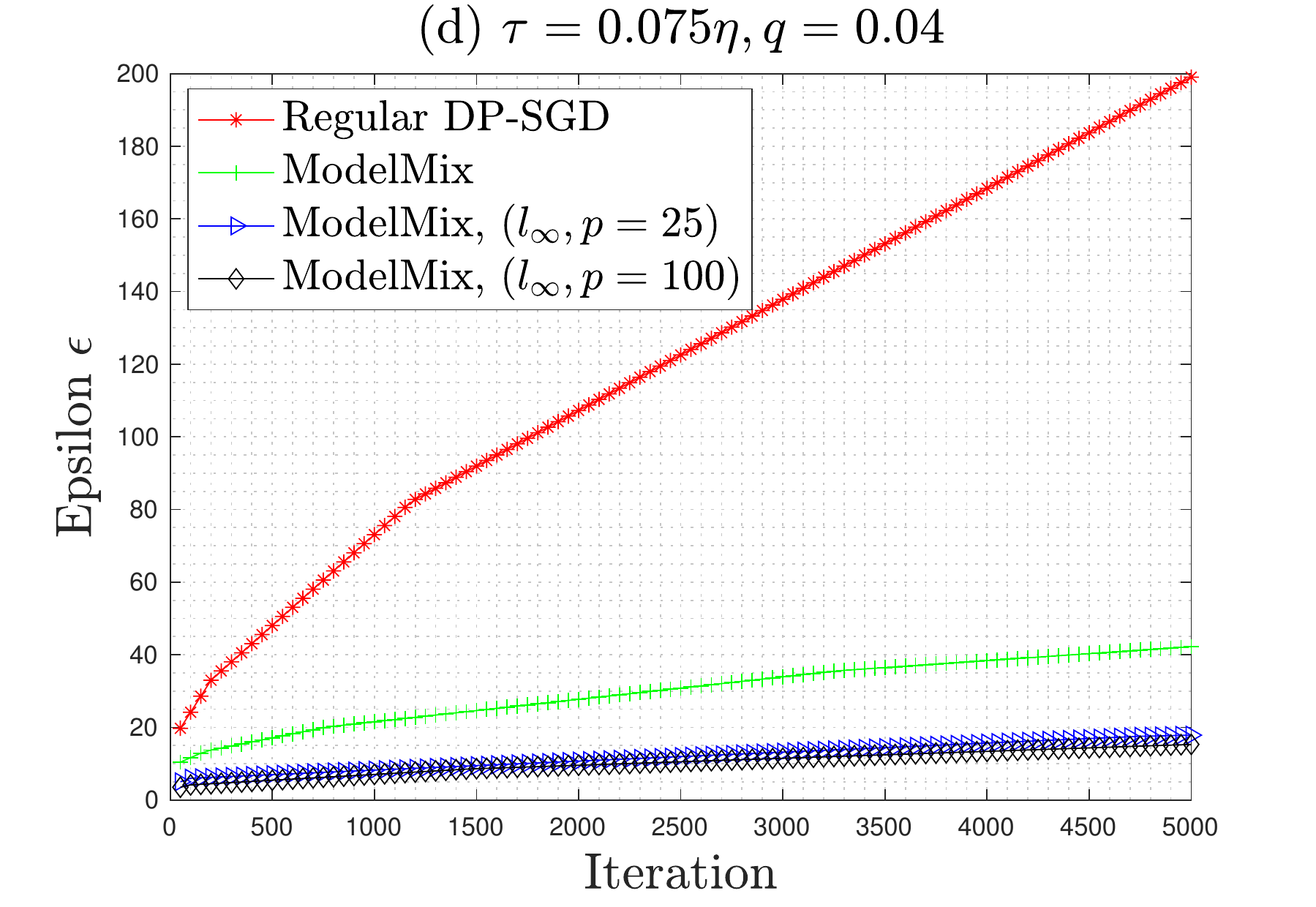}
  \includegraphics[width=0.49\linewidth]{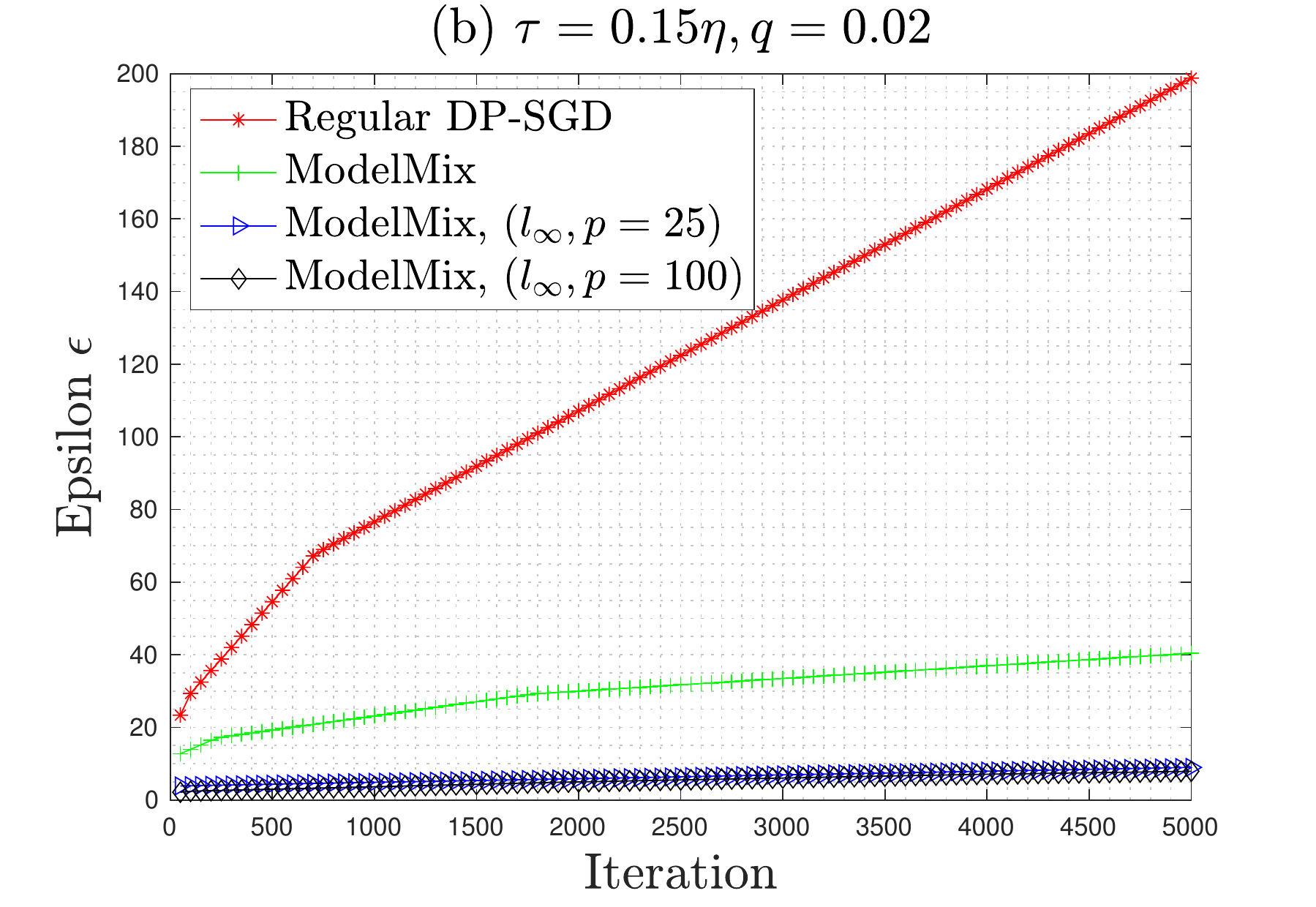}
  \includegraphics[width=0.49\linewidth]{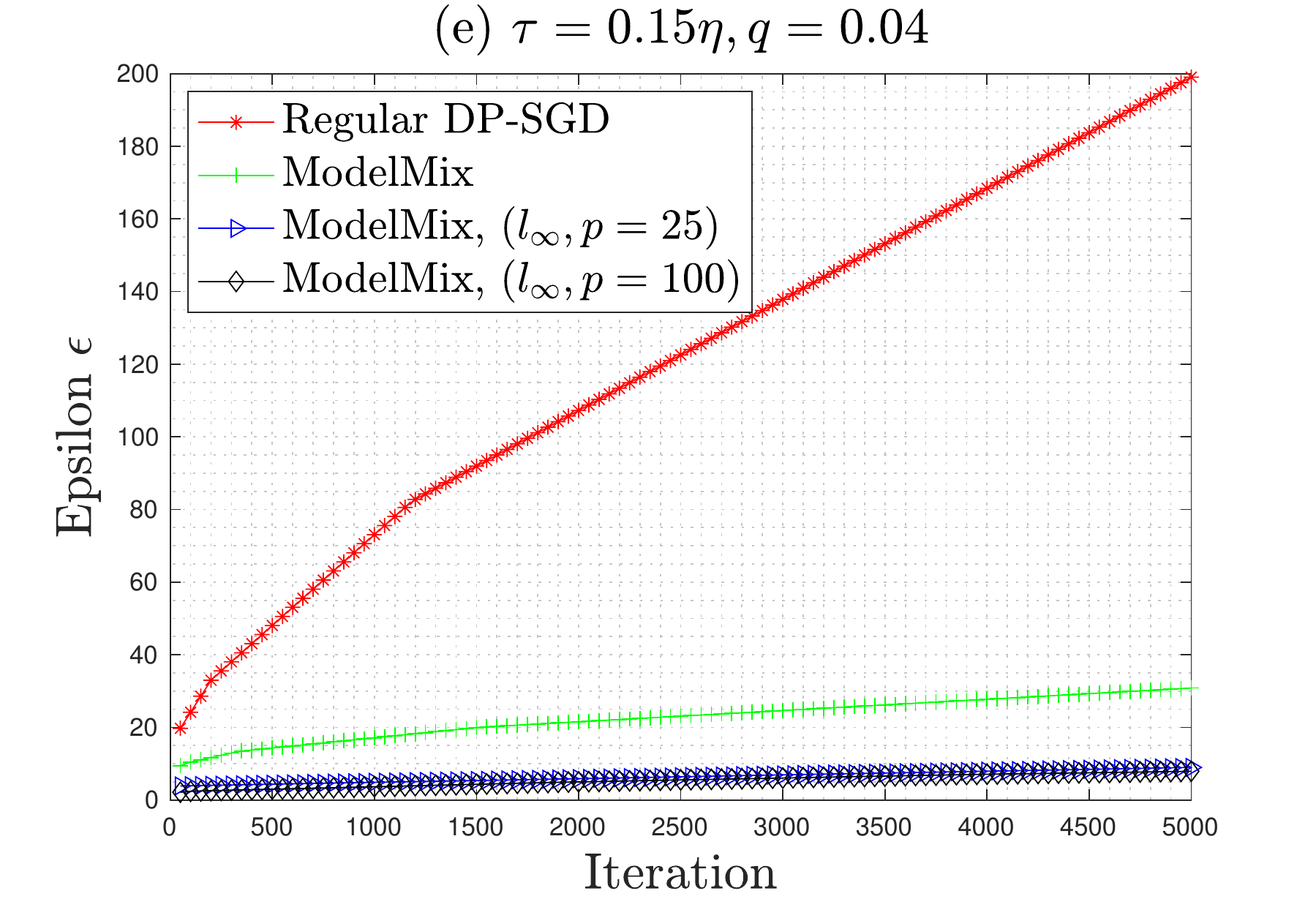} 
  \includegraphics[width=0.49\linewidth]{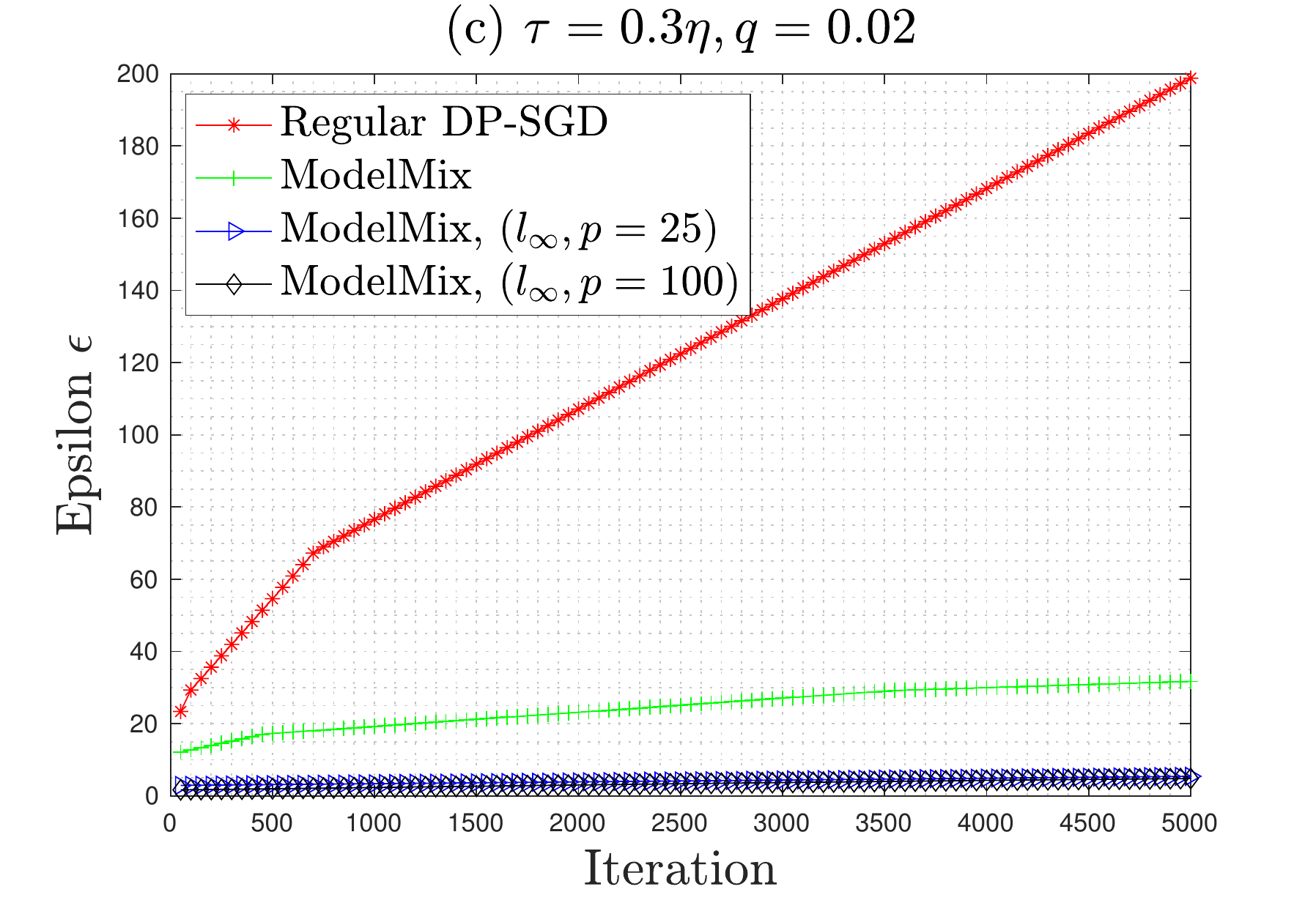}
  \includegraphics[width=0.49\linewidth]{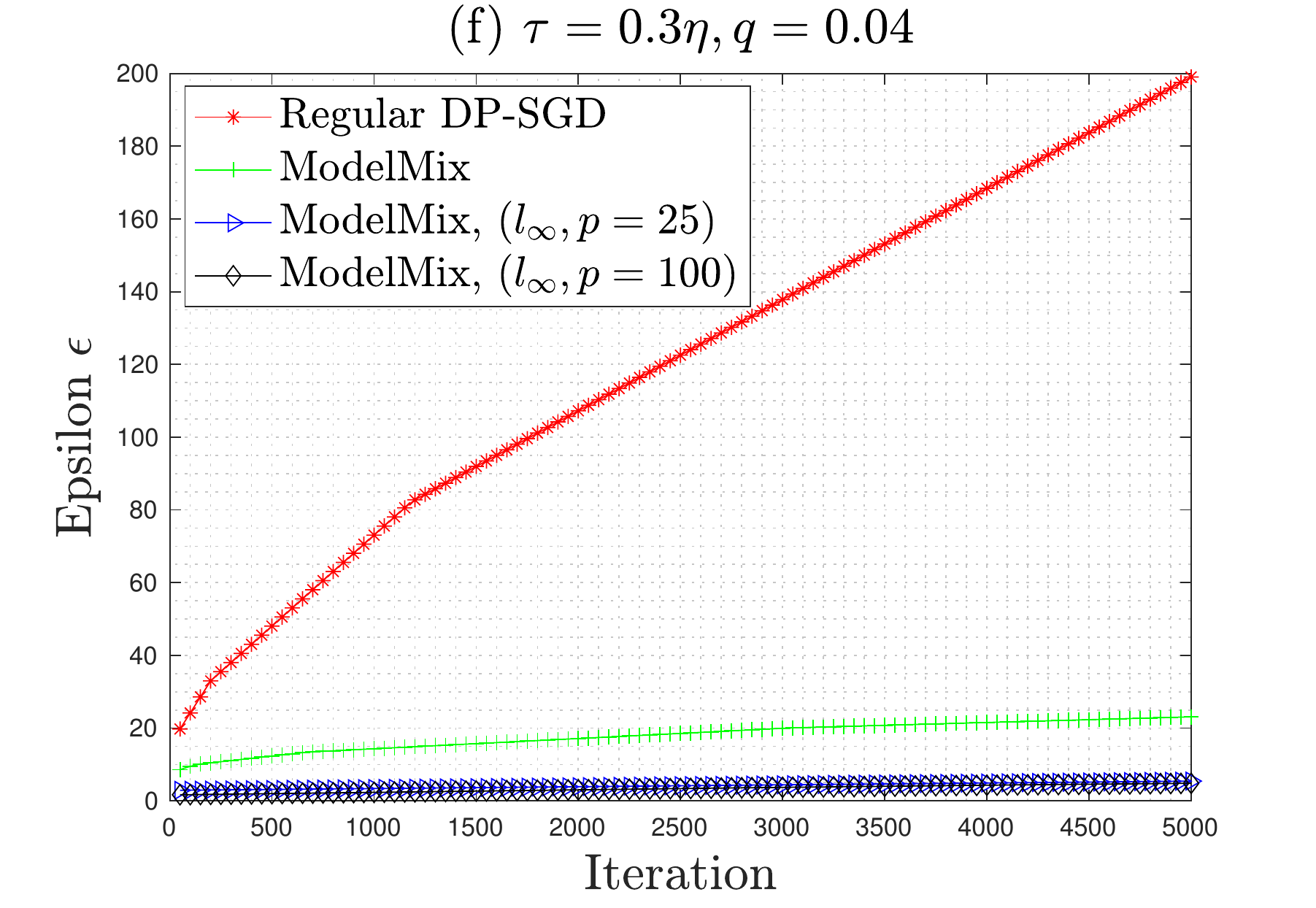} 
\caption{Privacy amplification from ModelMix on running DP-SGD with $n=50K$ samples under Gaussian Mechanism.}
\label{fig_amplification}
\vspace{-0.2 in}
\end{figure}
\subsection{More Insights on Privacy Amplification}

\noindent In the following, we provide more technical insights on how ModelMix improves the privacy. A quick intuition is that, compared to standard DP-SGD (Equation (\ref{dpsgd_with_cliping})) where $w_k$ is only randomized by the noise $\Delta_{k}$, the sources of randomness in Algorithm \ref{alg: ModelMix} are enriched, containing both the randomness in ModelMix and the noise perturbation.
More randomness often implies stronger privacy. However, {\em though privacy relies on randomness, not all kinds of randomness can produce worst-case DP guarantees.} ModelMix is an interesting example. 
ModelMix must be applied with other noise mechanisms to produce meaningful DP guarantees, as explained below. 

In ModelMix, the mixed state is within a bounded hull determined by earlier updates.
This means that the randomness is bounded and localized, which cannot produce reasonable worst-case DP guarantees, in contrast to Laplace/Gaussian noise.
To better illustrate this, consider the following example.
Suppose $\nabla F(w_k, \mathcal{D})=0$ and $\nabla F(w_k, \mathcal{D}')=1$ are two gradients evaluated by adjacent datasets $\mathcal{D}$ and $\mathcal{D}'$.
We randomize the gradients by introducing $\mathcal{U}[0,5]$, the uniform distribution between $[0,5]$. 
The perturbed gradients now become equivalent to being uniformly sampled from $\mathcal{U}[0,5]$ and $\mathcal{U}[1,6]$ respectively.
When we compare the divergence between the two distributions $\mathcal{U}[0,5]$ and $\mathcal{U}[1,6]$, we cannot derive a worst-case $\epsilon$ bound ($\epsilon=\infty$ in this example) or an efficiently-composable $(\epsilon, \delta)$ DP guarantee \footnote{To achieve a meaningful privacy guarantee in DP-SGD for $T$ iterations, we need the failure probability $\delta = o(1/(nT))$ in a single iteration.}. To this end, additional Laplace or Gaussian noise is needed.

However, even with additional noise, we cannot improve the privacy loss of a single iteration significantly. 
Continuing with the above example, in a single iteration of DP-SGD, if we add some Laplace noise $Lap(0,\lambda)$, which ensures $(\epsilon_0,0)$-DP, and further perturb the update by a uniform noise within $\mathcal{U}(0,5)$, the distributions of the two perturbed gradients $\nabla F(w_k, \mathcal{D})=0$ and $\nabla F(w_k, \mathcal{D}')=1$ now become $Lap(0,\lambda)*\mathcal{U}(0,5)$ and $Lap(0,\lambda)*\mathcal{U}(1,6)$, respectively. 
Here, $*$ represents the convolution of two distributions. Still, the worst-case is not improved where we can only claim $\epsilon_0$-DP or still some not efficiently-composable $(\epsilon,\delta)$-DP for a single iteration.

So how does the localized randomness in ModelMix amplify the privacy? While ModelMix cannot significantly improve the worst-case utility-privacy tradeoff in a single iteration {\protect{\cite{FOCS2014}}}, as shown in Theorems \ref{thm:RDP} and \ref{thm:amplification}, it smoothens the divergence between the output distributions from two arbitrary adjacent datasets. 
{\em Such an amplification is limited in a single iteration, but when we consider the composition of the privacy loss, the amplification accumulates to produce a strong improvement.} 
To be specific, for the DP-SGD update protocol (Equation (\ref{dpsgd_with_cliping})), denoted by $\mathcal{M}$, we consider the  {\em pointwise $\epsilon(w)$-loss} at a particular output $w$ \cite{CCS2016},
$$ \epsilon_{\mathcal{D},\mathcal{D}',\mathsf{Aux}}(w) = \log \frac{\mathbb{P}(\mathcal{M}(\mathcal{D}, \mathsf{Aux})= w) }{\mathbb{P}(\mathcal{M}(\mathcal{D'},\mathsf{Aux})= w)},$$
where $\mathcal{D},\mathcal{D}'$ are two adjacent datasets and $\mathsf{Aux}$ represents the other auxiliary inputs used in $\mathcal{M}$. 
When we take the supremum $\epsilon = \sup_{\mathcal{D},\mathcal{D}',\text{Aux}}\epsilon_{\mathcal{D},\mathcal{D}',\mathsf{Aux}}(w)$, it gives an equivalent $\epsilon$-DP guarantee. As explained before, incorporation of ModelMix  (Equation (\ref{main_update})) cannot improve this (supremum) worst-case loss. 
What we proved in Theorem \ref{thm:amplification} is that, for any two datasets $\mathcal{D},\mathcal{D}'$ and $\mathsf{Aux}$, the expectation $\mathbb{E}[\epsilon(w)]$  and the variance of $\text{Var}[\epsilon(w)]$ for $w \sim \mathcal{M}(\mathcal{D},\text{Aux})$ will scale by $\tilde{O}(1/\tau)$ when ModelMix is further applied. Provided such an $\epsilon(w)$ loss of smaller mean and stronger concentration, we can derive a stronger {\em high probability bound} in a composite $(\epsilon, \delta)$ form when we measure the cumulative $\epsilon$ loss across the states $w_{[1:T]}$ from $T$ iterations.

\subsection{Randomization beyond Noise}
\label{sec:beyond}
\noindent The analysis framework of ModelMix also sheds light on how to quantify the privacy amplification from a large class of “training-oriented” randomization commonly applied in deep learning. 
There are many reasons to introduce randomness in optimization and learning other than privacy preservation.
For example, in stochastic gradient Langevin dynamics (SGLD) \cite{SGLD2018,SGLD2019} for nonconvex optimization, it is common to utilize noisy gradient descent to escape saddle points \cite{saddle2017}. Randomization can also strengthen the learning performance, e.g., random dropout \cite{dropout} and data augmentation \cite{dataaugment2019}. 
In particular, data augmentation plays an important role in modern computer vision. 
Generally speaking, data augmentation represents a large class of methods to improve robustness and reduce memorization (instead of generalization) by generating virtual samples through random cropping \cite{cropping2012}, erasing \cite{erasing} or mixing \cite{mixup} the raw samples. 

All of the above-mentioned randomnesses are localized, which cannot produce meaningful DP guarantees for the same reason as ModelMix. 
For example, consider random erasing \cite{erasing}, where a rectangle region of an image is randomly erased and replaced with random values. 
When we process private images using the above mechanisms, the random transformation is multiplicative over the private input, meaning that the output is restricted to a bounded domain determined by the specific input processed. 
Given two different images with at least one differentiating pixel, one can still distinguish them after random erasing as long as at least one differentiating pixel is not erased. 
A similar argument holds for dropout, where a node in a neural network is ignored independently with some fixed rate, and the saddle point escaping algorithm \cite{saddle2017}, where the gradient is perturbed by a bounded noise uniformly selected from a sphere. Our privacy analysis framework shows a way to analyze these randomizations in practical learning algorithms combined with DP-SGD to produce sharpened composition bounds.

\section{Further Experiments}
\label{sec:exp}
\noindent ModelMix is a generic technique, which can be implemented in almost all applications of DP-SGD without further assumptions. 
In this section, we provide further experiments to measure its performance combined with other state-of-the-art advances in DP-SGD. 
Below, when we report the improvement of ModelMix over existing works, we mostly follow the optimal hyper-parameters each work suggests. 
We repeat each simulation five times and report the median of the results. 
Details of the hyper-parameter selections we used can be found in Appendix \ref{app:exp}. Based on the experiments conducted by previous works, to have a clear comparison, we will also test the proposed algorithms on the following three benchmark datasets, CIFAR10, SVHN and FMNIST\footnote{https://deepobs.readthedocs.io/en/stable/api/datasets/fmnist.html.}.  
CIFAR10 consists of 60,000 color images in 10 classes, where 50,000 are for training and 10,000 for test. 
The Street View House Numbers (SVHN) dataset has 73,257 images of real world house digits for training and 26,032 for test.\footnote{We do not use the 600,000 auxiliary samples provided in SVHN in our experiments.} 
Fashion MNIST (FMNIST) contains 70,000 greyscale images of fashion products from 10 categories with 60,000 for training and 10,000 for test. 
In the following experiments, we will assume that all the training samples in the above-mentioned datasets are private. 

\subsection{Shallow Network}
\label{sec:shallow}
\noindent Since the magnitude of gradient perturbation is adversely dependent on the model size, instead of using the cutting-edge deep models, a large number of works are devoted to building small models to carefully balance model capacity and utility loss caused by DP-SGD \cite{CCS2016, tempered2021, DanICLR2020,papernotfit2019}.  
One of the best existing results is given by \cite{DanICLR2020}. 
In \cite{DanICLR2020}, Tramer and Boneh showed that handcrafted features extracted from raw data can significantly strengthen the performance of training shallow models with DP-SGD. 
They proposed to first privately use ScatterNet \cite{scattering} to process the dataset and to then apply DP-SGD on the ScatterNet features. 
With a DP budget of $(\epsilon=3,\delta=10^{-5})$,  \cite{DanICLR2020} successfully trained a five-layer CNN with ScatterNet, which achieves $66.9\%$ and $87.2\%$ accuracy on CIFAR10 and FMNIST, respectively. 
In the following, we consider further improving their results by using ModelMix. 
With the same setup, we spend $\epsilon=0.695$ budget to privately estimate the statistics of datasets to apply ScatterNet. 
In Table \ref{tab:sca-CNN}, we compare the utility-privacy tradeoff when we run DP-SGD with/without ModelMix on training the same CNN suggested by \cite{DanICLR2020} on the ScatterNet features, where $\delta$ is always fixed to be $10^{-5}$ in all cases. Table \ref{tab:sca-CNN} shows that ModelMix can bring significant improvement even in simple model training with very low privacy budget.

\begin{table}
\begin{center}
\begin{tabular}{c r r r r r r} 
\hline
\hline 
\multicolumn{7}{c}{CIFAR10}\\
\hline
Method$\backslash$Privacy & $\epsilon=0.2$ & $0.4$ & $0.6$ & $0.8$ & $1.0$& $ \infty$ \\
 \hline
 \cite{DanICLR2020}   & 32.8 & 45.9  & 56.1  & 61.2  & 64.1 & 74.6 \\
 DP-SGD +MM  & 58.5 & 63.3  &  65.0  & 66.9 & 68.3 & 74.6 \\
\hline 
\multicolumn{7}{c}{FMNIST}\\
\hline
Method$\backslash$Privacy & $\epsilon=0.2$ & $0.4$ & $0.6$ & $0.8$ & $1.0$ & $\infty$ \\
 \hline
 \cite{DanICLR2020}  & 53.4 & 74.0  & 81.5  & 84.5  & 85.6 & 91.2 \\
  DP-SGD+MM  & 83.9 & 85.7  &  86.1  & 87.9 & 88.8 & 91.2 \\
\hline 
\hline 
\end{tabular}
\end{center}
\caption{\textbf{{Test Accuracy of} DP-SGD with/out ModelMix (MM) on training small CNN with ScatterNet Features.}}
\label{tab:sca-CNN}
 \vspace{-0.1 in}
\end{table}

\subsection{Deep Models}
\label{sec:deep}
\noindent In this subsection, we consider applying both ModelMix to help train large neural networks with DP-SGD. We take CIFAR10 and SVHN as examples. We will further apply Algorithm \ref{alg: l_infty clipping} to enhance privacy amplification. In particular, according to our analysis on the clipping threshold and sampling noise, we select $c=20$ and the $l_{\infty}$-norm truncation parameter $p=100$ in all experiments.  In Table \ref{tab:MM_BC}, we show the tradeoff between privacy and learning accuracy when training Resnet20 on CIFAR10 and SVHN with DP-SGD, respectively. Still, in all the cases, $\delta$ is fixed to be $10^{-5}$. With an $(\epsilon=8, \delta=10^{-5})$-DP budget, combined with ModelMix, we train Resnet20 which achieves $70.4\%$ and $90.1\%$ accuracy on CIFAR10 and SVHN, respectively. In comparison, regular DP-SGD can only produce $56.1\%$ and $84.9\%$ accuracy. In Table \ref{tab:MM_acc}, we also include results for particular test accuracy the privacy budget required. In general, ModelMix roughly brings 20$\times$ improvement on $\epsilon$ to produce usable security parameters.  
\begin{table}[t]
\begin{center}
\begin{tabular}{c r r r r r r} 
\hline
\hline 
\multicolumn{7}{c}{CIFAR10}\\
\hline
Method$\backslash$Privacy & $\epsilon=4$ & $5$ & $6$ & $7$ & $8$ & $\infty$  \\
 \hline
Regular DP-SGD  & 47.3 & 51.1 & 53.2 & 54.7  & 56.1 & 90.7 \\
 DP-SGD+MM  & 59.8 & 64.6 & 67.5 & 69.3  &  70.4 & 90.7  \\
\hline 
\multicolumn{7}{c}{SVHN}\\
\hline
Method$\backslash$Privacy & $\epsilon=2$ & $4$ & $5$ & $6$ & $8$ & $\infty$ \\
 \hline
Regular DP-SGD  & 41.2  & 67.5  & 78.4 & 81.9 & 84.9 & 92.3\\
 DP-SGD+MM  & 78.2 & 86.3 & 87.8 & 89.0 & 90.1  &  92.3 \\
\hline 
\hline 
\end{tabular}
\end{center}
\caption{\textbf{{Test Accuracy of} DP-SGD with/out ModelMix (MM) on training Resnet20}.}
\label{tab:MM_BC}
  \vspace{-0.1 in}
\end{table}

\begin{table}[t]
\begin{center}
\begin{tabular}{c r r r r r } 
\hline
\hline 
\multicolumn{6}{c}{CIFAR10}\\
\hline
Method$\backslash$Accuracy(\%) & 64 & 66 & 68 & 70 & 72 \\
 \hline
Regular DP-SGD  & $\epsilon=95.8$ & 108.3 & 118.2 & 140.6  & 182.9   \\
 DP-SGD+MM  & $\epsilon=4.7$ & 5.3 & 6.3  &  7.5 & 9.2  \\
\hline 
\multicolumn{6}{c}{SVHN}\\
\hline
Method$\backslash$Accuracy(\%) & $88$ & $88.5$ & $89$ & $89.5$ & $90.1$  \\
 \hline
Regular DP-SGD    & $\epsilon = 93.6$  & 101.3 & 112.4 & 128.9 & 144.5 \\
 DP-SGD+MM  & $\epsilon = 5.1 $ & 5.6 & 6.0 & 6.8  &  8 \\
\hline 
\hline 
\end{tabular}
\end{center}
\caption{\textbf{{Privacy Loss of} DP-SGD with/out ModelMix (MM) on training Resnet20}.}
\label{tab:MM_acc}
  \vspace{-0.1 in}
\end{table}

\subsection{Assistance with Public Data}
\label{sec:pub_data}
\noindent With access to additional public data, many elegant ideas have been proposed to significantly improve the performance of DP-SGD. In this subsection, we present results where we use ModelMix to further improve two representative works \cite{DanICLR2020} and \cite{yu_public2020}. \cite{DanICLR2020} shows a private transfer learning method on CIFAR10, where a SimCLR model \cite{chen2020simple} is first pretrained on unlabeled ImageNet and a linear model is then trained on the features extracted from the
penultimate layer of the SimCLR model using DP-SGD. With a privacy budget $(\epsilon=2,\delta=10^{-5})$, one can achieve $92.7\%$ on CIFAR10 \cite{DanICLR2020}. Similarly, we can apply ModelMix to improve this DP guarantee to $(\epsilon=0.64,\delta=10^{-5})$ with the same performance. On the other hand, if we assume the same privacy budget $(\epsilon=2,\delta=10^{-5})$ as \cite{DanICLR2020}, we can instead achieve $93.6\%$, close to the non-private optimal performance $94.3\%$. 

We also compare to \cite{yu_public2020}, which takes 2,000 ImageNet samples as public data and estimates a low-dimensional embedding of the private gradient when applying DP-SGD. Using the hyper-parameters suggested by \cite{yu_public2020}, we reproduce their experiments to privately train Resnet20 on CIFAR10, which achieves $73.2\%$ accuracy with an $(\epsilon=8,\delta=10^{-5})$ budget, and $79.1\%$ with $(\epsilon=111.2,\delta=10^{-5})$. With ModelMix, a sharpened tradeoff is produced, where with a budget $(\epsilon=2.9,\delta=10^{-5})$ and $(\epsilon=6.1,\delta=10^{-5})$, we achieve accuracy of $74.2\%$ and $79.1\%$, respectively. This is also close to the non-private optimal performance 82.3\% of \cite{yu_public2020}, with the hyper-parameters suggested.

\section{Conclusion and Prospects}
\label{sec:conclusion}
\noindent In this paper, we present a formal study on the privacy amplification from the trajectory entropy and the influence of gradient clipping in DP-SGD. 
We show fundamental improvements over DP-SGD, especially for deep learning, without assistance of other assumptions and additional data resources. 
This is a first step to consider the amplification from the potential entropy underlying the intermediate computation in DP-SGD and there are still many interesting directions for further generalization of ModelMix. 
For example, one may consider an adaptive selection of the envelope radius $\tau$ using techniques in \cite{adptive_clipping} or approximate it with public data after a projection to a low-rank subspace \cite{embedyu2021}. Moreover, the privacy analysis presented can also be generalized to quantify amplification from a large class of practical randomness used in deep learning, as explained in Section \ref{sec:beyond}; or to formalize the privacy guarantees of many heuristic privacy protections, such as Instahide \cite{instahide} and Datamix \cite{hansong_ECCV}. Another issue we pointed out, which is of more interest to practitioners, is the important role of sampling noise in clipped DP-SGD and the connection to the clipping threshold. The theory and the empirical study shown here could be meaningful to instruct further architecture-level improvement, for example the application of batch norm for the noise variation reduction.  We leave this to future work.

\small{
\bibliographystyle{unsrt}
\bibliography{ref.bib}

\begin{thebibliography}{10}

\bibitem{modelinversion2015}
Matt Fredrikson, Somesh Jha, and Thomas Ristenpart.
\newblock Model inversion attacks that exploit confidence information and basic
  countermeasures.
\newblock In {\em Proceedings of the 22nd ACM SIGSAC conference on computer and
  communications security}, pages 1322--1333, 2015.

\bibitem{modelinversion2017}
Reza Shokri, Marco Stronati, Congzheng Song, and Vitaly Shmatikov.
\newblock Membership inference attacks against machine learning models.
\newblock In {\em 2017 IEEE Symposium on Security and Privacy (SP)}, pages
  3--18. IEEE, 2017.

\bibitem{deepleakage}
Ligeng Zhu, Zhijian Liu, and Song Han.
\newblock Deep leakage from gradients.
\newblock {\em Advances in Neural Information Processing Systems},
  32:14774--14784, 2019.

\bibitem{inference_leakage}
Matthew Jagielski, Jonathan Ullman, and Alina Oprea.
\newblock Auditing differentially private machine learning: How private is
  private sgd?
\newblock {\em Advances in Neural Information Processing Systems},
  33:22205--22216, 2020.

\bibitem{dwork2006a}
Cynthia Dwork, Frank McSherry, Kobbi Nissim, and Adam Smith.
\newblock Calibrating noise to sensitivity in private data analysis.
\newblock In {\em Theory of cryptography conference}, pages 265--284. Springer,
  2006.

\bibitem{dwork2006b}
Cynthia Dwork, Krishnaram Kenthapadi, Frank McSherry, Ilya Mironov, and Moni
  Naor.
\newblock Our data, ourselves: Privacy via distributed noise generation.
\newblock In {\em Annual International Conference on the Theory and
  Applications of Cryptographic Techniques}, pages 486--503. Springer, 2006.

\bibitem{dpsgd}
Shuang Song, Kamalika Chaudhuri, and Anand~D Sarwate.
\newblock Stochastic gradient descent with differentially private updates.
\newblock In {\em 2013 IEEE Global Conference on Signal and Information
  Processing}, pages 245--248. IEEE, 2013.

\bibitem{FOCS2014}
Raef Bassily, Adam Smith, and Abhradeep Thakurta.
\newblock Private empirical risk minimization: Efficient algorithms and tight
  error bounds.
\newblock In {\em 2014 IEEE 55th Annual Symposium on Foundations of Computer
  Science}, pages 464--473. IEEE, 2014.

\bibitem{CCS2016}
Martin Abadi, Andy Chu, Ian Goodfellow, H~Brendan McMahan, Ilya Mironov, Kunal
  Talwar, and Li~Zhang.
\newblock Deep learning with differential privacy.
\newblock In {\em Proceedings of the 2016 ACM SIGSAC conference on computer and
  communications security}, pages 308--318, 2016.

\bibitem{Wang2017}
Di~Wang, Minwei Ye, and Jinhui Xu.
\newblock Differentially private empirical risk minimization revisited: Faster
  and more general.
\newblock In {\em Advances in Neural Information Processing Systems}, pages
  2722--2731, 2017.

\bibitem{heavy_tail2020}
Di~Wang, Hanshen Xiao, Srinivas Devadas, and Jinhui Xu.
\newblock On differentially private stochastic convex optimization with
  heavy-tailed data.
\newblock In {\em International Conference on Machine Learning}, pages
  10081--10091. PMLR, 2020.

\bibitem{heavy_tail2021}
Lijie Hu, Shuo Ni, Hanshen Xiao, and Di~Wang.
\newblock High dimensional differentially private stochastic optimization with
  heavy-tailed data.
\newblock {\em ACM SIGMOD PODS}, 2021.

\bibitem{amplification_iteration}
Vitaly Feldman, Ilya Mironov, Kunal Talwar, and Abhradeep Thakurta.
\newblock Privacy amplification by iteration.
\newblock In {\em 2018 IEEE 59th Annual Symposium on Foundations of Computer
  Science (FOCS)}, pages 521--532. IEEE, 2018.

\bibitem{balle2020privacy}
Borja Balle, Peter Kairouz, Brendan McMahan, Om~Thakkar, and Abhradeep
  Guha~Thakurta.
\newblock Privacy amplification via random check-ins.
\newblock {\em Advances in Neural Information Processing Systems},
  33:4623--4634, 2020.

\bibitem{imagesetDP}
Alexey Kurakin, Steve Chien, Shuang Song, Roxana Geambasu, Andreas Terzis, and
  Abhradeep Thakurta.
\newblock Toward training at imagenet scale with differential privacy.
\newblock {\em arXiv preprint arXiv:2201.12328}, 2022.

\bibitem{tempered2021}
Nicolas Papernot, Abhradeep Thakurta, Shuang Song, Steve Chien, and {\'U}lfar
  Erlingsson.
\newblock Tempered sigmoid activations for deep learning with differential
  privacy.
\newblock In {\em Proceedings of the AAAI Conference on Artificial
  Intelligence}, volume~35, pages 9312--9321, 2021.

\bibitem{clipSong2020}
Shuang Song, Om~Thakkar, and Abhradeep Thakurta.
\newblock Characterizing private clipped gradient descent on convex generalized
  linear problems.
\newblock {\em arXiv preprint arXiv:2006.06783}, 2020.

\bibitem{Dwork-algorithimc}
Cynthia Dwork, Aaron Roth, et~al.
\newblock The algorithmic foundations of differential privacy.
\newblock {\em Foundations and Trends{\textregistered} in Theoretical Computer
  Science}, 9(3--4):211--407, 2014.

\bibitem{DanICLR2020}
Florian Tramer and Dan Boneh.
\newblock Differentially private learning needs better features (or much more
  data).
\newblock In {\em International Conference on Learning Representations}, 2020.

\bibitem{lowerboundSP2021}
Milad Nasr, Shuang Songi, Abhradeep Thakurta, Nicolas Papemoti, and Nicholas
  Carlin.
\newblock Adversary instantiation: Lower bounds for differentially private
  machine learning.
\newblock In {\em 2021 IEEE Symposium on Security and Privacy (SP)}, pages
  866--882. IEEE, 2021.

\bibitem{membership2018}
Md~Atiqur Rahman, Tanzila Rahman, Robert Lagani{\`e}re, Noman Mohammed, and
  Yang Wang.
\newblock Membership inference attack against differentially private deep
  learning model.
\newblock {\em Trans. Data Priv.}, 11(1):61--79, 2018.

\bibitem{pate1}
Nicolas Papernot, Mart{\'\i}n Abadi, Ulfar Erlingsson, Ian Goodfellow, and
  Kunal Talwar.
\newblock Semi-supervised knowledge transfer for deep learning from private
  training data.
\newblock {\em arXiv preprint arXiv:1610.05755}, 2016.

\bibitem{pate2}
Nicolas Papernot, Shuang Song, Ilya Mironov, Ananth Raghunathan, Kunal Talwar,
  and {\'U}lfar Erlingsson.
\newblock Scalable private learning with pate.
\newblock {\em arXiv preprint arXiv:1802.08908}, 2018.

\bibitem{objective_perturb}
Kamalika Chaudhuri, Claire Monteleoni, and Anand~D Sarwate.
\newblock Differentially private empirical risk minimization.
\newblock {\em Journal of Machine Learning Research}, 12(3), 2011.

\bibitem{papernotfit2019}
Nicolas Papernot, Steve Chien, Shuang Song, Abhradeep Thakurta, and Ulfar
  Erlingsson.
\newblock Making the shoe fit: Architectures, initializations, and tuning for
  learning with privacy.
\newblock 2019.

\bibitem{privateselection}
Jingcheng Liu and Kunal Talwar.
\newblock Private selection from private candidates.
\newblock In {\em Proceedings of the 51st Annual ACM SIGACT Symposium on Theory
  of Computing}, pages 298--309, 2019.

\bibitem{adptive_clipping}
Galen Andrew, Om~Thakkar, Brendan McMahan, and Swaroop Ramaswamy.
\newblock Differentially private learning with adaptive clipping.
\newblock {\em Advances in Neural Information Processing Systems}, 34, 2021.

\bibitem{ge2015escaping}
Rong Ge, Furong Huang, Chi Jin, and Yang Yuan.
\newblock Escaping from saddle points—online stochastic gradient for tensor
  decomposition.
\newblock In {\em Conference on learning theory}, pages 797--842. PMLR, 2015.

\bibitem{dataaugment2019}
Connor Shorten and Taghi~M Khoshgoftaar.
\newblock A survey on image data augmentation for deep learning.
\newblock {\em Journal of Big Data}, 6(1):60, 2019.

\bibitem{SGLD2018}
Wenlong Mou, Liwei Wang, Xiyu Zhai, and Kai Zheng.
\newblock Generalization bounds of sgld for non-convex learning: Two
  theoretical viewpoints.
\newblock {\em Proceedings of Machine Learning Research vol}, 75:1--34, 2018.

\bibitem{embedyu2021}
Da~Yu, Huishuai Zhang, Wei Chen, Jian Yin, and Tie-Yan Liu.
\newblock Large scale private learning via low-rank reparametrization.
\newblock In {\em International Conference on Machine Learning}, pages
  12208--12218. PMLR, 2021.

\bibitem{yu_public2020}
Da~Yu, Huishuai Zhang, Wei Chen, and Tie-Yan Liu.
\newblock Do not let privacy overbill utility: Gradient embedding perturbation
  for private learning.
\newblock In {\em International Conference on Learning Representations}, 2020.

\bibitem{CVPR2021}
Zelun Luo, Daniel~J Wu, Ehsan Adeli, and Li~Fei-Fei.
\newblock Scalable differential privacy with sparse network finetuning.
\newblock In {\em Proceedings of the IEEE/CVF Conference on Computer Vision and
  Pattern Recognition}, pages 5059--5068, 2021.

\bibitem{clipZhang2019}
Jingzhao Zhang, Tianxing He, Suvrit Sra, and Ali Jadbabaie.
\newblock Why gradient clipping accelerates training: A theoretical
  justification for adaptivity.
\newblock In {\em International Conference on Learning Representations}, 2019.

\bibitem{clipChen2020}
Xiangyi Chen, Steven~Z Wu, and Mingyi Hong.
\newblock Understanding gradient clipping in private sgd: A geometric
  perspective.
\newblock {\em Advances in Neural Information Processing Systems},
  33:13773--13782, 2020.

\bibitem{knn}
Yuqing Zhu, Xiang Yu, Manmohan Chandraker, and Yu-Xiang Wang.
\newblock Private-knn: Practical differential privacy for computer vision.
\newblock In {\em Proceedings of the IEEE/CVF Conference on Computer Vision and
  Pattern Recognition}, pages 11854--11862, 2020.

\bibitem{zhou2020bypassing}
Yingxue Zhou, Steven Wu, and Arindam Banerjee.
\newblock Bypassing the ambient dimension: Private sgd with gradient subspace
  identification.
\newblock In {\em International Conference on Learning Representations}, 2020.

\bibitem{asi2021private}
Hilal Asi, John Duchi, Alireza Fallah, Omid Javidbakht, and Kunal Talwar.
\newblock Private adaptive gradient methods for convex optimization.
\newblock In {\em International Conference on Machine Learning}, pages
  383--392. PMLR, 2021.

\bibitem{RenyiDP}
Ilya Mironov.
\newblock R{\'e}nyi differential privacy.
\newblock In {\em 2017 IEEE 30th computer security foundations symposium
  (CSF)}, pages 263--275. IEEE, 2017.

\bibitem{composition}
Cynthia Dwork, Guy~N Rothblum, and Salil Vadhan.
\newblock Boosting and differential privacy.
\newblock In {\em 2010 IEEE 51st Annual Symposium on Foundations of Computer
  Science}, pages 51--60. IEEE, 2010.

\bibitem{subsampling}
Borja Balle, Gilles Barthe, and Marco Gaboardi.
\newblock Privacy amplification by subsampling: Tight analyses via couplings
  and divergences.
\newblock In {\em Advances in Neural Information Processing Systems}, pages
  6277--6287, 2018.

\bibitem{mironov2019r}
Ilya Mironov, Kunal Talwar, and Li~Zhang.
\newblock R$\backslash$'enyi differential privacy of the sampled gaussian
  mechanism.
\newblock {\em arXiv preprint arXiv:1908.10530}, 2019.

\bibitem{resnet}
Kaiming He, Xiangyu Zhang, Shaoqing Ren, and Jian Sun.
\newblock Deep residual learning for image recognition.
\newblock In {\em Proceedings of the IEEE conference on computer vision and
  pattern recognition}, pages 770--778, 2016.

\bibitem{batchnorm}
Shibani Santurkar, Dimitris Tsipras, Andrew Ilyas, and Aleksander Madry.
\newblock How does batch normalization help optimization?
\newblock {\em Advances in neural information processing systems}, 31, 2018.

\bibitem{SGLD2019}
Bai Li, Changyou Chen, Hao Liu, and Lawrence Carin.
\newblock On connecting stochastic gradient mcmc and differential privacy.
\newblock In {\em The 22nd International Conference on Artificial Intelligence
  and Statistics}, pages 557--566, 2019.

\bibitem{saddle2017}
Chi Jin, Rong Ge, Praneeth Netrapalli, Sham~M Kakade, and Michael~I Jordan.
\newblock How to escape saddle points efficiently.
\newblock In {\em Proceedings of the 34th International Conference on Machine
  Learning-Volume 70}, pages 1724--1732. JMLR. org, 2017.

\bibitem{dropout}
Nitish Srivastava, Geoffrey Hinton, Alex Krizhevsky, Ilya Sutskever, and Ruslan
  Salakhutdinov.
\newblock Dropout: a simple way to prevent neural networks from overfitting.
\newblock {\em The journal of machine learning research}, 15(1):1929--1958,
  2014.

\bibitem{cropping2012}
Alex Krizhevsky, Ilya Sutskever, and Geoffrey~E Hinton.
\newblock Imagenet classification with deep convolutional neural networks.
\newblock In {\em Advances in neural information processing systems}, pages
  1097--1105, 2012.

\bibitem{erasing}
Zhun Zhong, Liang Zheng, Guoliang Kang, Shaozi Li, and Yi~Yang.
\newblock Random erasing data augmentation.
\newblock In {\em Proceedings of the AAAI Conference on Artificial
  Intelligence}, volume~34, pages 13001--13008, 2020.

\bibitem{mixup}
Hongyi Zhang, Moustapha Cisse, Yann~N. Dauphin, and David Lopez-Paz.
\newblock mixup: Beyond empirical risk minimization.
\newblock In {\em International Conference on Learning Representations}, 2018.

\bibitem{scattering}
Edouard Oyallon, Sergey Zagoruyko, Gabriel Huang, Nikos Komodakis, Simon
  Lacoste-Julien, Matthew Blaschko, and Eugene Belilovsky.
\newblock Scattering networks for hybrid representation learning.
\newblock {\em IEEE transactions on pattern analysis and machine intelligence},
  41(9):2208--2221, 2018.

\bibitem{chen2020simple}
Ting Chen, Simon Kornblith, Mohammad Norouzi, and Geoffrey Hinton.
\newblock A simple framework for contrastive learning of visual
  representations.
\newblock In {\em International conference on machine learning}, pages
  1597--1607. PMLR, 2020.

\bibitem{instahide}
Yangsibo Huang, Zhao Song, Kai Li, and Sanjeev Arora.
\newblock Instahide: Instance-hiding schemes for private distributed learning.
\newblock In {\em International Conference on Machine Learning}, pages
  4507--4518. PMLR, 2020.

\bibitem{hansong_ECCV}
Zhijian Liu, Zhanghao Wu, Chuang Gan, Ligeng Zhu, and Song Han.
\newblock Datamix: Efficient privacy-preserving edge-cloud inference.
\newblock In {\em Computer Vision -- ECCV 2020}, pages 578--595. Springer
  International Publishing, 2020.

\bibitem{lei2017non}
Lihua Lei, Cheng Ju, Jianbo Chen, and Michael~I Jordan.
\newblock Non-convex finite-sum optimization via scsg methods.
\newblock {\em Advances in Neural Information Processing Systems}, 30, 2017.

\bibitem{wainwright2019high}
Martin~J Wainwright.
\newblock {\em High-dimensional statistics: A non-asymptotic viewpoint},
  volume~48.
\newblock Cambridge University Press, 2019.

\bibitem{dwork2014algorithmic}
Cynthia Dwork, Aaron Roth, et~al.
\newblock The algorithmic foundations of differential privacy.
\newblock {\em Found. Trends Theor. Comput. Sci.}, 9(3-4):211--407, 2014.

\end{thebibliography}
}

\appendices

\section{Proof of Theorem 3.1} \label{app:pr_convergence}
\noindent For a $\beta$-smooth function $F(w)$, we have the following fact \cite{lei2017non} that for any $w$ and $w'$,
{\small{
\begin{equation}
    \label{smooth_general}
   -\frac{\beta}{2} \|w-w' \|^2   \leq F(w) - F(w') - \langle \nabla F(w'), w-w' \rangle \leq \frac{\beta}{2} \|w-w' \|^2.
\end{equation}
}}\noindent
Equation (\ref{smooth_general}) will be constantly applied in the following proof. 
It is noted that $\mathbb{E}[G_k]=nq\nabla F(w_k)$ and we use $\eta'$ to denote $\eta \cdot (nq)$ in the following. 
Thus, $\mathbb{E}[\eta G_k] = \eta' \nabla F(w_k)$. 

For each $k \in [0:T-1]$, conditional on $w_{k}$ and $w_{k-1}$, with the updating rule defined in Algorithm \ref{alg: ModelMix}, we have
{\small{
\begin{equation}
\label{GD-convergence}
\begin{aligned}
& 
\mathbb{E}\big[ \|w_{k+1} - w^*\|^2\big]  \\
= & 
\mathbb{E}\big[ \| {\alpha}_{k+1} w_{k} + (1-{\alpha}_{k+1}) w_{k-1}-w^* - \eta (G_{k} + \Delta_{k+1}) + e_{\tau_{k+1}} \|^2\big] \\
= & 
\mathbb{E}\big[\| \alpha_{k+1} (w_{k}-w^*) + ({1}-{\alpha}_{k+1}) (w_{k-1}-w^*)\|^2 \\
&\quad\quad\quad
- 2\eta' \langle \frac{w_{k}+w_{k-1}}{2}-w^*, \nabla F(w_{k})) \rangle \\
&\quad\quad\quad
+ \frac{\eta'^2}{n^2q^2}\big(\mathbb{E}[\|G_{k}\|^2] + \mathbb{E}[\|\Delta_{k+1}\|^2]\big)  + \mathbb{E}[\|e_{\tau_{k+1}}\|^2]. 
\end{aligned}
\end{equation}
}}\noindent
Here, $e_{\tau_{k+1}}$ represents the additional bias caused by the modification which ensures the per coordinate distance between $w_{k}$ and $w_{k-1}$ is at least $\tau_{k+1}$, whose mean is zero. Therefore, the variance $\mathbb{E}[\|e_{\tau_{k+1}}\|^2]$ of $e_{\tau_{k+1}}$ is bounded by $d\tau^2/12$.  
Furthermore, we have that for each coordinate
{\small{
\begin{equation*}
\begin{aligned}
    &
    \|{\alpha}_{k+1}(j) (w_{k}(j)-w^*(j)) + (1-{\alpha}_{k+1}(j)) (w_{k-1}(j)-w^*(j))\|^2\\
    \leq 
    &
   \alpha_{k+1}(j)\| w_{k}(j)-w^*(j)\|^2 + (1-{\alpha}_{k+1}(j)) \| w_{k-1}(j)-w^*(j)\|^2,
\end{aligned}
\end{equation*}
}}\noindent
and thus,
{\small{
\begin{equation*}
\begin{aligned}
\mathbb{E}[\| {\alpha}_{k+1} & (w_{k}-w^*) + ({1}-{\alpha}_{k+1}) (w_{k-1}-w^*)\|^2] \\
\leq 
&
  \frac{\|w_{k}-w^* \|^2 + \|w_{k-1}-w^* \|^2}{2}.
\end{aligned}
\end{equation*}
}}\noindent
In addition, due to the convexity and the smoothness assumption,
{\small{
\begin{equation}
\begin{aligned}
&
\langle  \frac{w_{k}+w_{k-1}}{2}-w^*,  \nabla F(w_{k}) \rangle \\
= & \langle w_{k}-w^* + \frac{w_{k-1}-w_{k}}{2} ,  \nabla F(w_{k}) - \nabla F(w^*) \rangle \\
\geq 
&
F(w_{k}) - F(w^*) + \langle \frac{w_{k-1}-w_{k}}{2} ,  \nabla F(w_{k}) - \nabla F(w^*) \rangle\\
\geq & \frac{F(w_{k})+F(w_{k-1})}{2} - F(w^*) - \frac{\beta}{4}\|w_{k}-w_{k-1}\|^2. 
\end{aligned}
\label{cross_bound}
\end{equation}
}}\noindent
In (\ref{cross_bound}), we use (\ref{smooth_general}) and the following fact that $\langle w-w', \nabla F(w)-\nabla F(w') \rangle \geq F(w)-F(w')$ for any $w, w'$ and a convex function $F(\cdot)$. 
With the Lipschitz assumption, we know $\|G_k\|^2 \leq n^2L^2$.
Therefore,
{\small{
\begin{equation}
\begin{aligned}
    & \mathbb{E}\big[\frac{F(w_{k})+F(w_{k-1})}{2} - F(w^*) \big] \\
    \leq 
    &
    \frac{1}{2\eta'} \big( \frac{\|w_{k}-w^*\|^2+\|w_{k-1}-w^*\|^2}{2} - \|w_{k+1}-w^*\|^2 + \frac{d\tau_{k+1}^2}{12} \big) \\
    &
    + \frac{\eta'}{2n^2q^2} (n^2L^2 +  \mathbb{E}[\|\Delta_{k+1}\|^2]) + \frac{\beta}{4}\|w_{k}-w_{k-1}\|^2.
\end{aligned}
\label{GD-convergence-v3}
\end{equation}
}}\noindent
Summing up both sides of Equation (\ref{GD-convergence-v3}) from $k=0,1,...,T-1$, we take expectation across $w_{[-1:T]}$ and have 

{\small{
\begin{equation*}
\label{GD-convergence-3}
\begin{aligned}
    & 
    2 \mathbb{E} \big[ F(\frac{\sum_{k=1}^T w_{k-1}+w_{k-2}}{2T}) - F(w^*) \big] \\
    \leq 
    & 
    {2\over T}\cdot  \sum_{k=1}^T  \mathbb{E}\big[\frac{F(w_{k-1})+F(w_{k-2})}{2} - F(w^*) \big] \\
    \leq
    &
    \eta'^{-1} \big( \frac{\|w_{-1}-w^*\|^2+2\|w_{0}-w^*\|^2}{2T} +  \frac{\sum_{k=0}^{T-1}d\tau^2_{k+1}}{12T} \big) + \frac{\eta'L^2}{q^2} \\
    & + \frac{\eta'}{n^2q^2} ( \mathbb{E}[\sum_{k=0}^{T-1}\|\Delta_{k+1}\|^2/T]) + \frac{\beta\mathbb{E}[\sum_{k=0}^{T-1}\|w_{k}-w_{k-1}\|^2]}{2T}.\\
\end{aligned}
\end{equation*}
}}\noindent
It is noted that the updating rule can be written as,
{\small{
\begin{equation}
    \begin{aligned} 
    w_{k+1} - w_{k} 
    & = -(1-\alpha_{k+1}) (w_{k}-w_{k-1}) - \eta(G_k + \Delta_{k+1}) \\
    & = (1-\alpha_{k+1})(1-\alpha_{k})(w_{k-1}-w_{k-2})+ \\
    & ~~~ (1-\alpha_{k+1})\eta(G_{k-1} + \Delta_{k}) - \eta(G_k + \Delta_{k+1})
    \end{aligned}
\end{equation}
}}\noindent
By the recursion and the independce of different $\alpha_{k}$, we have a closed-form upper bound of $\| w_{k+1} - w_{k} \|^2$ conditional on the initialization $\|w_{0}-w_{-1}\|^2$, where 
{\small{
\begin{equation}
\begin{aligned}
    \label{difference recursion}
    & ~~~~\mathbb{E}[\|w_{k+1} - w_{k}\|^2] \\
    &\leq(\frac{1}{3})^{k+1}\|w_{0}-w_{-1}\|^2  + \frac{11}{2}\eta^2(n^2L^2+\mathbb{E}[\|\Delta\|^2]) \\
    & ~~~~+ 2(k+1)(1/2)^{k+1}\eta\|w_{0}-w_{-1}\|(nL+\mathbb{E}[\|\Delta\|]).  
\end{aligned}
\end{equation}
}}\noindent
Here, in (\ref{difference recursion}), we use the following fact that
{\small{
\[
\mathbb{E}[\|\sum_{i=1}^{k} \prod_{j=1}^{i-1} \alpha_{j}\|^2] = \sum_{i=1}^k (\frac{1}{3})^i + 2\sum_{i=1}^k \sum_{j=1}^{i} (\frac{1}{2})^j(\frac{1}{3})^{i-j} \leq \frac{3}{2} + 4 = \frac{11}{2},
\]
}}\noindent
Now putting all the above together, we have that $\mathbb{E} \big[  F(\frac{\sum_{k=1}^T w_{k-1}+w_{k-2}}{2T}) - F(w^*) \big]$ is upper bounded by 
\[
\small{
    \begin{aligned} 
      &\frac{\|w_{-1}-w^*\|^2+2\|w_{0}-w^*\|^2 + \sum_{k=1}^Td\tau^2_{i}/12}{2\gamma \sqrt{T}} \\
      & + \frac{\gamma(L^2/q^2+\mathbb{E}[\|\Delta\|^2]/(n^2q^2))}{2\sqrt{T}} + \frac{\beta\big( \frac{11}{2}\eta^2(n^2L^2+\mathbb{E}[\|\Delta\|^2]) \big)}{T}\\
     & + \frac{\beta \big(\frac{3}{2}\|w_0 - w_{-1}\|^2 + 4\eta \|w_{0}-w_{-1}\|(nL+\mathbb{E}[\|\Delta\|])}{4T} \\
     & = \frac{\|w_{-1}-w^*\|^2+2\|w_{0}-w^*\|^2 + \sum_{k=1}^T d \tau^2_{i}/12}{2\gamma \sqrt{T}}\\
     & ~~~~+  \frac{3\beta\|w_0 - w_{-1}\|^2+ 11\gamma^2(\frac{L^2}{q^2}+\mathbb{E}[\frac{\|\Delta\|^2}{n^2q^2}])}{8T} \\ 
     & ~~~~+ \frac{\beta\gamma \|w_{0}-w_{-1}\|(\frac{L}{q}+\mathbb{E}[\frac{\|\Delta\|}{nq}])}{T^{3/2}}+ \frac{\gamma(L^2+\mathbb{E}[\|\Delta\|^2])}{2\sqrt{T}}.
    \end{aligned}}
\]

\section{Proof of Theorem 3.2} \label{app:pr_convergence_MM}
\noindent We first prove the following theorem, which analyzes the standard clipped DP-SGD without ModelMix. 
\begin{thm}[Convergence of Clipped DP-SGD]
\label{thm:dp_sgd}
Suppose the objective loss function $F(w)$ is $\beta$-smooth and satisfies Assumption \ref{assp:subexp}, then there exists some constant $\psi > 0$ such that when the clipping threshold $c$ satisfies
{\small{
\begin{equation}
    \label{c_req}
    c \geq \max \{4\kappa\log(10), -\psi\kappa\log(\kappa)\log(\frac{\sqrt{d \log(1/\delta)}}{n\epsilon })\},
\end{equation}
}}\noindent
then the convergence rate of per-sample clipped DP-SGD with an $(\epsilon, \delta)$-DP guarantee satisfies
{\small{
\begin{equation}
\begin{aligned}
\label{batch_main}
     \mathbb{E} \Big[ &\frac{\sum_{k=0}^{T-1}  \min\big\{ 9/20 \cdot  \|\nabla F(w_k)\|^2, c/20 \cdot \| \nabla F(w_k)\| \big \}}{T}\Big]  \\
     &\leq (\frac{v}{2} +\frac{5}{2})\cdot \frac{c\sqrt{\mathcal{R}_{F}\beta d \log(1/\delta)}}{n\epsilon},
\end{aligned}
\end{equation}
}}\noindent
where $\mathcal{R}_{F} = \sup_{w}F(w)- \inf_{w} F(w)$ and $v$ is some constant determined by noise Mechanism. 
\end{thm}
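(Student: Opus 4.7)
My plan is to follow a standard non-convex descent-lemma argument, but with the main twist being a careful treatment of the bias introduced by per-sample clipping under Assumption~\ref{assp:subexp}. Starting from $\beta$-smoothness I would write
\[
F(w_{k+1}) - F(w_k) \le \eta \langle \nabla F(w_k),\, -(G_k + \Delta_{k+1}) \rangle + \tfrac{\beta \eta^2}{2}\|G_k + \Delta_{k+1}\|^2,
\]
take expectation conditioned on $w_k$, and observe that $\mathbb{E}[G_k] = nq \cdot \bar{g}(w_k)$, where $\bar{g}(w_k) := \mathbb{E}_{(x,y)\sim\mathcal{D}}[\mathsf{CP}(\nabla f(w_k,x,y),c)]$ is the clipped-gradient expectation. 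The Gaussian noise contributes $\mathbb{E}[\|\Delta_{k+1}\|^2] = d\sigma^2$ where the DP accountant (standard subsampled Gaussian mechanism) lets me take $\sigma^2 = \Theta(c^2 T\log(1/\delta)/(n\epsilon)^2)$ to get $(\epsilon,\delta)$-DP, and the per-sample clipping gives $\|G_k\|^2 \le (nqc)^2$ in the worst case, which I would tighten by Bernstein-type bounds on the Poisson-subsampled sum.

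The heart of the proof is a lemma that quantifies how close $\bar{g}(w_k)$ is to $\nabla F(w_k)$ using the sub-exponential tail. Concretely, writing $\xi(w,x,y) := \nabla f(w,x,y) - \nabla F(w)$ and splitting on the clipping event $\mathcal{E} = \{\|\nabla f(w_k,x,y)\| > c\}$, I would bound
\[
\|\bar{g}(w_k) - \nabla F(w_k)\| \le \mathbb{E}[\|\xi\|\cdot \mathbf{1}_{\mathcal{E}}] + \mathbb{E}[\|\nabla F(w_k) - \mathsf{CP}(\nabla f,c)\|\cdot \mathbf{1}_{\mathcal{E}^c}^\perp],
\]
and use Assumption~\ref{assp:subexp} to control both terms via $\int_c^\infty e^{-t/\kappa}\,dt = \kappa e^{-c/\kappa}$. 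The two regimes in the $\min\{\cdot,\cdot\}$ on the left-hand side of~\eqref{batch_main} arise naturally: when $\|\nabla F(w_k)\| \le c/2$ a typical per-sample gradient lies within the clipping ball with overwhelming probability (given the lower bound on $c$ in~\eqref{c_req}), so $\langle \nabla F(w_k), \bar{g}(w_k)\rangle \ge \tfrac{9}{10}\|\nabla F(w_k)\|^2$; when $\|\nabla F(w_k)\| > c/2$, clipping prevents a quadratic lower bound, and I instead show that $\bar{g}(w_k)$ still points sufficiently into the descent direction, giving $\langle \nabla F(w_k), \bar{g}(w_k)/\|\bar{g}(w_k)\|\rangle \ge \tfrac{c}{10}$ in the sense that $\langle \nabla F(w_k),\bar{g}(w_k)\rangle \ge \tfrac{c}{10}\|\nabla F(w_k)\|$. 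The precise threshold $c \ge 4\kappa\log(10)$ is exactly what makes the tail mass $\le 1/10$, and the second branch of~\eqref{c_req} absorbs higher-order error terms coming from the $\beta\eta^2\|G_k\|^2/2$ quadratic piece in the descent inequality.

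Combining the lower bound on $\langle\nabla F(w_k),\bar{g}(w_k)\rangle$ with the descent lemma yields
\[
\mathbb{E}[F(w_{k+1}) - F(w_k)] \le -\eta nq\cdot\min\!\big\{\tfrac{9}{10}\|\nabla F(w_k)\|^2,\tfrac{c}{10}\|\nabla F(w_k)\|\big\} + \tfrac{\beta\eta^2}{2}\big(v_G + d\sigma^2\big),
\]
where $v_G$ absorbs $\mathbb{E}[\|G_k\|^2]$; the constants $9/20$ and $c/20$ in~\eqref{batch_main} come from halving these factors after moving the quadratic-noise term to the right. Telescoping from $k=0$ to $T-1$ gives $F(w_0)-\mathbb{E}[F(w_T)] \le \mathcal{R}_F$, and choosing $\eta = \Theta\!\big(\sqrt{\mathcal{R}_F/(T\beta d\sigma^2)}\big)$ optimally balances the descent gain against the noise term, yielding the $c\sqrt{\mathcal{R}_F \beta d\log(1/\delta)}/(n\epsilon)$ rate after substituting the DP noise scale.

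\textbf{Main obstacle.} The routine part is the descent lemma and step-size tuning; the delicate part is the case-split lemma bounding the clipped-gradient bias. In particular, establishing that $\bar{g}(w_k)$ still has a descent-aligned component of magnitude $\Theta(c)$ when $\|\nabla F(w_k)\|$ is much larger than $c$ requires a directional analysis rather than a norm bound: one must argue, using the sub-exponential concentration of $\xi$ around $0$, that clipping shrinks the component along $\nabla F(w_k)$ only by a controllable multiplicative factor and the orthogonal perturbation averages to something small. Pinning down the universal constant $\psi$ in~\eqref{c_req} so that this argument is uniform in the regime $\|\nabla F\| \gg c$ is where I expect the proof to require the most care, and is likely what forces the logarithmic dependence of $c$ on $\sqrt{d\log(1/\delta)}/(n\epsilon)$.
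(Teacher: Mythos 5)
Your plan follows essentially the same route as the paper's proof: a smoothness descent step, a key lemma lower-bounding $\mathbb{E}[\langle\nabla F(w_k),\mathsf{CP}(g_k,c)\rangle]$ by $\min\{\Theta(\|\nabla F(w_k)\|^2),\Theta(c\|\nabla F(w_k)\|)\}$ via a case split on the gradient magnitude and the sub-exponential tail (with $c\ge 4\kappa\log 10$ exactly making the tail mass at most $1/10$, as you say), followed by telescoping and tuning $\eta,T$ against the Gaussian DP noise scale. One minor correction to your bookkeeping: in the paper the second branch of \eqref{c_req} is imposed so that the additive clipping-bias constant from the small-gradient case, roughly $\big((c/4+\kappa)e^{-c/(4\kappa)}/(1-e^{-c/(4\kappa)})\big)^2$, is driven down to the order of the final rate $c\sqrt{\mathcal{R}_{F}\beta d\log(1/\delta)}/(n\epsilon)$ (the quadratic $\beta\eta^2$ noise term is handled purely by the step-size choice), and the constants $9/20$ and $c/20$ come from taking $c=4c_0$ with tail mass $1/10$ and absorbing that bias via $ax^2-bx\ge (a/2)x^2-(b/a)^2$, rather than from halving after moving the noise term to the right.
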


\begin{proof}
At the $k$-th iteration, based on the smoothness assumption, we have 
{\small{
\begin{equation}\label{equ:smooth}
    F(w_{k+1}) \leq F(w_{k}) + \langle \nabla F(w_{k}), w_{k+1}-w_{k}\rangle + {\beta\over 2} \cdot \|w_{k+1}-w_{k} \|^2.   
\end{equation}
}}\noindent
We consider the following equivalent updating rule, where the step size $\eta$ is scaled by $1/(nq)$ in  (\ref{dpsgd_with_cliping}), 
{\small{
\[
    w_{k+1} - w_{k} = \eta \cdot \Big( \frac{1}{nq} \cdot (\sum_{j=1}^{B_k} \mathsf{CP}(g^j_k, c) + \Delta_{k+1}) \Big).
\]
}}\noindent
Here $g^j_k = \nabla f(w_k, x_{(j)}, y_{(j)})$, where $(x_{(j)}, y_{(j)})$ is the $j$-th sample selected in the minibatch $S_k$. For simplicity, we use ${g}_{k}$ to denote the random variable 
{\small{
\[
g_k =  \nabla f(w_k, x, y),
\]
}}\noindent
where $(x, y)$ is randomly selected from the dataset $\mathcal{D}$. Thus, we have the following observation that
{\small{
$$ \mathbb{E}[\mathsf{CP}(g_k, c)] = \mathbb{E}[\frac{1}{nq} \cdot \sum_{j=1}^{B_k} \mathsf{CP}(g^j_k, c)],$$
}}\noindent
due to the i.i.d. sampling of rate $q$.  Thus, with expectation we have 
{\small{
\[
\mathbb{E}[w_{k + 1} - w_k] 
= \eta \Big( \frac{1}{B_k} \sum_{j=1}^{B_k} \mathbb{E}[\mathsf{CP}(g^j_k, c)] + \mathbb{E}[\Delta_{k+1}] \Big)
= \eta \mathbb{E}[\mathsf{CP}(g_k, c)],
\]
\[
\mathbb{E}[\|w_{k+1}-w_{k} \|^2]
\leq
\eta^2 \cdot (c^2/q^2 + \mathbb{E}[\|\Delta_{k+1}\|^2]/(n^2q^2)).
\]
}}\noindent
Taking this into Equation (\ref{equ:smooth}), we have
{\small{
\begin{equation}
\label{smooth_basic}
\begin{aligned}
     \mathbb{E}[F(w_{k+1})] 
    & \leq F(w_k) - \mathbb{E}[\langle \nabla F(w_{k}), w_{k + 1} - w_k \rangle] \\
    &   \quad + {\beta\over 2} \cdot \mathbb{E}[\|w_{k+1}-w_{k} \|^2] \\
    & \leq F(w_k) - \eta \cdot\mathbb{E}[\langle \nabla F(w_{k}), \mathsf{CP}(g_{k}, c) \rangle] \\
    &   \quad + {\beta \eta^2\over 2} \cdot (c^2/q^2 + \mathbb{E}[\|\Delta_{k+1}\|^2/(n^2q^2)]).
\end{aligned}
\end{equation}
}}\noindent
Let  $\gamma_k = \min \{1, {c\over \|g_k\|}\}$, i.e., $\mathsf{CP}(g_{k}, c)= \gamma_k g_k$. 
From previous works \cite{CCS2016} on the standard Gaussian DP mechanism, we know that the variance of the noise $\Delta_k$ is bounded by $O(q^2dT\log(1/\delta) / \epsilon^2)$.
Therefore, there must exist some constant $v$ such that we can rewrite Equation (\ref{smooth_basic}) as
{\small{
\begin{equation}
\label{smooth_basic_1}
\begin{aligned}
     \mathbb{E}[F(w_{k+1})] 
     & \le F(w_k) - \eta \cdot \mathbb{E}[\langle \nabla F(w_{k}), \gamma_k g_{k} \rangle] \\
     & \quad + {\beta \eta^2\over 2} \cdot (\mathbb{E}[\|\gamma_k  g_k \|^2] + \mathbb{E}[\|\Delta_{k+1}]\|^2) \\
     & \leq  F(w_k) -  \eta\cdot \mathbb{E}[\langle \nabla F(w_{k}), \gamma_k g_{k} \rangle] \\
     & \quad + {\beta \eta^2\over 2}\cdot (c^2/q^2 + v\cdot \frac{dT\log(1/\delta)}{n^2\epsilon^2}).
\end{aligned}
\end{equation}
}}\noindent
In the following Lemma, we lower bound $\mathbb{E}[\langle \nabla F(w_{k}), \gamma_k  g_{k} \rangle]$. We use $\xi_k = g_k - \nabla F(x_k)$ to denote the stochastic gradient noise. 

\begin{lemma}
For any $p > 2$ and $c_0 > 0$ such that $c= p\cdot c_0$, it holds that
{\small{
$$\mathbb{E}[\langle \nabla F(w_{k}), \gamma_k  g_{k} \rangle] \geq \min \{a_1\|\nabla F(x_k) \|^2 - a_2,  a_3\|\nabla F(x_k)\| \},$$
}}\noindent
where $a_1$, $a_2$ and $a_3$ are constants define as below
{\small{
\[
a_1 = \frac{(1-e^{-{c_0\over \kappa}})}{2}, 
a_2 = \Big(\frac{(c_0+\kappa)\cdot e^{-{c_0\over\kappa}}}{1-e^{-{c_0\over\kappa}}}\Big)^2,
\]
\[
a_3 = c\Big( (1-e^{-{c_0\over\kappa}})(1-\frac{1}{p-1}-\frac{1}{p-2})- e^{-{c_0\over\kappa}} \Big).
\]
}}\noindent
\end{lemma}

\begin{proof}
The proof contains two parts. 

(1). First, we consider the case when the gradient is small where $\|\nabla F(x_k)\| \leq (p-1)c_0$.
{\small{
\begin{equation}
\label{small_gradient}
\begin{aligned}
      & 
      \mathbb{E}[\langle \nabla F(w_{k}), \gamma_k  g_{k} \rangle] \\
      & = \mathbb{E}[\langle \nabla F(w_{k}), \gamma_k (\nabla F(w_{k})+\xi_k ) \rangle] \\
      & = \|\nabla F(w_{k})\|^2 \cdot \mathbb{E}[\gamma_k] + \mathbb{E}[\langle \nabla F(w_k) ,\xi_k\rangle \cdot (1-\gamma_k)] \\
      & \geq \|\nabla F(w_{k})\|^2 \Pr(\|g_k\| \leq c) - \| \nabla F(w_k)\|\mathbb{E}[\|\xi_k\|\cdot \bm{1}_{\|g_k\|>c}] \\
      & \geq \|\nabla F(w_{k})\|^2 \Pr(\|\xi_k\| \leq c_0) - \| \nabla F(w_k)\|\mathbb{E}[\|\xi_k\|\cdot \bm{1}_{\|g_k\|>c}]
\end{aligned}
\end{equation}
}}\noindent
Here, the second equality is because $\mathbb{E}[\langle \nabla F(w_k), \xi_k]=0$. In the third inequality, we use the following facts. 
It is not hard to see that 
{\small{
\[
\|\nabla F(w_{k})\|^2 \cdot \mathbb{E}[\gamma_k] \geq \|\nabla F(w_{k})\|^2 \cdot \Pr(\|g_k\|\leq c).
\]
}}\noindent
Since $\|F(w_k)\| \leq (p-1)c_0$ is assumed, we have the probability of clipping 
{\small{
\[
\Pr(\|g_k\|\leq c) \geq \Pr(\|\xi_k\|\leq c_0).
\]
}}\noindent
As for the second term, when $\|g_k\|\leq c$, i.e., no clipping is performed.
In this case, $\gamma_k = 1$ and we have
{\small{
\[
\mathbb{E}[\langle \nabla F(w_k) ,\xi_k\rangle \cdot (1-\gamma_k) \cdot \bm{1}_{\|g_k\|\leq c}] = 0.
\]
}}\noindent
Thus, we only need to consider $\mathbb{E}[\langle \nabla F(w_k) ,\xi_k\rangle \cdot (1-\gamma_k) \cdot \bm{1}_{\|g_k\|\leq c}]$ when clipping is performed. 
By Cauchy-Schwartz inequality and $1-\gamma_k \leq 1$, we have the bound in Equation (\ref{small_gradient}). Now, we turn to upper bound $\mathbb{E}[\|\xi_k\|\cdot \bm{1}_{\|g_k\|>c}] $. By the {\em Integrated Tail Probability Expectation Formula}, we can derive the following variant,
{\small{
\begin{equation*}
    \begin{aligned}
    \mathbb{E}[\|\xi_k\|\bm{1}_{\|g_k\|>c}] 
    & \leq \mathbb{E}[\|\xi_k\|\bm{1}_{\|\xi_k\|>c_0}] \\
    & = c_0(1-\Pr(\|\xi_k\|<c_0)) + \int_{c_0}^{+\infty} \Pr(\|\xi_k\|>t) dt  \\
    & \leq c_0 e^{-c_0/\kappa} + \int_{c_0}^{+\infty} e^{-t/\kappa} dt  = (c_0 + \kappa) e^{-c_0/\kappa}.
    \end{aligned}
\end{equation*}
}}\noindent
Putting things together, we have 
{\small{
\begin{equation*}
\begin{aligned}
   &
   \mathbb{E}[\langle \nabla F(w_{k}), \gamma_k  g_{k} \rangle] \\
   & \geq (1-e^{-c_0/\kappa})\|\nabla F(w_k)\|^2 -(c_0 + \kappa) e^{-c_0/\kappa} \|\nabla F(x_k)\| \\
   & \geq \frac{(1-e^{-c_0/\kappa})}{2}\|\nabla F(w_k)\|^2  - (\frac{(c_0+\kappa)e^{-c_0/\kappa}}{1-e^{-c_0/\kappa}})^2.
\end{aligned}
\end{equation*}
}}\noindent
Here, the second inequality we use the following fact: $ax^2-bx \geq (a/2) x^2 -(b/a)^2,$ for any positive constants $a$. 

(2). Second, we consider the case when the gradient is larger  where $\|\nabla F(x_k) \|>(p-1)c_0$.
{\small{
\begin{equation}
\label{large_gradient}
\small{
\begin{aligned}
      & 
      \mathbb{E}[\langle \nabla F(w_{k}), \gamma_k  g_{k} \rangle] \\
      = & 
      \|\nabla F(w_{k})\|^2 \cdot \mathbb{E}[\gamma_k] + \mathbb{E}[\gamma_k\langle \nabla F(w_k) ,\xi_k\rangle] \\
      \geq &
      \|\nabla F(w_{k})\|^2 \mathbb{E}[\gamma_k  \cdot \bm{1}_{\|\xi_k\|\leq c_0}] - \| \nabla F(w_k)\|\mathbb{E}[\gamma_k \|\xi_k\|\cdot \bm{1}_{\|\xi_k\|\leq c_0}] \\
      &
      + \mathbb{E}[ \langle \nabla F(w_k),\gamma_k g_k\rangle\cdot \bm{1}_{\|\xi_k\|>c_0}] \\
      \geq &
      c \|\nabla F(w_k)\|\Pr(\|\xi_k\| \leq c_0) \cdot \big ( \frac{\| \nabla F(w_k)\| }{\| \nabla F(w_k)\|+c_0} - \frac{c_0}{\|\nabla F(w_k)\|-c_0}  \big ) \\
      &
      + \mathbb{E}[ \langle \nabla F(w_k),\gamma_k g_k\rangle\cdot \bm{1}_{\|\xi_k\|>c_0}]. \\
\end{aligned}
}
\end{equation}
}}\noindent
Here, the second inequality is because the term
{\small{
\[
\big ( \frac{\| \nabla F(w_k)\|}{\|\nabla F(w_k)\| + \|\xi_k\|} - \frac{\|\xi_k \|}{\|\nabla F(w_k)\|-\|\xi_k\|}  \big )
\]
}}\noindent
reaches the minimal when  $\|x_{i_k}\|=c_0$ for $\|x_{i_k}\| \in [0,c_0]$.
On one hand, 
{\small{
$$  \frac{\| \nabla F(w_k)\| }{\| \nabla F(w_k)\|+c_0} - \frac{c_0}{\|\nabla F(w_k)\|-c_0}  \geq  \frac{(p-1)}{p} - \frac{1}{p-2}.$$
}}\noindent
Moreover, we have 
{\small{
$$ \mathbb{E}[ \langle \nabla F(w_k),\gamma_k g_k\rangle\cdot \bm{1}_{\|\xi_k\|>c_0}] \geq -c\| \nabla F(w_k)\| \Pr(\|\xi_k\|>c_0),$$
}}\noindent
since $\|\gamma_k g_k\|<c$.
Thus, with the assumption $\Pr(\|\xi_k\|> c_0) \leq e^{-c_0/\kappa}$, we have 
{\small{
\begin{equation*}
\begin{aligned}
   &\mathbb{E}[\langle \nabla F(w_{k}), \gamma_k  g_{k} \rangle]  \\
   \geq & 
   c\|\nabla F(w_{k})\|\big( (1-e^{-c_0/\kappa})(1-\frac{1}{p-1}-\frac{1}{p-2})- e^{-c_0/\kappa} \big).
\end{aligned}
\end{equation*}
}}\noindent
\end{proof}

Now, we pick $p=4$, and we have the following corollary, i.e.,
{\small{
\begin{equation*}
\mathbb{E}[\langle \nabla F(w_{k}), \gamma_k  g_{k} \rangle] + a_2 \geq \min \{a_1\|\nabla F(x_k) \|^2,  a_3\|\nabla F(x_k)\| \},
\end{equation*}
}}\noindent
where
{\small{
\[
a_1 = \frac{(1-e^{-c/(4\kappa)})}{2},
a_2 = (\frac{(c/4+\kappa)e^{-c/(4\kappa)}}{1-e^{-c/(4\kappa)}})^2
\]
}}\noindent
and
{\small{
\[
a_3 = c\big(1/6 -7/6 \cdot e^{-c/(4\kappa)}) \big).
\]
}}\noindent
Now, we return to Equation (\ref{smooth_basic_1}) and sum up across $k=0,1,...,T-1$, and we have
{\small{
\begin{equation}
\label{dp_tradeoff}
\begin{aligned}
    &
    \mathbb{E}[ \frac{\sum_{k=0}^{T-1} \min\big\{ a_1 \|\nabla F(w_k)\|^2, a_3\| \nabla F(w_k)\| \big \}}{T}] \\
    & \leq \frac{F(w_0)-F(w_T)}{T\eta} + \frac{\beta \eta}{2} \big( \frac{c^2}{q^2} + v\frac{dT\log(1/\delta)}{n^2 \epsilon^2} \big) + a_2 \\
    & = (\frac{v}{2} +\frac{3}{2})\frac{c\sqrt{\mathcal{R}_{F}\beta d \log(1/\delta)}}{n\epsilon q^2} + a_2. 
\end{aligned}
\end{equation}
}}\noindent
Here, we set $\eta = \frac{\sqrt{\mathcal{R}_{F}d \log(1/\delta)}}{n \epsilon (c/q) \sqrt{\beta}}$ and $T = \frac{(n\epsilon)^2}{d\log(1/\delta)}$. Finally, we want to pick $c$ to ensure $a_2$ is at most in the same order of $O(\frac{\sqrt{d\log(1/\delta)}}{n\epsilon})$, while $a_1$ and $a_3$ are positive. 
To this end, we select $c$ such that
{\small{
$$ c \geq  4\kappa\log(10), 
\frac{c}{2\kappa} - \log\frac{(c/4+\kappa)^2}{c} \geq 0.4 - \log(\frac{\sqrt{\mathcal{R}_{F}\beta d \log(1/\delta)}}{n\epsilon}), $$
}}\noindent
then the right hand of (\ref{dp_tradeoff}) is further bounded by $(\frac{v}{2} +\frac{5}{2})\frac{c\sqrt{\mathcal{R}_{F}\beta d \log(1/\delta)}}{n\epsilon}$ while $a_1 = 9/20$ and $a_3 = c/20.$
\end{proof}

We still consider the case where the step size $\eta$ is scaled by a factor $1/(nq)$ for simplicity. 
For each $k \in [1:T]$, from the updating rule in Algorithm \ref{alg: ModelMix}, we have that conditional on $w_{k}$ and $w_{k-1}$, 
{\small{
\begin{equation}
\label{GD-convergence-1}
\footnotesize{
\begin{aligned}
& \mathbb{E}\big[ F(w_{k+1}) \big]  \\
& \leq \mathbb{E}\big [F(w_{k}) + \langle \nabla F(w^{k}), w^{k+1}-w^{k}\rangle + \frac{\beta}{2}\|w^{k+1} - w^{k} \|^2 \big] \\
& = F(w_{k}) + \langle \nabla F(w_k), \frac{w_{k-1}-w_{k}}{2} - \eta \mathbb{E}[\mathsf{CP}(g_k, c)] \rangle \\
& + \frac{\beta}{2} \mathbb{E}\big[\|(1-\alpha_{k+1})(w_{k-1}-w_{k}) - \eta/(np)(G_k  + \Delta_{k+1}) + e_{\tau_{k+1}} \|^2 \big] \\
& \leq  F(w_{k}) + \frac{\langle \nabla F(w_{k}),w_{k-1}-w_{k}  \rangle }{2} -\eta \langle \nabla F(w_k), \mathbb{E}[\mathsf{CP}(g_k, c)]  \rangle \\ 
& \quad + \beta\big( \frac{\|w_{k}-w_{k-1} \|^2}{3} 
+ \eta^2(c^2/q^2 + \|\Delta_{k+1}\|^2/(n^2q^2))+\|e_{\tau_{k+1}}\|^2 ] \big) . 
\end{aligned}
}
\end{equation}
}}\noindent
Here, we use the fact that $\mathbb{E}(1-\alpha)^2 = 1/3$ for a random $\alpha \in (0,1)$ and the AM-GM inequality. Still, $e_{\tau_{k+1}}$ represents the additional bias caused by the modification which ensures the per coordinate distance between $w_{k}$ and $w_{k-1}$ is at least $\tau_{k+1}$. We have  that the variance $\mathbb{E}[\|e_{\tau_{k+1}}\|^2]$ of $e_{\tau}$ is bounded by $d\tau_{k+1}^2/12$. 

Now we apply (\ref{smooth_general}) again on the term $\frac{\langle \nabla F(w^{k}),w_{k}-w_{k-1}  \rangle }{2}$, we have that 
{\small{
\begin{equation}
   \frac{\langle \nabla F(w^{k}),w_{k-1}-w_{k}  \rangle }{2} \leq \frac{1}{2} \big ( F(w_{k-1})- F(w_{k}) + \frac{\beta}{2}\|w_{k} - w_{k-1}\|^2\big).
   \label{reverse_smooth}
\end{equation}
}}\noindent
Now, substitute (\ref{reverse_smooth}) back to (\ref{GD-convergence-1}), we have
{\small{
\begin{equation}
    \begin{aligned}
     & \mathbb{E}[F(w_{k+1})]\leq \frac{F(w_k)+F(w_{k-1})}{2}  -\eta \langle \nabla F(w_k), \mathbb{E}[\mathsf{CP}(g_k, c)]  \rangle \\ 
     & \quad + \beta\big(\frac{7 \| w_k-w_{k-1}\|^2 + \tau^2_{k+1}}{12} + \frac{\eta^2}{q^2} \cdot  (c^2+ \mathbb{E}[\|\Delta_{k+1}\|^2]/n^2) \big).
    \end{aligned}
\label{GD-convergence-2}
\end{equation}
}}\noindent
In the above inequality, we use the fact that $\|\mathsf{CP}(g^j_k,c)\| \leq c$ due to the clipping. We sum up both sides of (\ref{GD-convergence-2}) and have 
{\small{
\begin{equation}
    \begin{aligned}
        & \mathbb{E}\big[\frac{\eta \sum_{k=0}^{T-1} \langle \nabla F(w_k), \mathsf{CP}(g_k,c) \rangle}{T}\big]  \leq \frac{3\mathcal{R}_{F}}{2T}+\beta \eta^2(\frac{c^2}{q^2}+\frac{\mathbb{E}[\|\Delta\|^2}{(nq)^2})\\
        & + \frac{\beta\sum_{k=0}^{T-1}\big(
        d\tau^2_{k+1} + 7\|w_{k}-w_{k-1} \|^2 \big)}{12T}  
    \end{aligned}
    \label{MM_CSGD-1}
\end{equation}
}}\noindent
where $\mathcal{R}_{F} = \sup_{w}F(w)- \inf_{w} F(w)$. 
With a similar reasoning as  (\ref{difference recursion}) in Appendix \ref{app:pr_convergence}, we have that given $w_{0}$ and $w_{-1}$,
{\small{
\begin{equation}
\begin{aligned}
 \sum_{k=0}^{T-1} \| w_k-w_{k-1}\|^2 & \leq \frac{3}{2} \|w_0-w_{-1}\|^2 + 4\eta c\|w_0-w_{-1} \| \\
 & + \frac{11T}{2}\eta^2(c^2/q^2+\mathbb{E}[\|\Delta\|^2]/(nq)^2). 
\end{aligned}
   \label{additional_term_smooth}
\end{equation}
}}\noindent

The rest of the analysis for the term  $\mathbb{E}\big[\frac{\eta \sum_{k=0}^{T-1} \langle \nabla F(w_k), \mathsf{CP}(g_k,c) \rangle}{T}\big]$ is the same as the proof of Theorem \ref{thm:dp_sgd} where virtually we handle a function whose smooth parameter becomes $\frac{101}{12}\beta$. Therefore, we still select 
$\eta = \frac{\sqrt{\mathcal{R}_{F}d \log(1/\delta)}}{n \epsilon (c/q) \sqrt{\frac{101}{12}\beta}}$ and $T = \frac{(n\epsilon)^2}{d\log(1/\delta)}$, and substitute them into (\ref{additional_term_smooth}), and then the right hand of (\ref{batch_main}) in the 
ModelMix case becomes 
{\small{
\begin{equation}
\small{
    \begin{aligned}
    (\frac{v}{2} +\frac{5}{2})&\frac{c\sqrt{\mathcal{R}_{F}\frac{101}{12}\beta d \log(1/\delta)}}{n\epsilon} + \frac{28c\beta{d\log(1/\delta)}\|w_0-w_{-1} \| }{12 q(n\epsilon)^2} \\
    &  +  \frac{c d\log(1/\delta)\sqrt{\frac{101}{12}}\beta^{3/2}}{qn\epsilon\sqrt{\mathcal{R}_{F}}} \big(\frac{\sum_{k=1}^T d\tau^2_{k}}{12} + \frac{21\|w_0-w_{-1}\|^2}{24}\big) .
    \end{aligned}
}
\end{equation}
}}\noindent

\section{Proof of Theorem \ref{thm:RDP} and Corollary \ref{cor: l_infty}}
\label{app:proof_RDP}
\noindent For simplicity, we normalize $\bar{\tau} = \tau/\eta$. We consider the following equivalent aggregation model $\mathcal{M}$. Let $\mathcal{D}=(a_1,a_2,...,a_n)$ and $\mathcal{D}'=(a_1,a_2,...,a_{n-1})$ be two adjacent datasets. $\mathcal{M}(\mathcal{D})$ ($\mathcal{M}(\mathcal{D}')$) implements as follows. First, we apply i.i.d. sampling of rate $q$ to generate a subset $S$ from $\mathcal{D}$ $(\mathcal{D}')$ and output its sum $\Sigma(S) = \sum_{a_i \in S} a_i$ perturbed by a sum of Gaussian and independent ModelMix, i.e., $\mathcal{U}[-\frac{\bar{\tau}}{2}, \frac{\bar{\tau}}{2}]^d \times \mathcal{N}(\bm{0}, \sigma^2 \cdot \bm{I}_{d}),$ where we denote this kind of mixture distribution as $\mathcal{MG}^d_{\bar{\tau},\sigma}(\bm{0})$.  In the above model, $a_i = \mathsf{CP}(\nabla f(w,x_i,y_i), c)$ represents the individual gradient of each datapoint clipped by c for any $i$, where we model the distribution of DP-SGD with ModelMix procedure equivalently as 
{\small{
\begin{equation}
 \mathcal{U}^{d}[-\frac{\tau}{2},\frac{\tau}{2}] + \eta(\sum_{ a_i \in S} a_i + \Delta).
 \label{agg_model}
\end{equation}
}}\noindent

Now, we can rewrite the distributions of $\mathcal{M}(\mathcal{D})$ and $\mathcal{M}(\mathcal{D}')$ as follows,
{\small{
$$ \mathcal{M}(\mathcal{D}') = \sum_{S} p_S \mathcal{MG}^d_{\bar{\tau},\sigma}(\Sigma(S)),$$
}}\noindent
for any $S \subset \mathcal{D} \cap \mathcal{D}'$ and $P_S$ represents the probability that $S$ gets sampled from $\mathcal{D}'$ with i.i.d. sampling of rate $q$. Correspondingly, 
{\small{$$  \mathcal{M}(\mathcal{D}) = \sum_{S} p_S \big( (1-q)\mathcal{MG}^d_{\bar{\tau},\sigma}(\Sigma(S)) + q \mathcal{MG}^d_{\bar{\tau},\sigma}(\Sigma(S)+ a_n) \big). $$}}\noindent
Therefore, when we measure the $\alpha$-Rényi divergence, we have 
{\small{
\begin{equation}
\label{Rényi-shift}
\begin{aligned}
 & \mathsf{D}_{\alpha}(\mathcal{M}(\mathcal{D})\|\mathcal{M}(\mathcal{D}'))\\
 &\leq \max_{S} \mathsf{D}_{\alpha} \big(\mathcal{MG}^d_{\bar{\tau},\sigma}(\Sigma(S))\|  (1-q)\mathcal{MG}^d_{\bar{\tau},\sigma}(\Sigma(S)) + q \mathcal{MG}^d_{\bar{\tau},\sigma}(\Sigma(S, a_n) ) \big) \\
 & \leq \max_{v,\|v\|\leq c} \mathsf{D}_{\alpha} \big(\mathcal{MG}^d_{\bar{\tau},\sigma}(\bm{0})\|  (1-q)\mathcal{MG}^d_{\bar{\tau},\sigma}(\bm{0}) + q \mathcal{MG}^d_{\bar{\tau},\sigma}(v) \big).
\end{aligned}
\end{equation}
}}\noindent
In the first and the second inequality of (\ref{Rényi-shift}), we use the quasi-convexity and translation invariance properties of Rényi divergence, respectively. Thus, in the second inequality we subtract $\Sigma(S)$ on both side and we know $\|v = \Sigma(S \cup a_n) - \Sigma(S) = a_n\|$ is upper bounded by $c$. 

To proceed, we first use the result (Theorem 5) in \cite{mironov2019r}, which suggests that $\mathsf{D}_{\alpha}(\mathcal{M}(\mathcal{D})\|\mathcal{M}(\mathcal{D}')) \geq \mathsf{D}_{\alpha}(\mathcal{M}(\mathcal{D}')\|\mathcal{M}(\mathcal{D}))$ and it suffices to consider $\mathsf{D}_{\alpha}(\mathcal{M}(\mathcal{D})\|\mathcal{M}(\mathcal{D}'))$ to derive the RDP bound in the following. It is noted that  

\[
\small{
\begin{aligned}
& (1-\alpha)\mathsf{D}_{\alpha}(\mathcal{M}(\mathcal{D})\|\mathcal{M}(\mathcal{D}')) \\
& \leq \log \max_{v} \mathbb{E}_{o \sim \mathcal{MG}^d_{\tau,\sigma}(\bm{0})} \big( (1-q) + q\frac{\mathcal{MG}^d_{\bar{\tau},\sigma}(v)(o)}{\mathcal{MG}^d_{\bar{\tau},\sigma}(\bm{0})(o)} \big)^{\alpha}\\
& = \log \max_{v} \big\{\sum_{k=0}^{\alpha} \tbinom{\alpha}{k}(1-q)^{\alpha-k}q^k \mathbb{E}_{o \sim \mathcal{MG}^d_{\bar{\tau},\sigma}(\bm{0})}[\big(\frac{\mathcal{MG}^d_{\bar{\tau},\sigma}(v)(o)}{\mathcal{MG}^d_{\bar{\tau},\sigma}(\bm{0})(o)}\big)^{k}] \big\}.
\end{aligned}
}
\]
Thus, in the following, it is equivalent to consider $\mathcal{A}_k =\mathbb{E}_{o \sim \mathcal{MG}^d_{\bar{\tau},\sigma}(\bm{0})}[\big(\frac{\mathcal{MG}^d_{\tau,\sigma}(v)(o)}{\mathcal{MG}^d_{\bar{\tau},\sigma}(\bm{0})(o)}\big)^{k}],$ which has a semi-closed form
{\small{
\begin{equation}\label{moment_mm}
\small{
\begin{aligned}
\mathcal{A}_k & = \max_{v} \sum_{j=1}^d \log \int  \frac{\big(\mathbb{P}(\mathcal{MG}^1_{\bar{\tau},\sigma}(v_j)={o}_j)\big)^{k}}{\big(\mathbb{P}(\mathcal{MG}^1_{\bar{\tau},\sigma}(0) ={o}_j)\big)^{k-1}} d o_j = \max_{v} \sum_{j=1}^d h(v_j). 
\end{aligned}
}
\end{equation}
}}\noindent
Here, $v_j$ and $o_j$ represents the $j$-th coordinated of $v$ and $o$, and we use the fact that the distribution of each coordinate of $\mathcal{MG}^d_{\bar{\tau},\sigma}(v)$ or $\mathcal{MG}^d_{\bar{\tau},\sigma}(\bm{0})$ is independent and thus its density function is a product of $\mathcal{MG}^1_{\bar{\tau},\sigma}(v_j)$ or $\mathcal{MG}^1_{\bar{\tau},\sigma}(0)$. To be specific, we have that $h(v_j)$ can be expressed as 
{\small{
\[
h(v_j) = \log \int_{o}\frac{\big(\int_{-\frac{\bar{\tau}}{2}}^{\frac{\bar{\tau}}{2}}\frac{e^{-(o-\alpha-v_j)^2/(2\sigma^2)}}{\bar{\tau}\sqrt{2\pi\sigma^2}}d\alpha \big)^{k}}{\big(\int_{-\frac{\bar{\tau}}{2}}^{\frac{\bar{\tau}}{2}}\frac{e^{-(o-a)^2/(2\sigma^2)}}{ \bar{\tau}\sqrt{2\pi\sigma^2}} d\alpha\big)^{k-1}} do.
\]
}}\noindent
Then, to characterize the worst-case divergence, it is equivalent to considering the following 
{\small{
\begin{equation}\label{opt_h}
 \max_{v} \sum_{j=1}^d h(v_j)
\text{~~ where ~~}
\sum_{j=1}^d v^2_j = \|v\|^2 \leq c^2.
\end{equation}
}}\noindent
Due to the symmetric property of $h(\cdot)$, i.e., $h(v_j) = h(-v_j)$, we only need to consider the case where $v_j \geq 0$. 
Define $g(x) = h(\sqrt{x})$, then solving Equation (\ref{opt_h}) is equivalent to solving
{\small{
\begin{equation}\label{opt_h2}
\max \sum_{j=1}^d g(\tilde{v}_j)
\text{~~ where ~~}
\forall j, \tilde{v}_j \in [0, c] 
\text{~~ and ~~}
\sum_{j=1}^d \tilde{v}_j \leq c^2.
\end{equation}
}}\noindent
With some simple calculation on the second-order derivative of $g(\tilde{v})$, $g(\tilde{v})$ is a convex function and therefore the maximal of $\sum_{j=1}^d g(\tilde{v}_j)$ over a simplex constraint must be achieved at the vertices. 
Therefore, the maximum is achieved when $\tilde{v}$ is some one-hot vector, and we only need to consider the one-dimensional case by selecting $v = (c, 0, 0,...,0)$. As a straightforward proof of Corollary \ref{cor: l_infty}, when we further ensure the $l_{\infty}$ norm sensitivity, then $\tilde{v}$ is within the intersection of an $l_1$ ball of radius $c^2$ and a hyper cube, where the length of each side is $(c^2/p)$. Then, $\max_{v} \sum_{j=1}^d h(v_j)$ is achieved when $v  = (c/\sqrt{p}, ... ,c/\sqrt{p}, 0,...,0)$ whose Hamming weight is $p$. The rest proof is a straightforward application of Theorem \ref{thm:composition-RDP} to concert RDP results to $(\epsilon, \delta)$ DP to get the composition bound claimed. 

\begin{remark}
\label{rmk:laplace}
When we apply Laplace mechanism to handle the case of $l_1$-norm sensitivity, i.e., $\sum_{j=1}^d |v_j| \leq c$, the corresponding $h(v_i)$ is still convex with respect to $v_i$. Therefore, the maximum of $\sum_{j=1}^d h(v_i)$ is still achieved when $v$ is a one-hot vector and we can reduce the multi-dimensional problem to the single dimension scenario. 
\end{remark}

\section{Proof of Theorem 3.3}\label{app:proof_amplification}
\noindent \textbf{Proof of Sketch}:  Following the arguments of Theorem \ref{thm:RDP}, still let $\mathcal{M}$ be the mechanism of one iteration of DP-SGD with ModelMix, and $w_1, w_2, ... , w_T$ be the outputs returned across $T$ iterations. To derive an $(\epsilon, \delta)$-DP bound of the cumulative privacy loss, it suffices to consider the high probability bound of the sum of point-wise privacy loss
{\small{
$$\sup_{\mathcal{D},\mathcal{D}',\text{Aux}} \sum_{k=1}^T \epsilon_{\mathcal{D},\mathcal{D}',\mathsf{Aux}}(w_k|w_{[1:k-1]}),$$
}}\noindent
for $\epsilon_{\mathcal{D},\mathcal{D}',\mathsf{Aux}}(w_k|w_{[1:k-1]}) = \log \frac{\mathbb{P}(\mathcal{M}(\mathcal{D}, \mathsf{Aux})= w_k|w_{[1:k-1]}) }{\mathbb{P}(\mathcal{M}(\mathcal{D'},\mathsf{Aux})= w_k|w_{[1:k-1]})}.$ In other words, to ensure $(\epsilon, \delta)$ DP of the cumulative privacy across $T$ iterations, it suffices to ensure
{\small{=
$$\Pr_{w_{[1:T]}}(\sup_{\mathcal{D},\mathcal{D}',\text{Aux}} \sum_{j=1}^T \epsilon_{\mathcal{D},\mathcal{D}',\mathsf{Aux}}(w_k|w_{[1:k-1]}) \leq \epsilon) \geq 1-\delta. $$
}}\noindent
In our application, we may apply Bernstein-Azuma inequality \cite{wainwright2019high} to derive a high probability bound of the sum of $\epsilon_{\mathcal{D},\mathcal{D}',\mathsf{Aux}}(w_k|w_{[1:k-1]})$ for $k=1,2,...,T$. Thus, we calculate the worst case expectation and the variance of $\epsilon_{\mathcal{D},\mathcal{D}',\mathsf{Aux}}(w_k|w_{[1:k-1]})$, respectively, and then apply Bernstein-Azuma inequality to  obtain the bound claimed. The details of calculation can be found below.

\subsection{Gaussian mechanism case}
\begin{proof}

Since we are interested in the asymptotic behavior of $\bar{\tau}$ and $q$ is a constant, it suffices to consider the case when $q=1$.
When $q$ = 1, we are essentially considering the full-batch GD and one can generalize the following analysis via the privacy amplification theorem by sampling \cite{dwork2014algorithmic}. Then, the two output distributions of (\ref{agg_model}) from $\mathcal{D}$ and $\mathcal{D}'$ we aim to compare are equivalent to  
{\small{
$$ \mathbb{P}(o) = \int_{-\frac{\bar{\tau}}{2}}^{\frac{\bar{\tau}}{2}}\frac{1}{\bar{\tau}\sqrt{2\pi\sigma^2}} e^{-\frac{(o-\alpha-c)^2}{2\sigma^2}} d \alpha,$$
}}\noindent
and 
{\small{
$$ \mathbb{P}'(o) = \int_{-\frac{\bar{\tau}}{2}}^{\frac{\bar{\tau}}{2}}\frac{1}{\bar{\tau}\sqrt{2\pi\sigma^2}} e^{-\frac{(o-\alpha)^2}{2\sigma^2}} d \alpha.$$
}}\noindent

We define the pointwise $\epsilon(o)$ loss at $o$ as
$ \epsilon(o) = \log \frac{\mathbb{P}(o)}{\mathbb{P}'(o)}.$\footnote{Due to the symmetry of both uniform and Gaussian distributions, it is sufficient to only consider the case by defining $\epsilon(o) = \log \frac{\mathbb{P}(o)}{\mathbb{P}'(o)}$.} Regarding $\epsilon(o)$, we have the following observation. First, 
{\small{\[
\mathbb{P}(o) = \frac{\Phi( \frac{o-c-\bar{\tau}/2}{\sigma})-\Phi(\frac{o-c+\bar{\tau}/2}{\sigma})}{\bar{\tau}},
\]
\[
\mathbb{P}'(o) = P(o + c) = \frac{\Phi(\frac{o-\bar{\tau}/2}{\sigma}) - \Phi( \frac{o+\bar{\tau}/2}{\sigma})}{\bar{\tau}},
\]
}}\noindent
where $\Phi(t) = \int_{t}^{\infty} e^{-t^2/2}/\sqrt{2\pi}.$ Regarding $\Phi(t)$, we have the following folk lemma that for any $t >0$,
{\small{
\begin{equation}\label{equ:errorF}
    \frac{\sqrt{2}e^{-t^2/2}}{\sqrt{\pi}(t + \sqrt{t^2+4})} \leq  \Phi(t) \leq \frac{\sqrt{2}e^{-t^2/2}}{\sqrt{\pi}(t + \sqrt{t^2+8/\pi})}.
\end{equation}
}}\noindent
For any $t < 0$, we can also bound $\Phi(t)$ since $\Phi(t) = 1 - \Phi(-t)$.

To derive the composition bound of $(\epsilon ,\delta)$, we provide the following lemma to capture the mean and variance of $\epsilon(o)$, respectively.  

\begin{lemma} \label{lem:boundEpsilon}
When $\bar{\tau} > \max\{ 2\sigma\log \bar{\tau}+ 2c, e^{3c / \sigma}, 2\}$ and $c / \sigma = \Theta(1)$, for any $\rho$ such that
{\small{
\[
\log\rho > {c\over \sigma} + \sqrt{2\log(\bar{\tau}\cdot \max\{c, 1\})},
\]
}}\noindent
we have
{\small{
\[
\begin{aligned}
\mathbb{E}_{o \sim P}~ \epsilon(o) 
=
O({c\log^2\rho \over \bar{\tau}}) 
+ O({\sigma\log\rho\over \bar{\tau}}),
\end{aligned}
\]
}}\noindent
and the variance $\mathcal{V}_0 = \text{Var}(\epsilon(o))$ satisfies
{\small{
\[
\mathcal{V}_0
= 
{1\over \bar{\tau}}\cdot O({c^4\log\rho\over \sigma^3} + {c^2\log^3\rho\over \sigma} + \sigma\log\rho).
\]
}}\noindent
\end{lemma}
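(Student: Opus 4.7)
The plan is to split $\mathbb{R}$ into a \emph{bulk} region $G$ where $\mathbb{P}(o)$ and $\mathbb{P}'(o)$ both equal $1/\bar{\tau}$ up to super-polynomially small Gaussian-tail corrections, and a \emph{boundary} region $B = \mathbb{R}\setminus G$ of Lebesgue width $O(\sigma\log\rho)$ near each endpoint of the two uniforms' supports. The condition $\bar{\tau} > 2\sigma\log\bar{\tau} + 2c$ guarantees that the four boundary windows at $\pm\bar{\tau}/2$ and $\pm\bar{\tau}/2 + c$ are disjoint, which keeps the analysis local, while the lower bound on $\log\rho$ forces the bulk error to be smaller than any inverse polynomial of $\bar{\tau}$ and thus absorbable in both target bounds.

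First I would quantify the bulk. Setting $G = [-\bar{\tau}/2 + c + \sigma\log\rho,\; \bar{\tau}/2 - \sigma\log\rho]$ and writing
\[
\bar{\tau}\mathbb{P}'(o) = 1 - T_-(o) - T_+(o), \quad T_-(o) = \Phi\!\bigl(\tfrac{\bar{\tau}/2 - o}{\sigma}\bigr),\ T_+(o) = \Phi\!\bigl(\tfrac{o + \bar{\tau}/2}{\sigma}\bigr),
\]
the upper estimate in (\ref{equ:errorF}) yields $T_-(o) + T_+(o) = O(e^{-(\log\rho)^2/2}/\log\rho)$ on $G$ and the same on $G-c$. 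Expanding $\epsilon(o)$ as a difference of two logarithms around $0$ then gives $|\epsilon(o)| = O(e^{-(\log\rho)^2/2})$ throughout $G$; by the hypothesis $\log\rho > c/\sigma + \sqrt{2\log(\bar{\tau}\max\{c,1\})}$ this is below $1/\bar{\tau}$ and is absorbed.

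Next I would treat $B$ by decomposing it into subintervals of width $O(\sigma\log\rho)$ around each of the four endpoints. In each subinterval I combine the identity $\mathbb{P}(o) = \mathbb{P}'(o-c)$ with both directions of (\ref{equ:errorF}) to derive the pointwise estimate
\[
|\epsilon(o)| = O\!\Bigl(\tfrac{c \cdot \min(|o-\bar{\tau}/2|,\,|o+\bar{\tau}/2|)}{\sigma^{2}} + \tfrac{c^{2}}{\sigma^{2}} + \log\tfrac{\bar{\tau}}{\sigma}\Bigr),
\]
where the three terms come, respectively, from the affine part of the Gaussian exponent in the ratio of truncated Gaussians, its quadratic correction, and the $1/t$ prefactor in the tail of $\Phi(t)$. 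Within a window of width $O(\sigma\log\rho)$ the first term is at most $O(c\log\rho/\sigma)$, and outside the windows the Gaussian tails make $\mathbb{P}(o)$ itself exponentially small.

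Finally I would combine. On $B$ one has $\mathbb{P}(o)\le 1/\bar{\tau}$ and the non-negligible part of $B$ has Lebesgue measure $O(\sigma\log\rho)$, so using $c/\sigma = \Theta(1)$ to absorb sub-dominant terms,
\[
\mathbb{E}_{o\sim\mathbb{P}}[\epsilon(o)] = \int_B \epsilon(o)\mathbb{P}(o)\,do + o(1/\bar{\tau}) = O\!\Bigl(\tfrac{c\log^{2}\rho + \sigma\log\rho}{\bar{\tau}}\Bigr),
\]
and squaring the pointwise bound before integrating yields
\[
\mathrm{Var}(\epsilon(o)) \le \mathbb{E}[\epsilon(o)^{2}] = \tfrac{1}{\bar{\tau}}\cdot O\!\Bigl(\tfrac{c^{4}\log\rho}{\sigma^{3}} + \tfrac{c^{2}\log^{3}\rho}{\sigma} + \sigma\log\rho\Bigr),
\]
the three terms arising from squaring the three summands in the pointwise bound and integrating against $\mathbb{P}(o)\le 1/\bar{\tau}$ over the $O(\sigma\log\rho)$-wide windows. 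The main obstacle is obtaining the sharp pointwise estimate inside $B$: a naive bound on $\log(\mathbb{P}(o)/\mathbb{P}'(o))$ grows like $c\bar{\tau}/\sigma^{2}$ and is much too loose. Achieving the sharper form requires exploiting the cancellation between the two $\Phi$-terms in $\bar{\tau}\mathbb{P}'(o)$ using \emph{both} directions of (\ref{equ:errorF}), handled case-by-case in each of the four subregions, which is the technical core of the argument.
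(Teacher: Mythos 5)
Your decomposition is essentially the paper's own argument: a bulk where both densities are $1/\bar{\tau}$ up to corrections of size $e^{-(\log\rho-c/\sigma)^2/2}$, boundary windows of width $O(\sigma\log\rho)$ around the support endpoints in which $|\epsilon(o)|$ is bounded by an affine-in-distance term $(o-\bar{\tau}/2)c/\sigma^2$ plus lower-order terms, and a far tail made negligible precisely by the hypothesis on $\rho$ (this is where the $\max\{c,1\}$ enters); the paper merely reduces first to $o>c/2$ (and uses a symmetry argument for the variance) instead of treating all four windows with absolute values, which is cosmetic. The genuine problem is the third summand $\log(\bar{\tau}/\sigma)$ in your pointwise estimate together with the claim that it is ``absorbed'' using $c/\sigma=\Theta(1)$. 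The hypotheses only give $\log\bar{\tau}<(\log\rho-c/\sigma)^2/2$, i.e.\ $\log(\bar{\tau}/\sigma)$ can be of order $\log^2\rho$, not $O(\log\rho)$. Carrying your bound through your own integration then yields an extra contribution $\sigma\log\rho\cdot\log(\bar{\tau}/\sigma)/\bar{\tau}$, which can be of order $c\log^3\rho/\bar{\tau}$ and is not $O(c\log^2\rho/\bar{\tau}+\sigma\log\rho/\bar{\tau})$; for the variance, squaring that summand gives $\sigma\log\rho\cdot\log^2(\bar{\tau}/\sigma)/\bar{\tau}$, which likewise exceeds every term in the claimed bound. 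So the final assembly step does not follow from the estimate you state.

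The term is in fact spurious, and removing it is exactly what the paper's window-by-window analysis does. Near an edge, say $o\in[\bar{\tau}/2-\sigma\log\rho,\bar{\tau}/2+2c]$, one does not use the Gaussian-tail asymptotics at all: $P'(o)\ge\big(\Phi(2c/\sigma)-\Phi((2c+\bar{\tau}/2)/\sigma)\big)/\bar{\tau}$ while $P(o)\le 1/\bar{\tau}$, so $|\epsilon(o)|=O(c^2/\sigma^2)=O(1)$ there. The $1/t$ prefactor of $\Phi$ only enters at distance $x\gtrsim\sigma$ beyond an edge, where its contribution to $\epsilon(o)$ is $\log\frac{x+c}{x}\le\log(1+c/\sigma)=O(1)$, not $\log(\bar{\tau}/\sigma)$. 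Replacing your third summand by $O(1+c^2/\sigma^2)$ repairs the argument: integrating $\big(c\,\mathrm{dist}/\sigma^2+c^2/\sigma^2+O(1)\big)$ and its square against $P(o)\le1/\bar{\tau}$ over windows of width $O(\sigma\log\rho)$ then reproduces exactly the stated mean bound $O(c\log^2\rho/\bar{\tau})+O(\sigma\log\rho/\bar{\tau})$ and variance bound $\frac{1}{\bar{\tau}}O\big(\frac{c^4\log\rho}{\sigma^3}+\frac{c^2\log^3\rho}{\sigma}+\sigma\log\rho\big)$, matching the paper's four-part computation.
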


\begin{proof}
Let us consider the mean first.
Note that 
{\small{
\[
\begin{aligned}
\mathbb{E}_{o \sim P}~ \epsilon(o)
& =
\int P(o) \log({P(o)\over P'(o)})\\
& \le
\int_{P(o) > P'(o)} P(o) \log({P(o)\over P'(o)})\\
& =
\int_{c/2}^{+\infty} P(o) \log({P(o)\over P'(o)}).
\end{aligned}
\]
}}\noindent
Therefore, we only consider the case when $o > c/2$.
Let $\rho$ denote a failure probability parameter that will be specified later.
We will divide the $[c/2, +\infty)$ integral into four parts 
(1) $[c/2, \bar{\tau}/2 - \sigma\log\rho)$, 
(2) $[\bar{\tau}/2 - \sigma\log\rho, \bar{\tau}/2 + 2c)$, 
(3) $[\bar{\tau}/2 + 2c, \bar{\tau}/2 + \sigma\log\rho)$ and
(4) $[\bar{\tau}/2 + \sigma\log\rho, +\infty)$
and compute the integral of these four parts correspondingly.
When $o\in [c/2, \bar{\tau}/2 - \sigma\log\rho]$,
{\small{
\[
\log{P(o)\over P'(o)}
= \log (1 + {P(o) - P'(o)\over P'(o)})
< {P(o) - P(o + c)\over P(o + c)}.
\]
}}\noindent
By definition, we have
{\small{
\[
P(o + c) 
= {1\over \bar{\tau}}\cdot (1 - \Phi({-o + \bar{\tau}/ 2 \over \sigma}) - \Phi({o + \bar{\tau}/ 2 \over \sigma}))
> {1 - 2\cdot\Phi(\log\rho) \over \bar{\tau}},
\]
\[
\begin{aligned}
P(o) - P(o + c)
=~ &~
{1\over \bar{\tau}}\cdot 
    (\Phi({-o - c + \bar{\tau}/ 2 \over \sigma}) - \Phi({-o + \bar{\tau}/ 2 \over \sigma})) \\
&  + {1\over \bar{\tau}}\cdot
    (\Phi({o + \bar{\tau}/ 2 \over \sigma}) - \Phi({o - c + \bar{\tau}/ 2 \over \sigma}))
     \\
<~ & {1\over \bar{\tau}}\cdot \Phi({-o - c + \bar{\tau}/ 2 \over \sigma})
< {1\over \bar{\tau}}\cdot \Phi(\log \rho - {c \over \sigma}).
\end{aligned}
\]
}}\noindent
Combining these with Equation (\ref{equ:errorF}), we have that for any $o\in [c/2, \bar{\tau}/2 - \sigma\log\rho)$,
{\small{
\[
\log{P(o)\over P'(o)}
< {\Phi(\log \rho- { c \over \sigma})
    \over 
    1 - 2\cdot\Phi(\log \rho)}
< {e^{-(\log \rho - c/\sigma)^2 / 2}\over \log \rho - c/\sigma}.
\]
}}\noindent
If we choose $\rho$ such that
{\small{
\begin{equation}
\label{equ:rhoReq0}
\log \rho > {c\over \sigma} + \sqrt{2\log \bar{\tau}},
\end{equation}
}}\noindent
then
{\small{
\[
e^{-(\log \rho - c/\sigma)^2 / 2} < {1\over \bar{\tau}}
\implies
\log{P(o)\over P'(o)} < {1\over \bar{\tau}\cdot \sqrt{2\log \bar{\tau}}}.
\]
}}\noindent
Thus, the integral of the first part $[c/2, \bar{\tau}/2 - \sigma\log\rho]$ is,
{\small{
\[
\int_{c/2}^{\bar{\tau}/2 - \sigma\log\rho} P(o)\log{P(o)\over P'(o)} < {1\over \bar{\tau}\cdot \sqrt{2\log \bar{\tau}}}.
\]
}}\noindent

Now, we consider the second part where $o\in [\tau/2 - \sigma\log\rho, \bar{\tau}/2 + 2c)$.
In this case, utilizing Equation (\ref{equ:errorF}), we can show that
{\small{
\[
\begin{aligned}
\log{P(o)\over P'(o)}
& <
\log{1 / \bar{\tau}\over P'(o)}
\le
\log{1\over \Phi({2c\over \sigma}) - \Phi({2c + \bar{\tau}/2 \over \sigma})}
=
O({c^2\over \sigma^2}).
\end{aligned}
\] 
}}\noindent
Therefore,
{\small{
\[
\begin{aligned}
\int_{\tau/2 - \sigma\log\rho}^{\bar{\tau}/2 + 2c} P(o)\log{P(o)\over P'(o)} d o
& =
O({c^2\over \sigma^2}) \cdot {2c + \sigma\log\rho\over \bar{\tau}} \\
& =
O({c^2 \log\rho\over \sigma\bar{\tau}}).
\end{aligned}
\]
}}\noindent
When $o \ge {\tau/2} + 2c$, we have
{\small{
\[
\begin{aligned}
{P(o)\over P'(o)}
& =
{\Phi({o - c - \bar{\tau}/2 \over \sigma}) - \Phi({o - c + \bar{\tau}/2 \over \sigma})
    \over
\Phi({o - \bar{\tau}/2 \over \sigma}) - \Phi({o + \bar{\tau}/2 \over \sigma})}
<
2 \cdot {\Phi({o - c - \bar{\tau}/2 \over \sigma})
    \over
    \Phi({o - \bar{\tau}/2 \over \sigma})} \\
& < 
4 \cdot e^{(2(o - \bar{\tau}/2)c - c^2) / (2\sigma^2)}.
\end{aligned}
\]
}}\noindent
Here, the constant factor two in the first inequality is derived based on Equation (\ref{equ:errorF}) and $\bar{\tau} > 1$.
When $\bar{\tau}$ is large, this factor will approximate 1.
We use 2 here as a simple relaxation.
Therefore,
{\small{
\[
\log {P(o)\over P'(o)}
<
{2(o - \bar{\tau}/2)c - c^2 \over 2\sigma^2} + \log(4).
\]
}}\noindent
For convenience, we will denote $z = \log(4) - c^2 / 2\sigma^2$ such that
{\small{
\[
\log {P(o)\over P'(o)}
<
{(o - \bar{\tau}/2)c \over \sigma^2} + z.
\]
}}\noindent
This allows us to compute the third and the fourth part as follows.
When $o\in [\bar{\tau}/2 + 2c, \bar{\tau}/2 + \sigma\log \rho)$,
{\small{
\[
\begin{aligned}
& \int_{\bar{\tau}/2 + 2c}^{\bar{\tau}/2 + \sigma\log \rho} P(o)\log{P(o)\over P'(o)} d o \\
&  <
\int_{\bar{\tau}/2 + 2c}^{\bar{\tau}/2 + \sigma\log \rho} P(o)\cdot ({(o - \bar{\tau}/2)c\over \sigma^2} + z)  d o \\
& <
({c\log\rho\over \sigma} + z ) \cdot \int_{\bar{\tau}/2 + 2c}^{\bar{\tau}/2 + \sigma\log\rho} P(o) d o  \\
& <
({c\log\rho\over \sigma} + z )\cdot (\sigma\log\rho - 2c)\cdot P(\bar{\tau}/2 + 2c) \\
& =
O({c\log^2\rho \over \bar{\tau}}) + O({\sigma\log\rho\over \bar{\tau}}).
\end{aligned}
\]
}}\noindent
Finally, when $o\in [\bar{\tau}/2 + \sigma\log \rho, +\infty)$,
{\small{
\[
\begin{aligned}
&
\int_{{\bar{\tau}\over 2} + \sigma\log \rho}^{+\infty} P(o)\log{P(o)\over P'(o)} d o \\
& <
\int_{{\bar{\tau}\over 2} + \sigma\log \rho}^{+\infty} P(o)\cdot ({(o - \bar{\tau}/2)c\over \sigma^2} + z) do \\
& = O(
{1\over \bar{\tau}}\int_{{\bar{\tau}\over 2} + \sigma\log \rho}^{+\infty} e^{-(o - \bar{\tau}/2 - c)^2/(2\sigma^2)}\cdot {(o - \bar{\tau}/2)c\over \sigma^2}do ) \\
& = O(c
e^{-(\sigma\log \rho - c)^2/(2\sigma^2)}).
\end{aligned}
\]
}}\noindent
In order for the integral of this part to be negligible, $\rho$ must be chosen such that
{\small{
\begin{equation}
\label{equ:rhoReq1}
ce^{-(\sigma\log \rho - c)^2/(2\sigma^2)} < {1\over \bar{\tau}}
\implies
\log\rho > {c\over \sigma} + \sqrt{2\log(\bar{\tau}c)}.
\end{equation}
}}\noindent
Combine this with the requirement in Equation (\ref{equ:rhoReq0}), we have the final requirement for $\rho$ that
{\small{
\begin{equation}
\label{equ:rhoReq}
\log\rho > {c\over \sigma} + \sqrt{2\log(\bar{\tau}\cdot \max\{c, 1\})}.
\end{equation}
}}\noindent

In conclusion, we can sum up the integral of the four parts to show that for any $\rho$ satisfying Equation (\ref{equ:rhoReq}),
{\small{
\[
\mathbb{E}_{o \sim P}~ \epsilon(o) 
= 
O({c^2 \log\rho\over \sigma\bar{\tau}}) 
+ O({c\log^2\rho \over \bar{\tau}}) 
+ O({\sigma\log\rho\over \bar{\tau}}) 
+ O({1\over \bar{\tau}}).
\]
}}\noindent
Since $\log\rho > c / \sigma$, we can simplify the above equation to
{\small{
\[
\mathbb{E}_{o \sim P}~ \epsilon(o) 
=  O({c\log^2\rho \over \bar{\tau}}) 
+ O({\sigma\log\rho\over \bar{\tau}}).
\]
}}\noindent
If we consider $c$ or $\sigma$ to be constant, then we can simply set $\rho = \Theta(\bar{\tau})$ to satisfy Equation (\ref{equ:rhoReq}) and show that
{\small{
\[
\mathbb{E}_{o \sim P}~ \epsilon(o) 
= O({c\log^2\bar{\tau} \over \bar{\tau}}) 
+ O({\sigma\log\bar{\tau}\over \bar{\tau}}).
\]
}}\noindent

Let us now consider the variance.
{\small{
\[
\begin{aligned}
\mathcal{V}_0 (\epsilon(o)) 
&
= \mathbb{E}_{o \sim P} \epsilon^2(o)
= \int_{-\infty}^{c/2} P(o)\epsilon^2(o) do + \int_{c/2}^{+\infty} P(o)\epsilon^2(o) do
\end{aligned}
\]
}}\noindent
Note that for any $o > c/2$,
{\small{
\[
\begin{aligned}
\epsilon^2(o) 
& 
= \log^2{P(o)\over P'(o)}
= \log^2{P'({o})\over P(o)} \\
& 
= \log^2{P'({-o})\over P(2c - o)}
= \log^2{P({c - o})\over P'(c - o)} \\
&
= \epsilon^2(c - o).
\end{aligned}
\]
}}\noindent
However, due to property of the Gaussian function, for any $o > c/2$,
{\small{
\[
P(o) = P(2c - o) > P(c-o).
\]
}}\noindent
Therefore,
{\small{
\[
\begin{aligned}
\int_{-\infty}^{c/2} P(o)\epsilon^2(o) do 
& =
\int_{c/2}^{+\infty} P(c-o)\epsilon^2(c-o) do \\
& =
\int_{c/2}^{+\infty} P(c-o)\epsilon^2(o) do \\
& < 
\int_{c/2}^{+\infty} P(o)\epsilon^2(o) do.
\end{aligned}
\]
}}\noindent
Since we have already bounded $\epsilon(o)$ when computing the mean for any $o > c/2$,
    we have
{\small{
\[
\begin{aligned}
\mathcal{V}_0 (\epsilon(o)) 
& < 2 \int_{c/2}^{+\infty} P(o)\epsilon^2(o) do \\
& = {1\over \bar{\tau}}\cdot O({c^4\log\rho\over \sigma^3} + {c^2\log^3\rho\over \sigma} + \sigma\log\rho).
\end{aligned}
\]
}}\noindent
\end{proof}

The proof in cemma \ref{lem:boundEpsilon} also implies that $\epsilon(o) < 1 / (\bar{\tau}\cdot \sqrt{2\log \bar{\tau}})$ in the first part $o\in [c/2, \bar{\tau}/2 - \sigma\log \rho)$.
Furthermore, the probability of $o$ being in the rest three parts is bounded by
{\small{
\[
\Pr[o \ge \bar{\tau}/2 - \sigma\log \rho]
<
{2\sigma\log\rho \over \bar{\tau}}.
\]
}}\noindent
We can choose $\rho$ to be the minimum that satisfies Equation (\ref{equ:rhoReq}),
which implies that
{\small{
\[
\Pr[o \ge \bar{\tau}/2 - \sigma\log \rho]
<
{2c + 2\sigma\sqrt{2\log (\bar{\tau} \cdot \max\{c, 1\})} \over \bar{\tau}}.
\]
}}\noindent
Therefore,
{\small{
\[
\begin{aligned}
\Pr\Big[\epsilon(o) \ge {1 \over \bar{\tau}\cdot \sqrt{2\log \bar{\tau}}}\Big]
& <
{2c + 2\sigma\sqrt{2\log (\bar{\tau} \cdot \max\{c, 1\})} \over \bar{\tau}}.
\end{aligned}
\]
}}\noindent
For simplicity, let us define 
{\small{
\[
\epsilon_0 = {1 \over \bar{\tau}\cdot \sqrt{2\log \bar{\tau}}}
\text{~~and~~}
\delta_0 = {2c + 2\sigma\sqrt{2\log (\bar{\tau} \cdot \max\{c, 1\})} \over \bar{\tau}}.
\]
}}\noindent
The conditional expectation and variance satisfy
{\small{
\[
\mathbb{E}_{o \sim P}~[\epsilon(o)~|~\epsilon(o) < \epsilon_0] = {\mathbb{E}_{o \sim P} \epsilon(o)\over 1 - \delta_0},
\]
\[
\mathbb{E}_{o \sim P}~[\epsilon^2(o)~|~\epsilon(o) < \epsilon_0] = {\mathcal{V}_0\over 1 - \delta_0}.
\]
}}\noindent

Now, we may apply the Bernstein inequality \cite{wainwright2019high} to give a concentration bound. 
Let $o_1, o_2, ... ,o_T$ be the intermediate updates from Algorithm \ref{alg: ModelMix} across $T$ iterations. Provided fresh randomness in each iteration, $\epsilon(o_k)$ forms an martingale. Conditional on that $\epsilon(o_k)$ that $|\epsilon(o_k)|\leq \epsilon_0$ for $k=1,2,...,T$, where we can apply Bernstein inequality, we have that with probability at least $1-(T\delta_0+ \tilde{\delta})$,
{\small{
\begin{equation}
\begin{aligned}
\label{berstein_gau}
& \sum_{k=1}^T \epsilon(o_k)  = O\big( \frac{T\mathbb{E}[\epsilon(o)] + \sqrt{\log(1/\tilde{\delta})(T\mathcal{V}_0+\epsilon_0)} }{1-\delta_0} \big)\\
& =  \tilde{O}\big(\frac{T(c+\sigma)}{\bar{\tau}} + \sqrt{\log(1/\tilde{\delta})}(\sqrt{\frac{c^4/\sigma^3 + c^2/\sigma + \sigma}{\bar{\tau}}} + \frac{1}{\sqrt{\tau}})\big),
\end{aligned}
\end{equation}
}}\noindent
for any $\tilde{\delta} \in (0,1)$. Finally, we scale the parameters back to the case of Algorithm \ref{alg: ModelMix}, where the sensitivity is $c/B$ and $\bar{\tau}=\tau/\eta$, we have the bound claimed. 
\end{proof}

\subsection{Laplace mechanism case}
\begin{proof}
In the rest of the proof, we turn to consider the Laplace scenario with bounded $l_1$-norm sensitivity $c$. Now we assume that the noise $\Delta$ is a Laplace distribution of parameter $\lambda$, i.e., $\mathbb{P}(\Delta = o) = \frac{\lambda}{2} e^{-\lambda|o|}$. With a similar reasoning and by Remark \ref{rmk:laplace}, it is equivalent to considering the following distribution, 
$$ o \sim \mathcal{U}[0,\bar{\tau}] + \text{cap}(\lambda).$$
With some calculation, we know its probability density function (pdf) can be expressed as follows,
{\small{
\begin{equation*}
P(o)=\left\{
\begin{aligned}
 \frac{e^{-\lambda |o|}(1-e^{-\lambda \bar{\tau}})}{2\bar{\tau}} & , & o \leq 0; \\
 \frac{2-e^{-\lambda o} - e^{-\lambda(\bar{\tau} - o)}}{2\bar{\tau}}& , & o \in (0,\bar{\tau}); \\
 \frac{e^{-\lambda |o-\bar{\tau}|}(1-e^{-\lambda \bar{\tau}})}{2\tau} & , & o \geq \bar{\tau}.
\end{aligned}
\right.
\end{equation*}
}}\noindent
Similarly, we use $P'(o)$ to denote the pdf of  $\mathcal{U}[c,c+\tau] + \text{cap}(\lambda)$. We will use $\epsilon_0 = {\lambda c}$ in the following.

\begin{lemma} When $\bar{\tau} > \max\{ 2\log \bar{\tau}/\lambda+c,1\}$, 
{\small{
\[
\begin{aligned}\mathbb{E}_{o \sim P}~ \epsilon(o) 
&\leq
\epsilon_0(e^{\epsilon_0}-1)(\frac{1-e^{-\lambda \bar{\tau}}}{\bar{\tau}\lambda} + \frac{2\log \bar{\bar{\tau}}/\lambda  +c}{\bar{\tau}}) \\
&~~~~
+ \frac{e^{\epsilon_0}-1}{2(\bar{\tau}-1)}(e^{\frac{\epsilon_0}{2(\bar{\tau}-1)}}-1)( 1-\frac{1-e^{-\lambda \bar{\tau}}}{\bar{\tau}\lambda}).
\end{aligned}
\]
}}\noindent
and the variance $\mathcal{V}_0 = \text{Var}(\epsilon(o))$ satisfies 
{\small{
$$
\mathcal{V}_0  \leq 
\epsilon^2_0(\frac{1-e^{-\lambda \bar{\tau}}}{\bar{\tau}\lambda} + \frac{2\log \bar{\tau}/\lambda +s}{\bar{\tau}}) + (\frac{e^{\epsilon_0}-1}{2(\bar{\tau}-1)})^2( 1-\frac{1-e^{-\lambda \bar{\tau}}}{\bar{\tau}\lambda}).
$$
}}\noindent
\end{lemma}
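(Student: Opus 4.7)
The plan is to bound $\epsilon(o)=\log(P(o)/P'(o))$ region by region, mirroring the Gaussian argument of Lemma~\ref{lem:boundEpsilon} but exploiting that $P'(o)=P(o-c)$ and that $P$ is given in piecewise closed form. I first partition $\mathbb{R}$ according to which branch of $P(\cdot)$ and $P(\cdot-c)$ is active, then refine inside the bulk: the pure left tail $o\le 0$ where a direct computation gives $\epsilon(o)=\epsilon_0$, the pure right tail $o\ge\bar\tau+c$ where $\epsilon(o)=-\epsilon_0$, and the ``middle'' region in between that will be split further. For the mean, I will use the symmetric identity
\[
\mathbb{E}_{o\sim P}\epsilon(o)\;\le\;\int(P-P')\,\epsilon(o)\,do,
\]
which follows from $D(P\|P')+D(P'\|P)=\int(P-P')\log(P/P')\,do$ upon dropping the nonnegative $D(P'\|P)$; this form is convenient because it pairs the pointwise bound on $\epsilon(o)$ with a pointwise bound on $|P-P'|$ that also behaves well under the piecewise structure.

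The technical core is a tight pointwise bound in the \emph{inner bulk} $o\in(c+\tfrac{\log\bar\tau}{\lambda},\bar\tau-\tfrac{\log\bar\tau}{\lambda})$, where both densities use the bulk branch and $e^{-\lambda o},e^{-\lambda(\bar\tau-o)}\le 1/\bar\tau$. Writing
\[
\frac{P(o)}{P'(o)}=1+\frac{(e^{\epsilon_0}-1)\,e^{-\lambda o}-(1-e^{-\epsilon_0})\,e^{-\lambda(\bar\tau-o)}}{2-e^{\epsilon_0}e^{-\lambda o}-e^{-\epsilon_0}e^{-\lambda(\bar\tau-o)}},
\]
the numerator is a signed difference of two nonnegative terms with $1-e^{-\epsilon_0}\le e^{\epsilon_0}-1$, hence bounded in magnitude by $(e^{\epsilon_0}-1)/\bar\tau$, while the denominator is at least $2(\bar\tau-1)/\bar\tau$ under the hypothesis $\bar\tau\ge 2\log\bar\tau/\lambda+c$. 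Applying $|\log(1+x)|\le|x|$ yields the inner-bulk estimate $|\epsilon(o)|\le (e^{\epsilon_0}-1)/(2(\bar\tau-1))$. Outside the inner bulk I fall back to the coarser $|\epsilon(o)|\le\epsilon_0$, which is immediate in the two pure tails and follows by monotonicity (comparing a bulk value against a shifted tail value, or vice versa) in the narrow transition and bulk-edge strips.

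To conclude, I tally $P$-masses of each region. The left pure tail, the right pure tail, and the right transition $(\bar\tau,\bar\tau+c)$ together have mass exactly $\frac{1-e^{-\lambda\bar\tau}}{\bar\tau\lambda}$ by direct integration of the exponential branches. The remaining outside-inner-bulk set $(0,c+\log\bar\tau/\lambda)\cup(\bar\tau-\log\bar\tau/\lambda,\bar\tau)$ has total length $c+2\log\bar\tau/\lambda$ and lies where $P(o)\le 1/\bar\tau$, so its mass is at most $(c+2\log\bar\tau/\lambda)/\bar\tau$. Plugging the pointwise bounds into $\int(P-P')\epsilon(o)\,do$ and using $|P-P'|\le(e^{\epsilon_0}-1)\max\{P,P'\}$ on the outside-inner-bulk pieces supplies the extra $(e^{\epsilon_0}-1)$ factor, producing the $\epsilon_0(e^{\epsilon_0}-1)$ coefficient on the tail/transition mass; the inner bulk contributes $\frac{e^{\epsilon_0}-1}{2(\bar\tau-1)}\bigl(e^{\epsilon_0/(2(\bar\tau-1))}-1\bigr)$ times its mass. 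Since $\mathcal{V}_0=\mathbb{E}_P[\epsilon(o)^2]$ decomposes as a sum over the same regions, squaring the pointwise bounds and reusing the mass estimates immediately yields the variance bound.

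The main obstacle is the inner-bulk pointwise estimate: obtaining the constant $1/(2(\bar\tau-1))$ rather than a factor-of-two looser $1/(\bar\tau-1)$ requires treating the numerator as a signed difference, combined with $1-e^{-\epsilon_0}\le e^{\epsilon_0}-1$ and the tight denominator lower bound, rather than applying the triangle inequality naively. Everything else is routine Laplace/uniform bookkeeping, structurally analogous to the Gaussian case in Appendix~\ref{app:proof_amplification}.
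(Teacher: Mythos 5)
Your proposal is correct and follows essentially the same route as the paper's own proof: the KL symmetrization $\mathbb{E}_P\,\epsilon(o)\le\int(P-P')\log(P/P')$, the same inner interval $\mathcal{I}=[c+\log\bar\tau/\lambda,\ \bar\tau-\log\bar\tau/\lambda]$ with the signed-difference numerator and denominator bound $2-2/\bar\tau$ giving $|\epsilon(o)|\le(e^{\epsilon_0}-1)/(2(\bar\tau-1))$, the coarse $|\epsilon(o)|\le\epsilon_0$ with the mass tally $\frac{1-e^{-\lambda\bar\tau}}{\bar\tau\lambda}+\frac{c+2\log\bar\tau/\lambda}{\bar\tau}$ outside, and the variance via $\mathbb{E}_P[\epsilon(o)^2]$ over the same regions. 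The only nit is that you should use $|P-P'|\le(e^{\epsilon_0}-1)\min\{P,P'\}\le(e^{\epsilon_0}-1)P$ (as the paper does) rather than $\max\{P,P'\}$, so that the factor multiplies the $P$-mass of each region; with that trivial fix the argument matches the paper.
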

\begin{proof}
We split the calculation of $\mathbb{E}_{o \sim P} \epsilon(o)$ into two parts, where
{\small{
\begin{equation*}
\begin{aligned}
    \mathbb{E}_{o \sim P}~ \epsilon(o) 
    & = \int_{\mathbb{R}} P(o)\log \frac{P(o)}{P'(o)} ~ do \\
    & \leq \int_{\mathbb{R}} P(o)\log \frac{P(o)}{P'(o)} ~ do + \int_{\mathbb{R}} P'(o)\log \frac{P'(o)}{P(o)} ~ do  \\
    & = \int_{\mathbb{R}} \big [ P(o)(\log \frac{P(o)}{P'(o)} \\
    & + \log \frac{P'(o)}{P(o)}) + (P'(o)-P(o))\log\frac{P'(o)}{P(o)} \big] do\\
    & = \int_{\mathbb{R}}  (P'(o)-P(o))\log\frac{P'(o)}{P(o)}  do\\
    & \leq \underline{\int_{\mathbb{R}/\mathcal{I}} \sup_{o \in \mathbb{R}/\mathcal{I}}|\log\frac{P'(o)}{P(o)}|\cdot|P'(o)-P(o)| do}_{(A)}\\
    & \quad
    + \underline{\int_{\mathcal{I}} \sup_{o \in \mathcal{I}}|\log\frac{P'(o)}{P(o)}|\cdot|P'(o)-P(o)| do}_{(B)}.
\end{aligned}
\end{equation*}
}}\noindent
Here, $\mathcal{I} = [c+\log \bar{\tau}/\lambda, \bar{\tau}-\log\bar{\tau}/\lambda]$. 
We first handle (A), where it is easy to verify that ModelMix does not change the worst case $\epsilon$ bound and $ \sup_{o \in \mathbb{R}/\mathcal{I}}|\log\frac{P'(o)}{P(o)}| \leq \epsilon_0 = \lambda c$. 
Thus, $|P'(o)-P(o)| \leq (e^{\epsilon_0}-1)P(o)$. 
On the other hand, we can upper bound the probability $\Pr(o \in \mathbb{R}/\mathcal{I})$ as follows. 
It can be computed that $\Pr(o \leq 0) = \Pr(o \geq \bar{\tau} ) = \frac{1-e^{-\lambda \bar{\tau}}}{2\bar{\tau}\lambda}.$ 
As for $o \in (0,\bar{\tau})$, $P(o)$ is indeed concentrated at its mean $\bar{\tau}/2$, and thus
{\small{
\begin{equation}
\begin{aligned}
\Pr(o \in \mathbb{R}/\mathcal{I}) & \leq 2\cdot \frac{1-e^{-\lambda\bar{\tau}}}{2\bar{\tau}\lambda} + (1-\frac{|\mathcal{I}|}{\bar{\tau}})\cdot (1-2\frac{1-e^{-\lambda \bar{\tau}}}{2\bar{\tau}\lambda}) \\
& \leq \frac{1-e^{-\lambda \bar{\tau}}}{\bar{\tau}\lambda} + \frac{2\log \bar{\tau}/\lambda +c}{\bar{\tau}}. 
\end{aligned}
\end{equation}
}}\noindent
Therefore, we have $(A) \leq \epsilon_0(e^{\epsilon_0}-1)(\frac{1-e^{-\lambda \bar{\tau}}}{\bar{\tau}\lambda_0} + \frac{2\log \bar{\tau}/\lambda +c}{\bar{\tau}})$. 
\\
\\
Now, we turn to bound (B). for sufficiently large 
{\small{
\[
\bar{\tau} > \max\Big\{ c+\frac{2\log \bar{\tau}}{\lambda}, 1\Big\},
\]
}}\noindent
we have for $o \in \mathcal{I}$, 
{\small{
\begin{equation}
    \label{middle}
    \small{
    \begin{aligned}
         |\epsilon(o)| 
         & = \big|\log\big(1 + \frac{e^{-\lambda(o-c)}+e^{-\lambda(\bar{\tau}+c-o)} - e^{-\lambda(o)}-e^{-\lambda(\bar{\tau}-o)}}{2-e^{-\lambda(o-c)}-e^{-\lambda(\bar{\tau}+c-o)}}\big)\big| \\
         & = \big|\log\big(1 + \frac{ e^{-\lambda o}(e^{\lambda c}-1) + e^{-\lambda(\bar{\tau}-o)}(e^{-\lambda c}-1)}{2-e^{-\lambda(o-c)}-e^{-\lambda(\bar{\tau}+c-o)}}\big) \big| \\
         & \leq \frac{(e^{\lambda c}-1)/\bar{\tau}}{2-2/\bar{\tau}} = \frac{e^{\epsilon_0}-1}{2\bar{\tau}-2}.
    \end{aligned}
    }
\end{equation}
}}\noindent
Since 
{\small{
\[
\sup_{o \in \mathcal{I}} \epsilon(o) \leq \frac{e^{\epsilon_0}-1}{2(\bar{\tau}-1)}
\]
}}\noindent
and 
{\small{
$$\Pr(o \in \mathcal{I}) \leq 1-\Pr(o \in (-\infty,0)\cup(\bar{\tau}, +\infty)) = 1-\frac{1-e^{-\lambda \bar{\tau}}}{\tau\lambda},$$ 
}}\noindent
we have 
{\small{
\[
\begin{aligned}\mathbb{E}_{o \sim P}~ \epsilon(o) 
&\leq
\epsilon_0(e^{\epsilon_0}-1)(\frac{1-e^{-\lambda \bar{\tau}}}{\bar{\tau}\lambda} + \frac{2\log \bar{\tau}/\lambda  +c}{\bar{\tau}}) \\
&~~~~
+ \frac{e^{\epsilon_0}-1}{2(\bar{\tau}-1)}(e^{\frac{\epsilon_0}{2(\bar{\tau}-1)}}-1)( 1-\frac{1-e^{-\lambda \bar{\tau}}}{\bar{\tau}\lambda}).
\end{aligned}
\]
}}\noindent
In the following, we set out to characterize the variance of $\epsilon(o)$. It is noted that 
{\small{
\[
\begin{aligned}
\text{Var}(\epsilon(o)) 
& = \mathbb{E}\big[(\epsilon(o)-\mathbb{E}[\epsilon(o)])^2\big] 
\leq \mathbb{E}[(\epsilon(o)-0)^2] \\
& = \mathbb{E}[\epsilon(o)^2\cdot (\bm{1}_{o \in \mathcal{I}} +\bm{1}_{o \in \mathbb{R}/\mathcal{I}}) ].
\end{aligned}
\]
}}\noindent
We use the upper bound $|\epsilon(o)| \leq \frac{e^{\epsilon_0}-1}{2(\bar{\tau}-1)}$ when $o \in \mathcal{I}$ and for the rest we simply bound $|\epsilon(o)|\leq \epsilon_0$, and we have 
{\small{
\[
\begin{aligned}
\mathcal{V}_0 = \text{Var}(\epsilon(o))
\leq 
& 
\epsilon^2_0(\frac{1-e^{-\lambda \bar{\tau}}}{\bar{\tau}\lambda} + \frac{2\log \bar{\tau}/\lambda +s}{\bar{\tau}}) \\
& + (\frac{e^{\epsilon_0}-1}{2(\bar{\tau}-1)})^2( 1-\frac{1-e^{-\lambda \bar{\tau}}}{\bar{\tau}\lambda}).
\end{aligned}
\]
}}\noindent
\end{proof}

Still, applying Bernstein inequality, to ensure an $\delta$ failure probability, we may select 
{\small{
\[
\begin{aligned}
t 
& = \sqrt{2} \cdot \sqrt{T\mathcal{V}_0\log(1/\delta)+\log(1/\delta)\epsilon^2_0/9} +\log(1/\delta)\epsilon^2_0/3 \\
& = \tilde{O}(\sqrt{T\log(1/\delta)} \cdot \big(\epsilon_0  \sqrt{\frac{1}{\bar{\tau}\lambda}+\frac{s}{\bar{\tau}}} + \frac{e^{\epsilon_0}-1}{\bar{\tau}} \big),
\end{aligned}
\]
}}\noindent
and correspondingly,
{\small{
\begin{equation} 
\begin{aligned}
\epsilon & = \tilde{O}\Big( \epsilon_0\cdot {T\over \bar{\tau}}\cdot(e^{\epsilon_0}-1) + (\epsilon_0 + \frac{e^{\epsilon_0}-1}{\sqrt{\bar{\tau}}}) \cdot \sqrt{{T\over \bar{\tau}}\cdot \log(1/\delta)} \Big)\\
& = \tilde{O}(\frac{\eta T\epsilon_0(e^{\epsilon_0}-1)}{\tau} + \epsilon_0 \sqrt{\frac{\eta T \log(1/\delta)}{\tau}}).
\end{aligned}
\end{equation}
}}\noindent
\end{proof}

\section{Experiments Setups}
\label{app:exp}
\noindent \textbf{Experiments in Section \ref{sec:shallow}} (The comparisons to \cite{DanICLR2020} where we use DP-SGD to privately train a CNN on ScatterNetwork features.) 
For CIFAR-10, we reproduce the results of \cite{DanICLR2020} by choosing sampling rate $q=8192/50000$, clipping threshold $c=0.1$ with stepsize $\eta=4$ for $700$ iterations.
For FMNIST, to reproduce the results of \cite{DanICLR2020}, we choose sampling rate $q=8192/60000$, clipping threshold $c=0.1$ with stepsize $\eta=4$ for $800$ iterations. 
When we apply ModelMix, for both CIFAR10 and FMNIST, we adopt the same sampling rate as they use above.
But we use a larger clipping threshold $c=1$ with $\eta=0.4$ and run for the same number of iterations. 
As for the selection of $\tau_k$, we select $\tau_k=0.05\eta$ for the first half epochs and $\tau_k=0.025\eta$ for the rest.

The CNN model that we use is the same as that applied in \cite{DanICLR2020}. The parameters can be found in Table 8 and Table 9 in \cite{DanICLR2020}.

\noindent \textbf{Experiments in Section \ref{sec:deep}} (The experiments on training Resnet20 with DP-SGD.) 
For CIFAR-10, in the application of regular DP-SGD with per-sample clipping, we select the sampling rate $q=1500/50000$, clipping threshold $c=20$, $\eta=0.1$, and run for $3500$ iterations. When we apply ModelMix, we select the same $q=1500/50000$, $c=20$, but a larger $\eta=0.15$ and run for $3500$ iterations. We set $\tau_k=0.15\eta$ uniformly. 

For SVHN, when we test regular DP-SGD, we select the sampling rate $q=1830/73260$, clipping threshold $c=20$, $\eta=0.1$, and run for $4000$ iterations. When we apply ModelMix, we select the same $q=1830/73260$, $c=20$, but a larger $\eta=0.15$ and run for $4000$ iterations. We set $\tau_k=0.15\eta$ uniformly.

\noindent \textbf{Experiments in Section \ref{sec:pub_data}} 
(The experiments on DP-SGD applications with further access to public data.) 
In the private transfer learning application, we reproduce the results of \cite{DanICLR2020} on CIFAR-10 by applying DP-SGD to learn a linear model on the features transformed by pretrained SimCLRv2 network. 
The parameters are selected as they suggest, where $q=1024/50000$, $c=0.1$ and $\eta=4$, and run for 2700 iterations. 
When we incorporate ModelMix, we adopt the same $q=1024/50000$, $c=0.1$, but a larger stepsize $\eta=5$. $\tau$ is set as $0.0075\eta, 0.005\eta, 0.0025\eta$ for the first, second, and third 1/3 of the total 1500 iterations, respectively. Due to the privacy amplification of ModelMix, we use a smaller noise scaled by 0.85 compared to their selection. Consequently, we achieve $(\epsilon=0.64,\delta=10^{-5})$ with the same $92.7\%$ accuracy. 

When comparing to the low-dimensional embedding method presented in \cite{yu_public2020}, we adopt $q=1000/50000$, $c_0=5$ for the embedded gradient and $c_1=2$ for the residual gradient, and $\eta=0.1$ for 10000 iterations, as suggested, to reproduce $73.2\%$ accuracy on CIFAR-10 with Resnet20 with a budget $(\epsilon=8,\delta=10^{-5})$ under Gaussian Mechanism. When we apply ModelMix, we select $q=2000/50000$, $\eta=0.2$, $p=25$, and with the same clipping thresholds as above and run for 5000 iterations. With the privacy amplification of ModelMix, we uniformly scale the noise added by $0.75$ and select $\tau=0.05$ uniformly for all iterations. We achieve an $74.2\%$ accuracy at a budget $(\epsilon =2.9, \delta=10^{-5})$. With the same setup,  we can also achieve an accuracy of $79.1\%$ at a budget $(\epsilon=6.1, \delta=10^{-5})$.

\end{document}